\definecolor{mydarkblue}{rgb}{0,0.08,0.45}
\definecolor{linkblue}{rgb}{0.027,0.262,0.78}
\definecolor{lightred}{rgb}{0.95,0.5,0.5}
\theoremstyle{plain}
\newtheorem{assumption}{Assumption}
\newtheorem{theorem}{Theorem}
\newtheorem{lemma}{Lemma}
\newcommand{\XX}{{\bm X}}
\newcommand{\bSigma}{{\bm \Sigma}}
\newcommand{\bv}{{\bm v}}
\newcommand{\bPi}{{\bm \Pi}}
\newcommand{\bu}{{\bm u}}
\newcommand{\bU}{{\bm U}}
\newcommand{\bW}{{\bm W}}
\newcommand{\bZ}{{\bm Z}}
\newcommand{\by}{{\bm y}}
\newcommand{\bchi}{{\bm \chi}}
\newcommand{\bepsilon}{{\bm \epsilon}}
\newcommand{\bbeta}{{\bm \beta}}
\newcommand{\bI}{{\bm I}}
\newcommand{\tX}{\tilde{\bm X}}
\newcommand{\bxzero}{{\bm x}_0}
\newcommand{\tSigma}{\tilde{\bm \Sigma}}
\newcommand{\diag}{\operatorname{diag}}
\newcommand{\Tr}{{\operatorname{Tr}}}
\newcommand{\Cov}{{\operatorname{Cov}}}
\newcommand{\tn}{{\tilde{n}}}
\newcommand{\EE}{\mathbb{E}}
\theoremstyle{plain}
\newtheorem{corollary}{Corollary}
\theoremstyle{definition}
\newtheorem{definition}{Definition}
\theoremstyle{remark}
\newtheorem{remark}{Remark}
\definecolor{lw}{RGB}{11,0,249}
\newif\ifwriteappendixtoc \writeappendixtocfalse % Define a switch, initially false
\let\original@addcontentsline\addcontentsline
\renewcommand{\addcontentsline}[3]{%
  % Check if the target file is 'toc_0
  \ifstrequal{#1}{toc}{%
    % If it is 'toc_0, check our switch
    \ifwriteappendixtoc
      % If switch is true (we are in appendix), call the original command
      \original@addcontentsline{#1}{#2}{#3}%
    \else
      % If switch is false (before appendix), do nothing for 'toc_0
    \fi
  }{%
    % If the target file is NOT 'toc_0 (e.g., 'lof', 'lot'), call the original command always
    \original@addcontentsline{#1}{#2}{#3}%
  }%
}
\pretocmd{\appendix}{\writeappendixtoctrue}{}{}
\title{Understanding and Enhancing Mask-Based Pretraining towards  Universal Representations}
\author{%
	Mingze Dong \\
	Yale University\\
	\texttt{mingze.dong@yale.edu} \\
      \And
  Leda Wang \\
  Yale University \\
  \texttt{leda.wang@yale.edu} \\
        \And
  Yuval Kluger \\
  Yale University \\
  \texttt{yuval.kluger@yale.edu} \\
}
\begin{document}

\maketitle

\begin{abstract}
Mask-based pretraining has become a cornerstone of modern large-scale models across language, vision, and recently biology. Despite its empirical success, its role and limits in learning data representations have been unclear. In this work, we show that the behavior of mask-based pretraining can be directly characterized by test risk in high-dimensional minimum-norm ("ridge-less") linear regression, without relying on further model specifications. Further analysis of linear models uncovers several novel aspects of mask-based pretraining.
The theoretical framework and its implications have been validated across diverse neural architectures (including MLPs, CNNs, and Transformers) applied to both vision and language tasks. Guided by our theory, we propose an embarrassingly simple yet overlooked pretraining scheme named \emph{Randomly Random Mask AutoEncoding} (\textbf{R$^2$MAE}), which enforces capturing multi-scale features from data and is able to outperform optimal fixed mask ratio settings in our linear model framework. We implement R$^2$MAE in vision, language,
DNA sequence, and single-cell models, where it consistently outperforms standard and more complicated masking schemes, leading to improvements for state-of-the-art models. Our code is available at \href{https://github.com/MingzeDong/r2mae}{this URL}.
\end{abstract}

\section{Introduction}

Mask-based pretraining has emerged as a unifying paradigm for self-supervised learning across natural language \cite{devlin2019bert, liu2019roberta,raffel2020exploring,yang2019xlnet}, vision \cite{dosovitskiy2020image,he2022masked, bao2021beit, xie2023data, shi2022adversarial, tian2025beyond, hu2022exploring, feichtenhofer2022masked, weers2023masked}, and biological domains \cite{ji2021dnabert, zhou2023dnabert,benegas2025dna, cui2024scgpt, theodoris2023transfer, dong2024scaling, schaar2024nicheformer,rosen2023universal,adduri2025predicting}. This approach is prevalent particularly for data that cannot be presented sequentially, such as images and tabular data \cite{hollmann2025accurate}. The representations learned through masked-based pretraining have consistently yielded state-of-the-art zero-shot and fine-tuning performances on diverse downstream tasks \cite{devlin2019bert,hollmann2025accurate,richter2024delineating,he2022masked,benegas2025dna}.

Despite the widespread success of masked autoencoding pretraining schemes, fundamental questions remain about why and how it helps in learning meaningful data representations. Several theoretical works \cite{zhang2022mask,cao2022understand,yue2023understanding,kong2023understanding} investigated its underlying mechanism using different frameworks, yet two critical questions on the qualitative role of masking persist:
\begin{itemize}
\item (\textbf{Universality across different contexts}) The scheme proves effective across various data domains, masking designs, and neural network architectures beyond transformers. This suggests its underlying mechanism is fundamental and architecture-agnostic.
\item (\textbf{Diversity across domains and tasks}) The optimal behavior of mask pretraining differs significantly across contexts. In language modeling, BERT employs a moderate masking ratio of 15\% \cite{devlin2019bert}, while in vision, surprisingly high masking ratios (75\%) can produce superior representations despite removing most of the visual content \cite{he2022masked}. Moreover, the optimal masking ratio varies even across different downstream tasks for a single model \cite{wettig2022should}.
\end{itemize}
%, highlighting the complex relationship between masking levels and representation quality
The performance curves of models with different masking ratios have been explicitly characterized in several works \cite{he2022masked,wettig2022should}, which serve as a foundation for several theoretical explanations \cite{zhang2022mask,kong2023understanding}. An intriguing phenomenon is the existence of a sweet-spot masking ratio that achieves optimal model performance. Additionally, several interesting quantitative behaviors of the performance curve, such as plateaus in the low-masking and near-optimal-masking regimes, appear in a number of cases \cite{he2022masked}.

To our knowledge, no previous work has proposed a theoretical framework general enough to address the aforementioned qualitative challenges, nor have they successfully explained these quantitative behaviors of mask pretraining schemes. Moreover, prior works do not explain the effect of model size in determining the optimal mask ratio \cite{wettig2022should}. In this work, our main contributions are the following:
\begin{enumerate}
    \item We introduce a novel theoretical framework based on a high-dimensional linear regression setting tailored to mask prediction. We demonstrate that the test risk of this considered model recapitulates both qualitative and quantitative behaviors of diverse pretrained neural networks with respect to masking ratio in large-scale vision and language models. 
    
    \item We derive explicit expressions for the test risk under several cases using random matrix theory \cite{bloemendal2014isotropic,knowles2017anisotropic,misiakiewicz2024non} with novel theoretical contributions. Our results suggest that previous observations on mask pretraining behaviors can be explained by solely bias-variance decomposition.
    %We demonstrate that, by setting a small number of appropriate parameters in our linear regression setting, we can closely reproduce almost all performance curves observed in empirical studies.
    \item We identify and validate several aspects of mask-based pretraining—previously unexplained or overlooked—in various architectures across vision and language tasks: 1) The scheme is only beneficial in the overparametrized regime; 2) The optimal masking ratio is task and model-size-dependent; 3) It enforces feature magnitude disparity. 
    \item Building on insights from the linear model, we propose \textbf{R$^2$MAE}, a simple but novel pretraining strategy that replaces fixed mask ratio with uniformly sampled mask ratios from a predefined range. R$^2$MAE yields consistent improvements in vision, language, DNA, and single-cell pretraining, outperforming standard masking and various existing enhancement strategies on downstream zero-shot, linear probing, and fine-tuning tasks. R$^2$MAE enforces models to capture different feature scales, and is able to outperform optimal fixed masking ratio performance in real data and linear models under appropriate mask range settings. 

\end{enumerate}

\section{Related works}
\textbf{Mask-based pretraining in language, vision, and biology.}
Mask pretraining has become a dominant self-supervised learning approach in recent years, with significant developments in language modeling, computer vision, and biology. In NLP, BERT introduced the Masked Language Model (MLM) objective where random 15\% tokens are corrupted and predicted from context \cite{devlin2019bert}. MLM has been adapted in numerous works with modifications \cite{liu2019roberta,raffel2020exploring,yang2019xlnet}. Studies show optimal masking ratios may exceed the 15\% default and vary by task \cite{wettig2022should}, while dynamic mask scheduling may improve performance \cite{yang2022learning,ankner2023dynamic}. Other approaches propose learnable masks during pretraining \cite{joshi2020spanbert,levine2020pmi,sadeq2022informask}. In computer vision, researchers drew inspiration from BERT to devise masked image modeling methods, explored in ViT and BEiT \cite{dosovitskiy2020image,bao2021beit}. \citet{he2022masked} propose MAE, showing that images benefit from an extremely high mask ratio of 75\% to achieve state-of-the-art results in downstream tasks. This finding sparked numerous empirical studies on evaluating and improving the MAE scheme \cite{xie2023data, shi2022adversarial, tian2025beyond, hu2022exploring, feichtenhofer2022masked, weers2023masked,madan2024cl}.

The mask-based pretraining paradigm has also made inroads into biological data science, in particular for DNA sequences and single-cell gene expressions. Those DNA models are usually trained directly by the BERT pretraining objective \cite{dalla2025nucleotide,benegas2025dna}, whereas variants of mask rates were explored for single-cell self-supervised learning models ranging from 15\% to 90\% \cite{theodoris2023transfer,schaar2024nicheformer,rosen2023universal,cui2024scgpt, hao2024large}. To the best of our knowledge, there are currently no successful improvements of mask-pretraining schemes in biological models beyond simply tuning masking rates. See Appendix \ref{alternative} for additional discussions.

\textbf{Understanding neural networks through linear models.} The connection between neural networks and linear models in the proportional regime has been extensively studied in recent years. Here the proportional regime refers to the asymptotic setting where the feature number $d$ and the sample number $n$ both tend to infinity, with their limit ratio $\gamma = d/n \in (0,\infty)$. For instance, the double descent phenomenon, where test error decreases with overparameterization was characterized empirically in deep networks \cite{belkin2019reconciling} and theoretically shown for high-dimensional ridge(-less) regression \cite{richards2021asymptotics,hastie2022surprises}. Recent works also addressed generalized polynomial regimes where $\gamma = d/n^\alpha \in (0,\infty)$\cite{ghorbani2021linearized, cheng2024dimension,hu2024asymptotics}. The equivalence between nonlinear models and linear Gaussian models with matching moment statistics, i.e., universality, have been demonstrated or conjectured in multiple settings \cite{montanari2022universality,hu2022universality,schroder2023deterministic}. Further background of high-dimensional linear models can be seen in Appendix \ref{background}.

\textbf{Theoretical endeavors to understand mask pretraining.} Recent theoretical investigations aimed to provide insights into mask-based pretraining objectives. \citet{cao2022understand} analyzed MAE's attention mechanism through integral kernels and \citet{pan2022towards} demonstrated autoencoders' capacity to preserve semantic information. \citet{zhang2022mask} suggests that masking creates implicit positive pairs relevant to contrastive learning. \citet{yue2023understanding} reframed MAE as local contrastive learning where reconstruction loss contrasts different image regions. \citet{kong2023understanding} developed a latent variable framework to explain the existence of optimal masking rate in MAE. To our knowledge, no prior research has precisely characterized the quantitative phenomena observed in mask-based pretraining, nor can these approaches be readily generalized across data domains and masking designs.

\section{A theoretical framework for mask-based pretraining using high-dimensional linear models}

In this work, we formulate the \textbf{feature-level mask autoencoding} problem as follows.
Let $\bm x = (x_1, \dots, x_{d+1}) \in \mathbb{R}^{d+1}$ be an input sample, where indices $\{1, \dots, d+1\}$ denote features (tabular data) or positions (image/language data).
A binary mask $\bm z = (z_1, \dots, z_{d+1}) \in \{0,1\}^{d+1}$ yields the corrupted input $\bm x' = \bm x \odot \bm z$.
The model $f^\theta: \mathbb{R}^{d+1} \rightarrow \mathbb{R}^{d+1}$ is trained to reconstruct the original values $x_i$ for features where $z_i=0$.
Denoting the set of masked indices as $S_m = \{i | z_i = 0\}$ and using the Mean Squared Error (MSE) loss $L(a,b) = \|a-b\|^2$, the objective per sample is:
\begin{equation}
    \sum_{i \in S_m} L(f^\theta(\bm x')_i, x_i).
\end{equation}
The purpose of setting the feature dimensionality as $d+1$ will become clear in the next section. This approach, particularly when $f^\theta$ employs a (transformer-based) encoder-decoder architecture, aligns with prominent masked autoencoding methods like MAE (vision) \cite{he2022masked}, BERT (language/DNA) \cite{devlin2019bert,ji2021dnabert,dalla2025nucleotide,benegas2025dna}, and masked autoencoders for single-cell genomics \cite{cui2024scgpt,theodoris2023transfer,rosen2023universal}.

\subsection{Reduced linear model}

To make exact analysis of this feature-level mask autoencoding problem feasible, we introduce two primary simplifications.
First, we assume the model $f^\theta$ is linear in its input $\bm x'$ and has no bias term. Specifically, the reconstruction for the $i$-th original feature $x_i$ is given by:
\begin{equation}
    f^\theta(\bm x')_i = \bm x' \bm{\beta}_i, \quad \text{where } \bm{\beta}_i \in \mathbb{R}^{d+1} \text{ is a coefficient vector specific to feature } i.
\end{equation}
Note that if $x_i$ is the feature being reconstructed (i.e., $z_i=0$), then the $i$-th component of the input $\bm x'$ is $(\bm x')_i = x_i z_i = 0$. Consequently, the $i$-th component of $\bm{\beta}_i$, $(\bm{\beta}_i)_i$, does not contribute to the prediction $f^\theta(\bm x')_i = \sum_{j \ne i} (\bm x')_j (\bm{\beta}_i)_j$.
Second, we assume the coefficient vectors $\{\bm{\beta}_i\}_{i=1}^{d+1}$ are independent sets of parameters across different target features $i$. This allows the problem to be treated as $d+1$ parallel, though potentially coupled through data, regression-like tasks.

We next consider a stylized version of one such reconstruction task. Let $\bm y = (y_1,...,y_n) \in \mathbb{R}^n$ represent an arbitrary single feature from the original sample that we aim to reconstruct (e.g., $ y=\bm x_k$ for some $k$). We formulate its corresponding regression problem as follows, using $n$ for the total number of training samples. The feature dimension is now $d$ as one feature is removed from $\bm x \in \mathbb{R}^{d+1}$.
Let $\bm X \in \mathbb{R}^{n \times d}$ be the matrix containing all $n$ original, unmasked training samples with feature $\bm y$ removed. We henceforth denote the $j$-th row of $\bm X$ as $\bm x_j$. We consider the following teacher model:
\begin{equation}
   \bm{y} = \bm X \bm{\beta} + \bm{\epsilon}.
\end{equation}
Here, $\bm{\beta} \in \mathbb{R}^d$ is the ground-truth coefficient vector. The noise $\bm \epsilon = (\epsilon_1,...,\epsilon_n) \in \mathbb{R}^n$ with each entry $\epsilon_j$ i.i.d.\ and $\mathbb{E}[\epsilon_j]=0, \mathbb{E}[\epsilon_j^2] = \sigma^2$. Each sample $\bm x_j$ is assumed to have zero expectation $\mathbb{E}[\bm x_j^\top]=\bm 0$, and covariance $\bm \Sigma = \mathbb{E}[\bm x_j^\top \bm x_j]$. We denote $\gamma = d/n$, $r = \|\bm{\beta}\|$, $\tilde{\bm{\beta}} = \bm{\beta}/\|\bm{\beta}\|$, and $\kappa = \sigma^2 / r^2$.

In the random-mask autoencoding task, each feature chosen as a target is selected with probability $p$. Thus, for the regression on $\bm y$, the effective number of samples is $\tilde n \sim \text{Binomial}(n, p)$. We let $\tilde{\bm y} \in \mathbb{R}^{\tilde n}$ be the vector of these target values, and $\bm X_{\text{sub}} \in \mathbb{R}^{\tilde n \times d}$ be the rows of $\bm X$ corresponding to these $\tilde n$ instances. By Hoeffding's concentration inequality, we have $\tn/n = p + o(1)$ with high probability. Since we only deal with the proportional regime, it suffices to assume the case of $\tn/n = p$ for establishing asymptotic risk quantities in our framework.
%In other words, $\tilde{\bm y} = \bm B \bm y$ and $\bm X_{\text{sub}} = \bm B \bm X$, where $\bm B$ is a random subsampling diagonal matrix with every diagonal elements i.i.d.~Bernoulli$(p)$.
%$|\tn-np|\leq C\sqrt{n}$ with overwhelming probability, that is,

The observed covariates are a randomly masked version of $\bm X_{\text{sub}}$. Let $\bm Z \in \{0,1\}^{\tilde n \times d}$ be a random matrix where each entry $z_{ij}$ is i.i.d.\ Bernoulli$(1-p)$, with $p$ the masking probability defined before. Then the covariate matrix is $\tX = \bm X_{\text{sub}} \odot \bm Z \in \mathbb{R}^{\tilde n \times d}$. 
We consider the solution of the following \textbf{ridge-less} regression in the proportional regime ($d, \tilde n \rightarrow \infty$, with $d/\tilde n \rightarrow \tilde \gamma \in (0,\infty)$ constant):
\begin{equation}
    \hat{\bm{\beta}} = \lim_{\lambda\rightarrow 0^+} \arg\min_{\bm{\beta}'} \left( \|\tilde{\bm y} - \tX \bm{\beta}' \|_2^2 + \lambda \|\bm{\beta}'\|_2^2 \right) = \lim_{\lambda\rightarrow 0^+} (\tX^\top \tX + \lambda \bm I_d)^{-1} \tX^\top \tilde{\bm y}.
\end{equation}
We are interested in the test risk of the model. For a new, unmasked sample $\bm x_0 \in \mathbb{R}^d$, it is of form:
\begin{equation}
R_{\tX}(\hat{\bm{\beta}}; \bm{\beta})=\mathbb{E}\big[(\bm x_0\hat{\bm{\beta}}-\bm x_0\bm{\beta})^2 \mid \tX\big]=\EE\big[\|\hat{\bm{\beta}}-\bm{\beta}\|_{\bm \Sigma}^2\:|\:\tX\big].
\label{eq:risk}
\end{equation}

\textbf{Relation with standard ridge-less regression framework.} Our setup diverges from the standard ridge-less regression framework in two key aspects. First, the effective number of training samples, $\tn = np$, is directly modulated by $p$. Second, the design matrix $\tX$ exhibits a level of induced sparsity (or feature corruption) determined by $p$. As we will demonstrate, these two $p$-dependent factors lead to complex and distinct behaviors in the bias and variance of the estimator, compared to classical ridge-less regression. Further background and preliminaries are provided in Appendix \ref{background}.

\textbf{Test risk and model performance.} The test risk for reconstructing a feature $\bm y$ can be viewed as a feature-wise generalization error, analogous to validation loss. Here, 
$\bm y$ can represent latent features, whose reconstruction error is connected to the validation loss in the original space through the model's decoding transformation. Therefore, this risk reflects the model's feature learning ability, which indicates its utility for downstream tasks like probing and fine-tuning. The correlation between MAE validation loss and fine-tuning performance, as noted in \cite{xie2023data}, supports this interpretation.

\textbf{Relation with real network optimization.} Beyond the key linear simplification, complexities such as mini-batch processing and multi-epoch training are not incorporated into our current setup. Our goal in this study is to develop a minimal model that captures essential aspects of mask pretraining behaviors. A more detailed characterization of these additional factors remains future research.

\textbf{Next token prediction.} Our linear model addresses an independent sample-wise prediction setting. While it aligns well with the mask-based pretraining task, it cannot adequately model the other prevalent pretraining procedure, i.e., autoregression, which is a token-wise prediction task with strong contextual dependencies. We anticipate the latter task to exhibit distinct statistical behaviors, which may be revealed through the analysis of a more complex high-dimensional linear model.

\subsection{Isotropic model}
Here we present our main theoretical results regarding the test risk of the considered high-dimensional linear model. We first consider the simplest case where the covariance matrix $\bSigma = \bI$.

\begin{theorem}[Isotropic model]\label{thm:iso} 
When $\bm \Sigma = \bm I$, the test risk \eqref{eq:risk} can be asymptotically expressed as:
\begin{equation}
    \lim_{n,d\rightarrow \infty} R_{\tX}(\hat{\bm{\beta}}; \bm{\beta})/r^2 =
    \begin{cases}
        \frac{(p+\kappa)\gamma}{(1-p)(p-\gamma)}, & \tilde \gamma < 1 \; (\gamma < p); \\[2ex]
        1-\frac{p}{\gamma} + \frac{p(p+\kappa)}{(1-p)(\gamma - p)}, & \tilde \gamma > 1 \; ( \gamma > p).
    \end{cases}
\end{equation}
\label{theorem:riskI}
\end{theorem}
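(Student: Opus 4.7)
The plan is to compute the risk by a bias--variance decomposition conditional on $\tX$, and then apply Marchenko--Pastur asymptotics to the masked design matrix, whose entries are i.i.d.\ with mean $0$ and variance $1-p$.

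First I would rewrite the regression as $\tilde{\bm y}=\tX\bm\beta+\bm\xi$, where the effective noise
\[
\xi_j \;=\; \sum_{k:\,z_{jk}=0} x_{jk}\beta_k \;+\; \epsilon_j
\]
depends only on the masked entries of row $j$ of $\bm X_{\text{sub}}$, whereas $\tX$ is a function of the unmasked entries alone. Under a continuous distribution of $x_{jk}$, the mask $\bm Z$ is a.s.\ identifiable from $\tX$, and the masked entries are conditionally independent of $\tX$ with their original mean-zero isotropic law; hence $\EE[\bm\xi\mid\tX]=\bm 0$ and $\Cov(\bm\xi\mid\tX)=\diag(v_1,\ldots,v_{\tn})$ with $v_j:=\sum_{k:\,z_{jk}=0}\beta_k^2+\sigma^2$ having mean $\bar v:=(p+\kappa)r^2$. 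Using $\EE[\hat{\bm\beta}\mid\tX]=\tX^+\tX\bm\beta$, the bias vanishes when $\tilde\gamma<1$ (since $\tX^+\tX=\bI$) and equals $(\bI-\bm P_{\tX^\top})\bm\beta$ when $\tilde\gamma>1$, where $\bm P_{\tX^\top}$ is the orthogonal projector onto $\operatorname{row}(\tX)$. Since $\tX/\sqrt{1-p}$ has i.i.d.\ entries of unit variance, asymptotically $\operatorname{row}(\tX)$ is a uniformly random $\tn$-dimensional subspace of $\mathbb R^d$, giving $\|(\bI-\bm P_{\tX^\top})\bm\beta\|^2/r^2\to 1-\tn/d=1-p/\gamma$. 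For the variance, I would exploit rotational invariance in the isotropic setting to take $\bm\beta$ uniform on the sphere of radius $r$ (so it is delocalized and the $v_j$'s concentrate uniformly at $\bar v$), then replace $\diag(v_j)$ by $\bar v\bI$ inside the trace:
\[
\Tr\bigl(\Cov(\hat{\bm\beta}\mid\tX)\bigr) \;=\; \Tr\bigl(\tX^+\diag(v_j)(\tX^+)^\top\bigr) \;\approx\; (p+\kappa)r^2\cdot\Tr\bigl((\tX^\top\tX)^+\bigr).
\]

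The Marchenko--Pastur law for $\tX/\sqrt{1-p}$ with aspect ratio $\tilde\gamma=\gamma/p$, combined with the evaluation $\int x^{-1}\mathrm{MP}_{\tilde\gamma}(dx)=(1-\tilde\gamma)^{-1}$ in the overdetermined regime and the analogous identity applied to $\tX\tX^\top$ (aspect ratio $1/\tilde\gamma$) for the nonzero spectrum in the underdetermined regime, then yields
\[
\Tr\bigl((\tX^\top\tX)^+\bigr) \;\to\; \frac{\gamma}{(1-p)(p-\gamma)} \text{ if } \tilde\gamma<1, \qquad \Tr\bigl((\tX^\top\tX)^+\bigr) \;\to\; \frac{p}{(1-p)(\gamma-p)} \text{ if } \tilde\gamma>1.
\]
Summing bias and variance and dividing by $r^2$ recovers both cases of the statement.

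The hard part will be rigorously justifying the replacement $\diag(v_j)\leadsto \bar v\,\bI$: the variances $v_j$ are coupled to $\tX$ via the shared mask $\bm Z$, so the weighted quadratic form $\sum_j v_j\|(\tX^+)_{:,j}\|^2$ is not a simple product of averages even though $\EE[v_j]=\bar v$. Controlling this requires an isotropic/anisotropic local Marchenko--Pastur law of the type developed in \cite{bloemendal2014isotropic,knowles2017anisotropic}, together with a universality step to pass from the non-Gaussian product entries $\tX_{ij}=x_{ij}z_{ij}$ to the Gaussian MP calculation (e.g., \cite{misiakiewicz2024non}). The isotropic model is what makes this tractable: rotational symmetry reduces the analysis to a delocalized $\bm\beta$ and forces all but one effective noise scale to be irrelevant at leading order.
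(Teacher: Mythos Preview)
Your proposal is correct and follows essentially the same route as the paper: the same bias--variance split conditional on $\tX$, identical expressions for the two pieces (your $\sum_j v_j(\tX\tX^\top)^+_{jj}$ is exactly the paper's $\sum_i\beta_i^2\sum_a(\tX\tX^\top)^+_{aa}\mathbbm{1}_{\tX_{ai}=0}+\sigma^2\Tr((\tX\tX^\top)^+)$ after regrouping), and the same Marchenko--Pastur endgame for $\Tr((\tX^\top\tX)^+)$.

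The one place you diverge is the rotational-invariance reduction to a delocalized $\bbeta$. Coordinate-wise masking is only permutation-symmetric in the feature index, not rotation-invariant, so this shortcut is not quite licit as stated; that the limiting risk depends only on $\|\bbeta\|$ is a \emph{conclusion} rather than a premise. The paper handles the coupling you flag as the hard part slightly differently: it invokes the isotropic local law of \cite{bloemendal2014isotropic} to show each diagonal entry $(\tX\tX^\top)^+_{aa}$ individually concentrates at the common deterministic value, and then applies Hoeffding to $\sum_a\mathbbm{1}_{\tX_{ai}=0}\to p\tn$. This decouples the mask from the resolvent for \emph{arbitrary} $\bbeta$, so no delocalization assumption is needed---and you already named exactly that tool as the rigorous fix.
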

The proof of the theorem, provided in Appendix \ref{proof:theorem 1}, extends well-known results from standard high-dimensional linear regression \cite{hastie2022surprises} by employing an isotropic local law for sample covariance matrices \cite{bloemendal2014isotropic}. According to the formula, in the underparameterized regime ($\tilde \gamma < 1$), the test risk is a monotonically increasing function of $p$. In the overparameterized regime ($\tilde \gamma > 1$), the test risk exhibits non-monotonic behavior with respect to $p$, achieving its minimum at some $p^* \in (0,1)$. Depending on the value of $\gamma$, the test risk curve will either monotonically increase regarding $p$ ($\gamma > 1$), or exhibit a transition at the threshold at $\gamma = p$ ($\gamma < 1$). These predictions, including the phase transition phenomenon, are validated by simulations as shown in Fig. 1A.

Nevertheless, this outcome is largely unconstructive, as the minimal risk achieved does not offer a substantial reduction compared to that of a null prediction (i.e., $\hat{\bm{\beta}} = \bm{0}$, for which $R_{x}(\bm{0}; \bm{\beta}) = r^2$). In the following sections, we will demonstrate that the benefit of masking is due to the conditional dependence between unmasked and masked features, a key component missing in this setting.

\subsection{Spiked covariance model}
Identity covariance represents a special case without feature dependency, whose characterization effectively reduces to the standard ridge-less case. If the covariance involves interaction terms, a non-trivial standalone treatment would be required. Below, we consider a spiked covariance model, $\bm \Sigma = \bm I + \delta \bm v\bm v^\top$, where $\bm v \in \mathbb{R}^d$ is a vector and $\delta>0$ is a scalar. Below we denote the masked data covariance as $\tilde{\bm \Sigma} = (1-p)^2\bm \Sigma + p(1-p) \diag(\bm \Sigma)$. We characterize the limiting test risk of this rank-1 spiked covariance model in the overparametrized regime:
\begin{corollary}[Limiting test risk of spiked covariance model]
The test risk \eqref{eq:risk} has the following limit:
\begin{equation}
    \lim_{n\to\infty} \frac{R_{\tX}(\hat{\bm{\beta}}; \bm{\beta})}{r^2} \to \lim_{n\to\infty}\left(\phi_\beta + c^2 (1-\phi_v)+ \delta(c(1-\phi_v)- \psi)^2 +  u\left(\frac{\sigma^2}{r^2}+ p + cp \tilde{\bm{\beta}}^\top \bm v \right)\right)\, ,
\end{equation}
\begin{equation}
\begin{gathered}
    \text{where} \quad c = \frac{p\delta \cdot \bm v^\top\tilde{\bm{\beta}}}{1+\delta(1-p)}, \quad \phi_\beta = \lambda_\star \tilde{\bm{\beta}}^\top (\lambda_\star \bm I +\tilde{\bm \Sigma})^{-1} \tilde{\bm{\beta}}, \quad \phi_v = \lambda_\star \bm v^\top (\lambda_\star \bm I +\tilde{\bm \Sigma})^{-1} \bm v, \\ \psi = \lambda_\star \tilde{\bm{\beta}}^\top  (\lambda_\star \bm I +\tilde{\bm \Sigma})^{-1} \bm v, \quad u =  \frac{\Tr(\bm\Sigma\tSigma(\lambda_\star \bm I +\tilde{\bm \Sigma})^{-2})}{\tn - \Tr(\tSigma^2(\lambda_\star \bm I +\tilde{\bm \Sigma})^{-2})},
\end{gathered}
\label{defs}
\end{equation}
and $\lambda_\star$ is the unique non-negative solution of the fixed point equation $\tn=\mathrm{Tr}\big(\tSigma(\tSigma+\lambda_\star \bm I)^{-1}\big)\, .$
\label{cor}
\end{corollary}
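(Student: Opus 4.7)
The plan is to reduce the ridge-less regression under masking to an effective anisotropic ridge regression problem whose population covariance is $\tSigma$, and then apply deterministic-equivalent machinery to evaluate the resulting quadratic forms in $\tilde{\bm{\beta}}$ and $\bm v$, together with the variance trace. The first step is to decompose $\bm X_{\text{sub}} = \tX + \bm W$, where $\bm W = \bm X_{\text{sub}} \odot (\bm 1 - \bm Z)$ collects the \emph{masked} entries, so that
\begin{equation}
\tilde{\bm y} = \tX \bm{\beta} + (\bm W \bm{\beta} + \bm{\epsilon}).
\end{equation}
Conditional on $\tX$, the residual term $\bm W \bm{\beta}$ plays the role of additional effective noise whose conditional covariance depends on $\bm\Sigma$ and $p$; its contribution, after taking expectations over the mask pattern, is what produces the $p + c p\, \tilde{\bm{\beta}}^\top \bm v$ correction appearing inside the variance factor $u(\cdot)$ in the statement.

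\textbf{Reduction and local law.} Conditional on $\tX$, I would write the risk \eqref{eq:risk} in standard bias-plus-variance form. The bias involves $\bm{\beta}^\top (\bm I - (\tX^\top \tX)^+ \tX^\top \tX)\bm\Sigma(\bm I - (\tX^\top \tX)^+ \tX^\top \tX)\bm{\beta}$, and the variance involves $\Tr\bigl(\bm\Sigma\,(\tX^\top \tX)^+ \tX^\top \bm K\,\tX\,(\tX^\top \tX)^+\bigr)$ with $\bm K$ the conditional covariance of $\bm W\bm{\beta} + \bm{\epsilon}$ given the mask. Since $\EE[\tX^\top \tX/\tn]=\tSigma$, I would then invoke the anisotropic local law for sample covariance matrices (Knowles--Yin, and the ridge-less version used in Misiakiewicz--Saeed) to replace $(\tX^\top \tX + \lambda \bm I)^{-1}$ by its deterministic equivalent, which in the $\lambda \to 0^+$ limit is proportional to $(\lambda_\star \bm I + \tSigma)^{-1}$ with $\lambda_\star$ characterized precisely by the stated fixed-point equation $\tn = \Tr\bigl(\tSigma(\tSigma + \lambda_\star \bm I)^{-1}\bigr)$. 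This identifies $\phi_\beta$, $\phi_v$, $\psi$ as the limits of the bilinear forms $\lambda \tilde{\bm{\beta}}^\top (\tX^\top \tX/\tn + \lambda\bm I)^{-1}\tilde{\bm{\beta}}$ and its $\bm v$-analogs, and identifies $u$ as the standard overparameterized variance coefficient with mismatched train/test covariance ($\tSigma$ vs.\ $\bm\Sigma$).

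\textbf{Handling the spike and the main obstacle.} The spike $\delta \bm v\bm v^\top$ in $\bm\Sigma$, together with its image $(1-p)^2 \delta \bm v \bm v^\top$ in $\tSigma$, produces the coupling terms $c^2(1-\phi_v)$ and $\delta(c(1-\phi_v) - \psi)^2$. I would extract these by applying Sherman--Morrison to $\bm\Sigma = \bm I + \delta \bm v\bm v^\top$ and to the deterministic equivalent $(\lambda_\star \bm I + \tSigma)^{-1}$, reducing every quadratic expression in $\bm v$ and $\tilde{\bm{\beta}}$ to combinations of $\phi_\beta$, $\phi_v$, $\psi$. The scalar $c = p\delta\,\bm v^\top \tilde{\bm{\beta}}/(1 + \delta(1-p))$ then emerges as the effective bias induced along $\bm v$ from regressing on a masked design whose covariance is not aligned with the target. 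The main technical obstacle is the simultaneous control of three non-trivial directions ($\tilde{\bm{\beta}}$, $\bm v$, and the spike in $\tSigma$) while uniformly controlling the local law near the spectral edge, which is needed to justify the ridge-less limit $\lambda \to 0^+$ and to produce the precise denominator $\tn - \Tr(\tSigma^2(\lambda_\star \bm I + \tSigma)^{-2})$ in $u$ via the derivative of the fixed-point relation. A secondary bookkeeping difficulty is the diagonal term $p(1-p)\delta\,\diag(\bm v\bm v^\top)$ appearing in $\tSigma$: under an appropriate delocalization assumption on $\bm v$ this piece contributes at lower order and collapses into the scalar $\lambda_\star$, so that only the rank-one part of the spike survives in the final expression.
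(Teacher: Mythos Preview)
Your high-level plan---reduce to deterministic equivalents for $\tSigma$, evaluate bilinear forms in $\tilde{\bm{\beta}}$ and $\bm v$, extract the variance trace via the derivative of the fixed-point relation---matches the paper's approach. But there is a genuine gap in your bias calculation that would prevent you from reaching the stated formula.

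You write the bias as $\bm{\beta}^\top (\bm I - (\tX^\top \tX)^+ \tX^\top \tX)\,\bm\Sigma\,(\bm I - (\tX^\top \tX)^+ \tX^\top \tX)\bm{\beta} = \|\tilde\bPi\bbeta\|_\bSigma^2$. This would be correct only if $\EE[\bm W\bbeta \mid \tX]=0$. It is not: conditioning on $\tX$ reveals both the mask pattern $\bZ$ \emph{and} the observed values $\tX_{i,\mathcal Z_i}$, and since $\bSigma\neq\bI$ the masked entries have nonzero Gaussian conditional mean $\EE[(\bm X-\tX)_{i,\mathcal Z_i^c}\mid\tX_{i\cdot}]=\bSigma_{\mathcal Z_i^c\mathcal Z_i}\bSigma_{\mathcal Z_i\mathcal Z_i}^{-1}\tX_{i,\mathcal Z_i}$. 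This produces an extra bias contribution $\tX^+\bu$ with $u_i = \tX_{i,\mathcal Z_i}\bSigma^{-1}_{\mathcal Z_i\mathcal Z_i}\bSigma_{\mathcal Z_i\mathcal Z_i^c}\bbeta_{\mathcal Z_i^c}$, so that the correct bias is $\|\tilde\bPi\bbeta + \tX^+\bu\|_\bSigma^2$ (this is the content of the paper's Lemma~\ref{lem:decomp}). Your formula, pushed through the local law, would give only $\phi_\beta + \delta\psi^2$; the entire $c$-dependent structure $c^2(1-\phi_v) + \delta(c(1-\phi_v)-\psi)^2$ would be missing, and your informal remark that ``$c$ emerges as the effective bias from covariance misalignment'' does not recover it.

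The paper obtains $c$ by applying Sherman--Morrison to the \emph{random sub-block} $\bSigma_{\mathcal Z_i\mathcal Z_i} = \bI + \delta\bv_{\mathcal Z_i}\bv_{\mathcal Z_i}^\top$ (not to $\bSigma$ or to the deterministic equivalent), which yields $u_i = \frac{\delta}{1+\delta\|\bv_{\mathcal Z_i}\|^2}\bigl((1-\bZ_i)\odot\bv\bigr)^\top\bbeta\cdot\tX_i\bv$. Hoeffding then concentrates the row-dependent scalar to $c = p\delta\,\bv^\top\tilde\bbeta/(1+\delta(1-p))$ uniformly in $i$---this is precisely where the delocalization assumption on $\bv,\tilde\bbeta$ is used---so that $\tX^+\bu \approx c\,\hat\bSigma^+\hat\bSigma\,\bv$, after which the local law applies. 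The same conditional-mean issue affects your variance: the relevant $\bm K$ is the conditional covariance of $\bm W\bbeta$ given $\tX$, not given the mask alone; the Schur-complement correction $\bSigma_{ij}-\bSigma_{i\mathcal Z_a}\bSigma_{\mathcal Z_a\mathcal Z_a}^{-1}\bSigma_{\mathcal Z_a j}$ is what, after the same concentration step, produces the factor $p + cp\,\tilde\bbeta^\top\bv$. Finally, the diagonal piece $p(1-p)\delta\,\diag(\bv\odot\bv)$ in $\tSigma$ does not ``collapse into $\lambda_\star$''; it is retained as part of $\tSigma$ in all of $\phi_\beta,\phi_v,\psi,u$.
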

The result is a corollary of Theorem \ref{thm:spiked}, which characterizes the asymptotics of the test risk in this setup. Theorem \ref{thm:spiked} and its proof are presented in Appendix \ref{proof:thm2}, along with a moderate delocalization assumption required for the proof. The validity of our derived test risk expression is confirmed by numerical experiments (Fig. 1B). Intuitively, when the spike level $\delta$ is small, the setting effectively reduces to the identity covariance case, where the test risk does not significantly descend. When $\delta$ is large, the behavior of the bias term is mostly characterized by the quadratic term $\delta(c(1-\phi_v)- \psi)^2 $. In this case, there can exist a "sweet-spot" masking ratio that minimizes the quadratic term achieving near-zero bias and near-optimal risk, especially when $\delta$ is large. This yields the desired descent behavior in real-world mask pretraining curves and is validated via simulations (Figs. 1C, 3). 

According to the quadratic term, the test risk and the optimal masking ratio both depend on the feature strength, defined as the alignment between $\bm{\beta}$ and $\bm{\Sigma}$ (reducing to $\bm{\beta}^{\top} \bm{v}$ in this case). A stronger feature strength results in a steeper test risk descent and a higher optimal masking ratio (Figs.~1B-C, 3). Finally, we empirically observed that higher masking leads to a greater disparity in prediction magnitude, $\EE\big[ \|\XX \hat{{\bbeta}}\|^2 | \tX\big]$, between $\bbeta$s aligned with $\bm{\Sigma}$ and those that are not (Figs.~1D, 3).

%By the quadratic term, the test risk and the optimal masking ratio both associate with the feature strength, defined as alignment between $\bm\beta$ and $\bm\Sigma$ (reduced to $\bm\beta^\top \bm v$ in this case). A stronger feature strength indicates more dramatic test risk descent and higher optimal masking ratio (Figs. 1B-C, 3). Finally, we empirically observed that higher masking leads to greater disparity of prediction magnitude $\EE\big[ \|\XX \hat{{\bbeta}}\|^2 |\tX\big]$ across $\bbeta$s aligned or not aligned with $\bm\Sigma$ (Figs. 1D, 3).

%Analysis on high-dimensional linear mask-regression models. 
\begin{figure}[h!]
	\centering
	\includegraphics[scale=0.88]{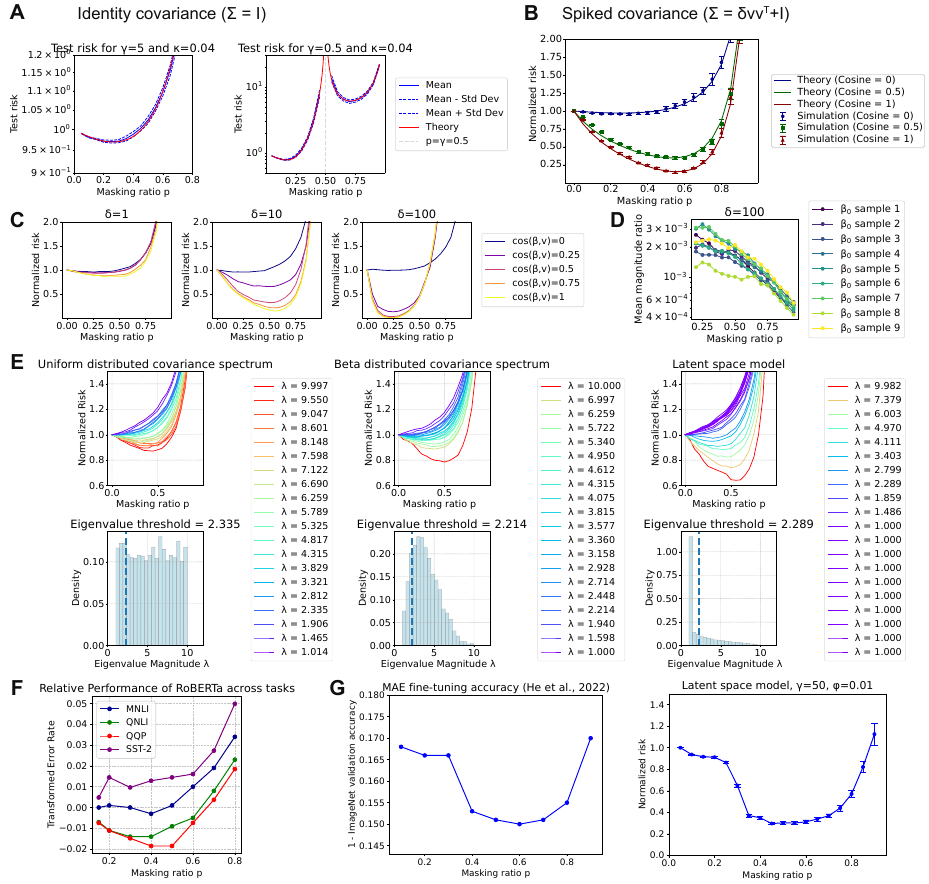}
	\caption{\textbf{A-B.} Plots of theoretical test risk and simulations (showing mean and standard deviation from 50 samples) against the masking ratio $p$ for the identity covariance model $\bm \Sigma=\bm I$ (\textbf{A}) and the spiked covariance model $\bm\Sigma=\delta \bm v\bm v^\top +\bm I$ (\textbf{B}). For the former model, $n=2000$ (left), $4000$ (right). For the latter model, each entry in $\bv$ is i.i.d. sampled from $\mathcal{U}(0,1)$ and then scaled to ensure $\|\bv\|=1$. $n=200, \gamma = 5, \delta = 10$. \textbf{C-D.} Plots of mean simulation test risk (\textbf{C}) and mean magnitude ratio (\textbf{D}, defined as $\EE\big[ \|\XX \hat{{\bbeta}}_0\|^2 |\tX\big]/\EE\big[ \|\XX \hat{{\bbeta}}_1\|^2 |\tX\big]$ between $ \operatorname{cos}(\bbeta_1,\bv) = 1$ and $ \operatorname{cos}(\bbeta_0,\bv) = 0$) over 50 samples in the spiked covariance model. $n=200, \gamma = 5$. \textbf{E.} Normalized test risk of different covariance models plotted against masking ratio $p$, where $\bbeta$ was selected as different eigenvectors of the covariance matrix $\bSigma$ (Upper). Histogram of covariance spectrum densities for each model above (Lower). The threshold where the minimal risk becomes smaller than the null risk is highlighted by a dashed blue line. \textbf{F.} Transformed error rates of fine-tuned BERT models evaluated on different benchmark tasks \cite{wettig2022should} (See Appendix \ref{exp} for details). \textbf{G.} Plots of MAE fine-tuning accuracy on ImageNet-1K \cite{he2022masked} and the normalized test risk of a latent space model against masking ratio.
 }\label{fig1}
\end{figure}
\subsection{General covariance models recapitulate real-world mask pretraining curves}

For general covariance matrices, an analytic expression of test risk remains infeasible. Nevertheless, when $\bm \beta$ is an eigenvector of $\bSigma$, the behavior of bias and variance terms can be revealed through a simplified form of the test risk, presented as Theorem 3 in Appendix \ref{general}. Similar to the spiked covariance case, the test risk displays a descent with respect to the masking ratio $p$ due to cancellation in the bias term. To verify our results, we simulated covariance models constructed by orthonormal projections of various spectrum distributions (Fig. 1E, see Appendix for details). The non-monotonic pattern of the test risk emerges in all models, with stronger effects and higher optimal masking ratios for stronger signal $\bm\beta$s (those corresponding to higher eigenvalues in $\bSigma$, Fig. 1E). We also observed a comparable transition threshold where the minimum test risk gains an advantage over null prediction (Fig. 1E), which may suggest a form of universality that warrants further theoretical investigation.

We further compared our results with real language models (BERT) pretrained by MLM with different masking ratios \cite{wettig2022should}. Even for the same set of models, the behavior of mask pretraining curves varies with respect to the evaluation dataset (Fig. 1F). The resulting family of curves aligns well with our observations in linear models (Fig. 1E).
In vision MAE models \cite{he2022masked}, the mask pretraining curve can exhibit unusual behavior with two plateaus: 1) Before the performance improves with respect to masking ratio, the model performance remains relatively stable; 2) A range of masking ratios where the model achieves similar near-optimal. Interestingly, with another latent space model (see Appendix for details), we faithfully reproduced the observed two plateaus in real mask pretraining curves (Fig. 1G). Notably, another sample from the model results in a different curve aligning with MAE linear probing performance (Fig. 4). Together, these comparisons suggest a connection between real-world mask pretraining and our linear model framework, which we will further validate in the next section.
%, where the orthonormal projection is sampled from a $O(d)$ Haar distribution the test risk here depends on the sampling of the projection, and another sample from the Haar distribution results in
\subsection{Validating insights from linear models in real neural networks}
Apart from reproducing existing observations, a successful theory should also provide hypotheses that can be empirically validated. Here we summarize main predictions from our theoretical framework:
\begin{enumerate}
\item Mask-based pretraining is only beneficial in the overparametrized regime. This is because it reduces risk through the bias term, which only appears in the overparametrized case. Moreover, for these overparametrized models, the optimal masking ratio should be dependent on the model parameter size, which determines the limit ratio $\gamma$ thus also the test risk.
\item The performance curve regarding the mask ratio can differ by evaluation tasks even for the same set of pretrained models, due to different features required for the downstream task.
\end{enumerate}
%Validation in real neural networks.
\begin{figure}[t]
	\centering
	\includegraphics[scale=0.97]{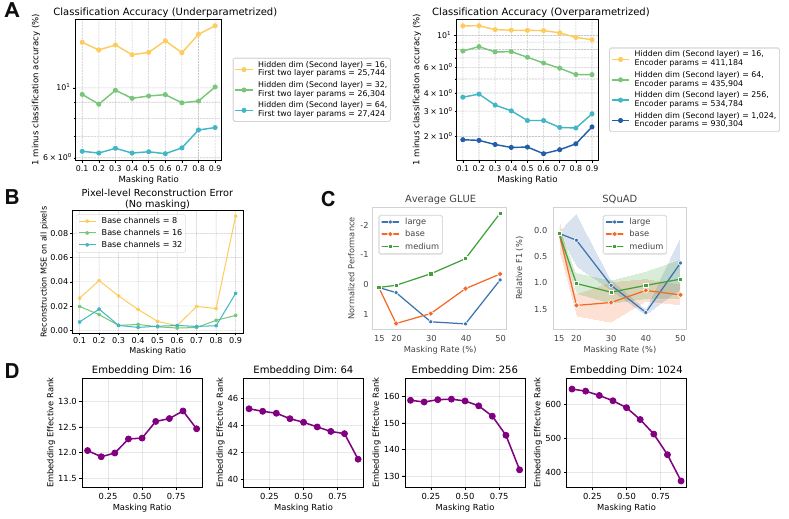}
	\caption{\textbf{A.} Linear probing classification accuracy of MLPs in parameter-insufficient (left) and sufficient (right) settings on MNIST. \textbf{B.} Pixel-level reconstruction error without masking for CNN models trained on CelebA. \textbf{C.} Impact of masking ratio on different RoBERTa model sizes (large > base > medium). Adapted from \cite{wettig2022should} licensed CC-BY 4.0. y axes were flipped for consistency. \textbf{D.} Effective rank of MNIST embedding in overparametrized MLP models of different settings.}\label{fig2}
\end{figure}
% Validation on CNNs and transformers strongly indicates our theory's applicability to non-MLP architectures and complex real-world datasets. 

The most decisive support of our theory would be on the first point that cannot be explained via previous arguments centered on training data \cite{zhang2022mask,yue2023understanding,kong2023understanding}. We validate these points on MultiLayer Perceptrons (MLPs) trained on MNIST, convolutional neural networks (CNNs) trained on CelebA, and large-scale RoBERTa transformer models \cite{wettig2022should}.
For the former two setups, we pretrained encoder-decoder architectures by pixel-level mask reconstruction tasks. We refer to extensive comparisons performed in \cite{wettig2022should} for effects of mask ratio and RoBERTa model size on pretraining performance. We implemented both parameter-insufficient and sufficient settings for MNIST, whereas the latter was used for evaluating CNNs and transformers. In MLPs, the linear probing error rate exhibits a descent for all parameter-sufficient models, while the error rate first fluctuates then monotonically increases for parameter-insufficient models (Fig. 2A). The transition observed in parameter-insufficient models can be explained by the test risk of underparametrized linear models ($\gamma<1$, Fig. 1A).

For CNNs, the optimal reconstruction of original images was observed for different sets of intermediate masking ratios across model sizes (Fig. 2B). As for linear probing, all models suddenly improve after the masking ratio increases to a model-size-specific threshold (Figs. 5-7). Together, different CNN model sizes exhibit distinct optimal masking ratios (0.6, 0.7, 0.8 for 8, 16, 32 base channels respectively). For RoBERTa, larger models correspond to higher optimal masking ratio, which is further altered by the evaluation task (Fig. 2C). These evaluations provide strong support for our first prediction, which would not be addressed by existing explanations. Differences of optimal masking ratio across evaluation tasks for CNNs and RoBERTa models further support the second point.

We then explored whether the increased feature magnitude disparity in spiked covariance models (Fig. 1D) appears in real neural networks. Specifically, we evaluated the effective rank (ER) of MNIST image embeddings in MLP models. ER is defined as the entropy of sum-normalized matrix singular values and measures spectrum uniformity \cite{roy2007effective}. Except for the extremely small embedding case (dim=16), all parameter-sufficient models indeed exhibit a decrease of ER with respect to the masking ratio (Fig. 2D), with curve patterns resembling those in Fig. 1D, confirming our hypothesis. This also aligns with the previously observed decrease of ER during training in vision MAEs \cite{zhang2022mask}.

\section{R$^2$MAE for universal representation learning}
As a final contribution of our work, we aim to employ our gained understanding to improve current mask pretraining schemes. Our theoretical framework highlights that different masking ratios selectively emphasize features of varying strength. 
Therefore, we conclude that it is essential to expose the model to a range of masking ratios during pretraining. We propose the simplest pretraining method that serves the purpose, which can be described and implemented in one line:

{\centering \emph{Expose the model to data corrupted with a uniformly sampled masking ratio $p \sim \mathcal U(p_{\min},p_{\max})$.} \par}
    
We term this scheme as \emph{Randomly Random Mask AutoEncoding} (\textbf{R$^2$MAE}). Despite its simplicity, it has not been implemented in prior works to our knowledge. Existing works focused on improving the mask-based pretraining objective mostly aim to learn adaptive masks during pretraining \cite{madan2024cl,sadeq2022informask,chen2023improving} or perform (deterministic) mask rate scheduling during training \cite{yang2022learning, ankner2023dynamic}. Technically, the closest variant of R$^2$MAE may be the training phase of a mask diffusion language model (MDLM) \cite{ghazvininejad2019mask,sahoo2024simple}, which reconstructs tokens in unmasked to completely masked samples, constituting a special case of $(p_{\min},p_{\max})=(0,1)$. Nevertheless, MDLMs are used for generation instead of fine-tuning related tasks, and fine-tuning standard BERT models with MDLM does not affect/improve downstream task performance \cite{sahoo2024simple}. The issue of setting $(p_{\min},p_{\max})=(0,1)$ for feature learning is apparent with our theoretical framework, as the test risk either degenerates or explodes when $p \approx 0/1$.

\subsection{Evaluation of R$^2$MAE on vision and language modeling}

We first evaluated R$^2$MAE on well-studied image and language pretraining tasks. Our implementations closely follow established practices \cite{he2022masked, wettig2022should}. For vision pretraining, we implemented different mask ratio settings on the same ViT-base MAE model as in \cite{he2022masked}. The considered settings include: 1. Default MAE with constant masking ratio 0.75; 2. R$^2$MAE with masking rate $p \sim \mathcal U(0.6, 0.9)$; 3. the training phase of MDLM \cite{sahoo2024simple} with masking rate $p \sim \mathcal U(0,1)$; 4. Dynamic MR \cite{ankner2023dynamic} that linearly decreases masking ratio from 0.9 to 0.6; 4. High (0.9) and low (0.5) mask ratios. We trained all models for 150 epochs. While the training is shorter than default 800-epoch experiments in \cite{he2022masked}, their evaluation shows predictable improvements from 100 to 1600 epochs in ViT-Large models. Therefore, we anticipate our results to be comparable across different settings despite suboptimal absolute accuracy. All models were then fine-tuned for classification for 100 epochs following \cite{he2022masked}. 

As shown in Table~1, R$^2$MAE marginally outperforms the best alternatives (default MAE and dynamic MR) and does not suffer from suboptimal MR as observed in high and low masking baselines. Across our experiments, R$^2$MAE yields its smallest improvement for ViT-MAE, potentially for two reasons: 1)~Its training involves significantly longer epochs with augmentation, which deviates from other experimental settings and our theoretical framework; 2)~R$^2$MAE's pre-training loss in ViT-MAE fluctuates, likely due to variable-length of unmasked token sequences, warranting future improvement.

\begin{table*}[h!]
    \caption{Fine-tuning accuracies of ViT-base MAE models \cite{he2022masked} on ImageNet classification. In our benchmarks, masking scheme metrics outperforming optimal fixed MR settings are labeled \textcolor{lightred}{red}.}
    \footnotesize
    \label{tb_vit_performance}
    \centering
    \begin{tabular}{lcccccc}
    \toprule
    Metric & MR 0.75 (default) & MR 0.9 & MR 0.5 & MDLM & Dynamic MR & \textbf{R$^{2}$MAE} (Ours) \\
    \midrule
    Top1 Acc. & 81.97 & 81.20 & 81.80 & 81.02 & 81.97 & \textcolor{lightred}{\textbf{82.00}} \\
    Top5 Acc.   & 96.02 & 95.68  & 95.93 &  95.60 & 96.04 & \textcolor{lightred}{\textbf{96.05}} \\
    \bottomrule
    \end{tabular}
\end{table*}

For language modeling, we trained RoBERTa-base and RoBERTa-medium (named following \cite{wettig2022should}) models on the FineWeb dataset for 10B tokens, and fine-tuned them on GLUE benchmarks. The reported accuracy for each task is the average of three runs with different random seeds, consistent with \cite{wettig2022should}. Similar to vision experiments, we evaluated: 1. Default MLM (MR 0.15); 2. R$^2$MAE ($p \sim \mathcal U(0.15, 0.4)$); 3. Dynamic MR \cite{ankner2023dynamic} (0.4 to 0.15); 4. MLM with a fixed 0.4 MR. Our fine-tuning accuracies are comparable to those in \cite{wettig2022should}. In both models, R$^2$MAE achieves best performance in three tasks (MNLI, QQP, SST-2), achieving best overall rank, followed by dynamic MR (Table 2).
\begin{table*}[!ht]
\centering
\caption{GLUE fine-tuning accuracies of RoBERTa models with different pretraining settings.}
\label{tab:combined_glue_results}
\footnotesize{
\begin{tabular}{l ccccc ccccc}
\toprule
& \multicolumn{5}{c}{RoBERTa-Medium (52M)} & \multicolumn{5}{c}{RoBERTa-Base (125M)} \\
\cmidrule(lr){2-6} \cmidrule(lr){7-11}
Method & MNLI & QQP & SST-2 & QNLI & Rank & MNLI & QQP & SST-2 & QNLI & Rank \\
\midrule
MLM default & 80.8 & 89.8 & 89.9 & 86.3 & 3.25 & 81.5 & 90.7 & 91.7 & 87.8 & 3.00 \\
Fixed MR 0.4          & 80.3 & 89.7 & 90.1 & 86.6 & 3.75 & 81.7 & 90.7 & 91.2 & 88.5 & 3.25 \\
MDLM         & 79.4 & 89.8 & 89.3 & 85.4 & 4.50 & 80.3 & 90.3 & 91.4 & 87.7 & 4.50 \\
Dynamic MR            & 80.8 & \textcolor{lightred}{\textbf{90.1}} & \textcolor{lightred}{90.5} & \textcolor{lightred}{\textbf{87.1}} & 1.50 & \textcolor{lightred}{81.8} & 90.7 & 91.4 & \textcolor{lightred}{\textbf{89.1}} & 2.00 \\
\textbf{R$^2$MAE} (Ours)             & \textcolor{lightred}{\textbf{80.9}} & \textcolor{lightred}{\textbf{90.1}} & \textcolor{lightred}{\textbf{90.6}} & \textcolor{lightred}{86.7} & \textbf{1.25} & \textcolor{lightred}{\textbf{81.9}} & \textcolor{lightred}{\textbf{90.8}} & \textcolor{lightred}{\textbf{91.9}} & \textcolor{lightred}{88.6} & \textbf{1.25} \\
\bottomrule
\end{tabular}
}
\end{table*}

\subsection{Evaluation of R$^2$MAE on DNA sequence and gene expression modeling}

One focus of R$^2$MAE is on biological data including DNA sequences and single-cell gene expression data, where the standard mask-based pretraining scheme remains the prevalent choice \cite{benegas2025dna,cui2024scgpt,theodoris2023transfer,rosen2023universal}, and improving the scheme is a pressing need. We evaluated a 12-layer BERT style model for DNA sequence (GPN-MSA \cite{benegas2025dna}), and a 5-layer MLP encoder-decoder model for single-cell gene expression respectively. Apart from R$^2$MAE, we implemented standard MLM/MAE with different masking ratios, MDLM \cite{sahoo2024simple}, dynamic MR \cite{ankner2023dynamic}, and learnable mask (named as CL-MAE following \cite{madan2024cl}, which effectively covers AutoMAE \cite{chen2023improving}). We also compared alternative DNA sequence and single-cell models \cite{brandes2023genome,rentzsch2021cadd,pollard2010detection,sullivan2023leveraging,nguyen2023hyenadna,dalla2025nucleotide, cui2024scgpt,rosen2023universal}. To evaluate if other masking strategies synergize with R$^2$MAE, we further implemented the combination of R$^2$MAE with Dynamic MR or CL-MAE. After training, DNA models are evaluated through zero-shot missense/regulatory (Clinvar/OMIM) variant prediction tasks \cite{landrum2020clinvar,smedley2016whole,chen2024genomic}. Gene expression models are evaluated using linear probing performances in predicting cell type, disease, and age across donors in lung and brain atlas datasets \cite{sikkema2023integrated,gabitto2024integrated}. 

As shown in Tables 3--4 and 6, R$^2$MAE achieves the best overall performance in both DNA and single-cell tasks. The only tasks without clear advantage are DNA missense variant prediction (where all best models achieved near-optimal performance) and cell type classification (where the target label is artificially curated). Together, among all tested model domains (vision, language, DNA, single-cell), R$^2$MAE is \textbf{the only scheme} that consistently outperforms standard MLM/MAE with best mask ratios, among the default value and min/max ratios used in R$^2$MAE. The consistent improvement in our well-controlled comparisons highlights robustness and generalizability of R$^2$MAE.

Interestingly, for the cases where Dynamic MR and CL-MAE outperform the baseline setting, combining them with R$^2$MAE results in a disadvantage compared to R$^2$MAE alone. For better understanding, we inspected specific DNA sequence classes with different prediction performances. The combination of R$^2$MAE with CL improves classification of harder variants including 3'UTR and ncRNA, but not the easier 5'UTR variants (Table 3). These results demonstrate that combining R$^2$MAE with additional designs may bring advantages in certain cases but not overall improvement.

\begin{table*}[h!]
		\caption{Comparison on DNA variant effect prediction. pAUROC, partial AUROC.}
        \footnotesize
		\label{tb2}
		\centering
        \begin{tabular}{lccccccc}
			\toprule
			& \multicolumn{2}{c}{Clinvar (Missense)} & \multicolumn{2}{c}{OMIM (Regulatory)} & \multicolumn{3}{c}{OMIM subset class AUPRC} \\
            \cmidrule(lr){2-3}\cmidrule(lr){4-5}\cmidrule(lr){6-8} 
			Methods  & AUROC & AUPRC & pAUROC & AUPRC & 5'UTR & 3'UTR & ncRNA  \\
			\midrule
NT              & 0.601 & 0.652 & 0.500 & 0.001 & 0.010 & 0.001 & 0.000 \\
phastCons-100v  & 0.883 & 0.848 & 0.514 & 0.006 & 0.081 & 0.005 & 0.005 \\
phyloP-241m     & 0.912 & 0.913 & 0.590 & 0.028 & 0.175 & 0.015 & 0.028 \\
phyloP-100v     & 0.927 & 0.937 & 0.574 & 0.038 & 0.251 & 0.029 & 0.039 \\
%ESM-1b          & 0.944 & 0.962 & --    & -- & --& -- & --   \\
CADD            & 0.966 & 0.967 & 0.595 & 0.048 & 0.279 & 0.010 & 0.090 \\
\midrule
GPN-MSA (MLM)        & 0.970 & 0.974 & 0.644 & 0.127 & 0.331 & 0.044 & 0.102 \\
\emph{ -- MR 5\%}        & 0.967 & 0.970 & 0.647 & 0.134 & 0.330 & 0.048 & 0.171 \\
\emph{ -- MR 30\%}        & 0.970 & 0.974 & 0.645 & 0.131 & 0.335 & 0.047 & 0.081 \\
MDLM     &   0.970   & 0.974   & 0.647 & 0.131 & \textbf{0.341} & 0.048 & 0.110 \\
Dynamic MR        & 0.970 & 0.973           & 0.645 & 0.132 &0.332 & 0.054 & 0.082 \\
CL-MAE     &   0.968   & 0.972   & 0.644 & 0.117 & 0.328 & 0.056 & 0.128 \\
\textbf{R$^2$MAE}  (Ours)        & 0.969 & 0.973           & \textcolor{lightred}{\textbf{0.649}} & \textcolor{lightred}{\textbf{0.148}} & 0.339 & 0.050 & 0.136 \\
\emph{ + Dynamic MR}      & 0.970 & 0.974 & \textcolor{lightred}{\textbf{0.649}} & \textcolor{lightred}{0.138} & 0.324 & 0.045 & 0.106 \\
\emph{ + CL}     &    0.967    &  0.971      & 0.643 & \textcolor{lightred}{0.139} &  0.330 & \textbf{0.058} & 0.192 \\
\emph{ + CL ($k=0$)}     & 0.965           & 0.969           & \textcolor{lightred}{\textbf{0.649}} & \textcolor{lightred}{0.140} & 0.325 & 0.051 & \textbf{0.208} \\
			\bottomrule
		\end{tabular}
	\end{table*}

    		\begin{table*}[h!]
		\caption{Comparison for different single-cell models trained on brain SEA-AD dataset. }
        \footnotesize
		\label{tb2}
		\centering
\begin{tabular}{lcccccccc}
\toprule
& \multicolumn{2}{c}{Cell state} & \multicolumn{2}{c}{Alzheimers AUROC} & \multicolumn{2}{c}{Age Spearman $r$} & \multicolumn{2}{c}{Avg performance} \\
\cmidrule(lr){2-3}\cmidrule(lr){4-5} \cmidrule(lr){6-7} \cmidrule(lr){8-9}
Methods  & BAcc. & F1$_\text{macro}$ & Cell & Donor &  Cell & Donor & Score & Rank \\
\midrule
Normalized exp. & 0.798 & 0.738 & 0.571 & 0.611 & 0.129 & 0.511 & 0.560 & 9.00 \\
scGPT & 0.784 & 0.693 & 0.549 & 0.556 & 0.065 & 0.272 & 0.486 & 12.67 \\
scVI & 0.826 & \textbf{0.740} & 0.631 & \textbf{0.731} & 0.201 & 0.502 & 0.605 & 7.00 \\
\midrule
MAE (MR 25\%) & \textbf{0.841} & 0.737 & 0.667 & 0.699 & 0.483 & 0.575 & 0.667 & 4.33 \\
\emph{ -- MR 10\%}  & 0.837 & 0.726 & 0.543 & 0.536 & 0.449 & 0.399 & 0.582 & 10.67 \\
\emph{ -- MR 50\%} & 0.839 & 0.738 & 0.574 & 0.591 & 0.516 & 0.536 & 0.632 & 6.17 \\
MDLM     & 0.831 & 0.719 & 0.667 & 0.686 & 0.543 & 0.622 & 0.678 & 6.83 \\
Dynamic MR & 0.838 & 0.735 & 0.662 & 0.694 & 0.444 & 0.446 & 0.636 & 7.50 \\
CL-MAE & 0.838 & 0.729 & \textcolor{lightred}{\textbf{0.687}} & \textcolor{lightred}{0.722} & 0.462 & 0.484 & 0.654 & 5.83 \\
\textbf{R$^2$MAE} (Ours) & 0.840 & 0.737 & \textcolor{lightred}{\textbf{0.687}} & \textcolor{lightred}{0.716} & \textcolor{lightred}{\textbf{0.572}} & \textcolor{lightred}{\textbf{0.628}} & \textbf{0.697} & \textbf{2.17} \\
\emph{ + Dynamic MR} & 0.834 & 0.735 & 0.642 & 0.682 & \textcolor{lightred}{0.551} & 0.545 & 0.665 & 6.67 \\
\emph{ + CL} & 0.836 & 0.730 & \textcolor{lightred}{0.684} & \textcolor{lightred}{0.719} & \textcolor{lightred}{0.570} & 0.559 & 0.683 & 4.83 \\
\emph{ + CL ($k=0$)} & 0.837 & 0.734 & \textcolor{lightred}{0.676} & \textcolor{lightred}{0.707} & 0.511 & 0.506 & 0.662 & 6.17 \\
\bottomrule
\end{tabular}
	\end{table*}

\subsection{R$^2$MAE enforces learning multi-scale features and can outperform optimal MR}

We further investigated the mechanism underlying the improvement of R$^2$MAE. On real single-cell data, R$^2$MAE achieves near-optimal reconstruction performance across its entire masking range, whereas models trained with a single, fixed MR are effective only within a narrower range (Tables~7--8). This observation aligns with the intuition that R$^2$MAE enforces learning multi-scale features, thereby enhancing downstream task performance. Intriguingly, at low masking ratios (e.g.,~10\%), R$^2$MAE can even outperform a model trained specifically at that fixed MR on the reconstruction task. In our linear model framework, we found that with appropriate $(p_{\min}, p_{\max})$ settings, R$^2$MAE can surpass optimal fixed MR in terms of test risk across different covariance settings in most cases, even when the optimal MR is mildly misaligned with R$^2$MAE masking range (Tables 5,9). These findings suggest additional beneficial properties of R$^2$MAE that warrant future theoretical research.

\begin{table*}[h!]
\centering
\caption{Normalized test risk of R$^2$MAE (MR range 0.5-0.6) against optimal fixed MR and mean MR settings across different random seeds for Beta covariance and latent space models. The ground truth signal $\bbeta$ is set to be the first eigenvector of covariance $\bSigma$ in all cases. $n=200$, $\gamma=5$.}
\label{tab:seed_comparison_risk}
\footnotesize
\begin{tabular}{lc|cccc|ccc}
\toprule
& \multicolumn{4}{c}{Beta Covariance Model} & \multicolumn{4}{c}{Latent Space Model} \\
\cmidrule(lr){2-5} \cmidrule(lr){6-9}
Seed & Best MR & Min Risk & MR 55\% & \textbf{R$^2$MAE} & Best MR & Min Risk & MR 55\% & \textbf{R$^2$MAE} \\
\midrule
2  & 0.55 & 0.520 & 0.520 & \textbf{0.504} & 0.55 & 0.611 & 0.611 & \textbf{0.599} \\
12 & 0.53 & 0.606 & 0.612 & \textbf{0.599} & 0.65 & \textbf{0.641} & 0.673 & 0.662 \\
22 & 0.58 & 0.626 & 0.631 & \textbf{0.615} & 0.36 & 0.662 & 0.667 & \textbf{0.653} \\
32 & 0.51 & \textbf{0.543} & 0.563 & 0.549 & 0.59 & 0.640 & 0.659 & \textbf{0.634} \\
42 & 0.53 & 0.649 & 0.658 & \textbf{0.645} & 0.38 & 0.640 & 0.643 & \textbf{0.629} \\
\bottomrule
\end{tabular}
\end{table*}

\section{Conclusions}

In this work, we introduced and analyzed a theoretical framework to elucidate mask-based pretraining in large-scale deep learning models. Motivated by this framework, we propose an extremely simple approach \textbf{R$^2$MAE}, which is shown to improve upon state-of-the-art self-supervised image, vision, DNA sequence, and single-cell models by solely modifying the pretraining objective.

\textbf{Limitations.} Explicit characterization of the test risk in more complex model settings (e.g., R$^2$MAE) requires new analysis tools and remains a direction for future research. Potential improvements of R$^2$MAE with dedicated domain-specific designs also remains to be explored. 

\textbf{Broader impact.} We envision that our theoretical framework will serve as a basis for better understanding self-supervised pretraining, one of the most important components in modern deep learning and foundation models. Furthermore, our work addresses a pressing need for building better models towards universal representations, with immediate impact for the (biological) AI community.

\section*{Acknowledgements}
The authors thank Theodor Misiakiewicz, Boris Landa and the anonymous reviewers for helpful discussions and feedback. Y.K. acknowledges support by NIH grants U54AG076043, U54AG079759, P50CA121974, R01GM131642, UM1DA051410, and U01DA053628.

\bibliography{references}
\clearpage
\renewcommand{\contentsname}{\large Appendix Contents}
%\section*{Appendix}
\appendix
\tableofcontents
\section{Additional text}

\subsection{Further discussions on alternative mask pretraining schemes}\label{alternative}

Approaches to improving mask-based pretraining can be broadly divided into two categories. The first category focuses on refining the masking scheme itself, i.e., optimizing the selection of pixels/tokens to be masked to maximize pretraining efficacy or downstream performance. Numerous efforts have explored this direction \cite{joshi2020spanbert,raffel2020exploring,levine2020pmi,li2022semmae}. A number of these schemes assume specific data structures, such as sequential information in text, and thus may not readily generalize across all data domains. A prominent recent direction in this category involves learning the masks themselves during training, for instance, by optimizing them to enhance the pretraining objective or, conversely, to adversarially challenge it \cite{chen2023improving,madan2024cl}. Apart from these general enhancement strategies, several works specifically design masking procedures to emphasize specific downstream tasks \cite{gupta2023siamese,bear2023unifying,wang2025dual}. 

\citet{wettig2022should} conducted an extensive evaluation of different masking strategies for BERT masked language models, including a number of those cited above. Their findings highlight that while the optimal masking ratio might vary across strategies, simple uniform random masking often suffices to achieve peak performance. In the single-cell genomics context, \citet{richter2024delineating} evaluated various structured masking schemes (e.g., gene program and transcriptional factor-based masking) against uniform masking, all at a fixed masking ratio. They observed no consistent overall advantage for the more complex, structured masking schemes over uniform random masking. These results from both language and genomics domains align, suggesting that highly sophisticated, domain-specific masking strategies may not always be necessary for effective pretraining. The comparable performance achieved by different masking schemes may serve as a support for the general applicability of our theoretical framework, which is established based on uniform masking.

The second category of approaches focuses on altering the masking rate. Wettig et al. \cite{wettig2022should} observed that higher masking ratios generally boost masked language modeling performance, particularly for larger models. Ankner et al. \cite{ankner2023dynamic} further demonstrated that a dynamic masking schedule, gradually reducing the masking rate from 40\% to 15\%, improves performance, while the reverse schedule does not. A key insight from the influential Masked Autoencoders (MAE) work \cite{he2022masked} is that an extremely high masking rate for images can force the model to learn robust and generalizable representations through the reconstruction task, leading to improved downstream performance. Notably, our theoretical framework highlights a potential limitation of existing mask pretraining schemes: employing a single, static masking strategy—whether it involves carefully designed masking pattern or masking ratio—may not be sufficient to optimally capture the diverse spectrum of features present in data.

\subsection{Background on high-dimensional linear regression}\label{background}
The major theoretical focus in this work is the linear model
\[
\bm y = \bm X \bbeta +\bm \varepsilon,\qquad \bm\varepsilon\sim N(0,\sigma^2 \bI_n),
\]
where \(\bm X\in\mathbb{R}^{n\times d}\) is the design matrix and \(\bm\bbeta\in\mathbb{R}^d\) the true parameter.  For an estimator \(\hat{\bm\bbeta}\), the out-of-sample prediction risk at a fresh covariate \(\bxzero\) admits the bias–variance decomposition
\[
R_{\bm X}(\hat{\bm{\bbeta}}; \bm{\bbeta})
=\underbrace{(\mathbb{E}[\hat\bbeta|\XX]-\bbeta)^T\bSigma(\mathbb{E}[\hat\bbeta|\XX]-\bbeta)}_{\text{Bias}^2}
+\underbrace{\mathrm{Tr}\big[\mathrm{Cov}(\hat{{\bbeta}}|\XX)\bSigma\big]}_{\text{Variance}},
\]
where \(\bSigma=\mathbb{E}[\bxzero \bxzero^T]\) is the population covariance.

In the proportional regime \(d/n\to\gamma\in(0,\infty)\), the ridge regression estimator is of form
\[
\hat\bbeta_\lambda = (\bm X^T \bm X + \lambda \bI)^{-1}\bm X^T \bm y.
\]
For fixed \(\bm X\), its bias and variance are
\begin{align*}
\mathrm{Bias}^2
&= \lambda^2\,\bbeta^T (\bm \XX^T\bm \XX + \lambda \bI)^{-1}\,\bSigma\,(\bm \XX^T \bm \XX + \lambda \bI)^{-1}\,\bbeta,\\
\mathrm{Variance}
&= \frac{\sigma^2}{n}\operatorname{Tr}\bigl[\bSigma(\XX^T\XX + \lambda \bI)^{-2}\XX^T\XX\bigr].
\end{align*}
A streamlined way to capture asymptotics in proportional models is via the theory of {deterministic equivalents} \cite{dobriban2021distributed,misiakiewicz2024non}. Two sequences of (possibly random) matrices \(\bm A_n,\bm B_n\in\mathbb{R}^{n\times n}\) are declared asymptotically equivalent (denoted \(\bm A_n\asymp \bm B_n\)) if for every sequence \(\bm \Theta_n\) bounded in trace norm,
\[
\operatorname{Tr}\bigl[\bm\Theta_n(\bm A_n - \bm B_n)\bigr]\longrightarrow\;0,
\quad n\to\infty.
\]
Within this framework, \citet{rubio2011spectral} showed that the resolvent of the sample covariance \((\hat\bSigma - zI)^{-1}\) is equivalent to a deterministic matrix \((a_n\bSigma - zI)^{-1}\), where \(a_n\) solves an explicit fixed-point equation.  Such equivalences yield precise control over traces of analytic functions of random matrices and underpin modern high-dimensional risk calculations.
The limiting risk of high-dimensional ridge regression has been extensively studied in the past. As a representative result, the exact asymptotic risk has been established in  \cite{dobriban2018high}. We refer to \cite{bloemendal2014isotropic,knowles2017anisotropic} for more results on the local law, which asserts the convergence of the resolvent entrywise under high probability bounds, at scales finer than the global limit.

In \cite{hastie2022surprises}, the authors study the behavior of ridgeless least squares interpolation in high-dimensional settings, where the model interpolates the training data perfectly analogous to overparametrized neural networks. Surprisingly, they show that such interpolating solutions can theoretically generalize well under certain conditions. Their work derives exact asymptotic expressions for the bias and variance of minimum-norm interpolators in the overparameterized regime, using tools from random matrix theory and the theory of deterministic equivalents. The results serve as a basis for understanding neural network behavior from the lens of high-dimensional linear regression theory.

\subsection{Implementations of CL and R$^2$MAE + CL}\label{implement}

Inspired by recent curriculum learning (CL) approaches in MAE \cite{madan2024cl, chen2023improving}, which often involve an adversarial mask generator and an easy-to-hard progression (e.g., by scheduling a gradient coefficient $k$ for the mask generator \cite{madan2024cl}), we evaluated whether these approach improves self-supervised learning for our biological data settings.
Given that our DNA sequence and single-cell gene expression models process entire input sequences rather than patches, we implemented CL by introducing learnable, positive coefficients that modulate the element-wise reconstruction loss for each feature. These coefficients are constrained to have a mean of one.
We then applied a gradient scheduling mechanism to these loss coefficients: one setting used a constant gradient multiplier of $1$ (termed the $k=0$ setting, which learns an easy mask with the smallest loss value throughout pretraining), while another employed a dynamic multiplier decreasing from $1$ to $-1$ to simulate an easy-to-hard progression.
Our initial evaluations showed that the $k=0$ fixed curriculum led to severe learning degeneration and hampered performance (Tables~3, 5), which would be an anticipated outcome.

To address this and integrate these CL principles with our R$^2$MAE framework, we propose to randomly sample masking ratios from a predefined discrete set of $l$ values, e.g., $[\text{min\_ratio}, \dots, \text{max\_ratio}]$.
For each of the $l$ discrete masking ratios, we learn an adaptive, positive weight vector $\bm{w}_j \in \mathbb{R}_+^{d}$ (where $d$ is the feature dimension), which form the columns of a weight matrix $\bm{W} \in \mathbb{R}_+^{d \times l}$.
These weights dynamically adjust the importance of reconstructing each feature under that specific masking ratio $j$.
To ensure balanced learning across features and masking ratios, we impose mass-conservation constraints on $\bm{W}$ such that:
\begin{equation}
    \forall i \in \{1,\dots,d\}, \sum_{j=1}^{l} \bm W_{ij} = l; \quad \forall j \in \{1,\dots,l\}, \sum_{i=1}^{d} \bm W_{ij} = d,
    \label{eq:mass_constraints}
\end{equation}
This approach aims to provide, on average, the same learning signal magnitude per feature, while still allowing each masking ratio to prioritize different feature subsets.
In practice, these constraints are efficiently enforced using a few iterations of the differentiable Sinkhorn algorithm on an initial positive matrix $\bm{W}_0$ \cite{cuturi2013sinkhorn}.
$\bm{W}_0$ itself is generated by a small MLP applied to the full uncorrupted data features (for single-cell models) or the transformer's masked token representations (for DNA sequence models).
Notably, this R$^2$MAE-CL approach effectively resolved the learning degeneration observed in the simpler $k=0$ setting without requiring additional regularizations \cite{madan2024cl} (Tables~1--4,6).

\section{Additional theoretical results and proofs}
In this section, with a slight abuse of notation, we denote $\XX_{\text{sub}}$ as $\XX$ for brevity.
\subsection{Statement and proof of technical lemmas}
\begin{lemma}[Bias-variance decomposition for general covariance model]\label{lem:decomp}
The test risk $R_{\tX}(\hat{{\bbeta}}, {\bbeta}) := \EE\left[||\hat{\bbeta}-\bbeta||_\bSigma ^2| \tX\right]$ has the following decomposition $R_{\tX}(\hat{{\bbeta}}, {\bbeta}) = B_{\tX}(\hat{{\bbeta}}, {\bbeta}) 
 + V_{\tX}(\hat{{\bbeta}}, {\bbeta}) $ outside a negligible set, which can be expressed as:
\begin{align}
    B_{\tX}(\hat{{\bbeta}}, {\bbeta}) &= \| \tilde \bPi {\bbeta} + \tX^+ \bu \|_{\bSigma}^2, \\
    V_{\tX}(\hat{{\bbeta}}, {\bbeta}) &= {\sum_{i,j=1}^d \beta_i\beta_j \sum_{a=1}^{\tn} ((\tX^\top)^{+}\bSigma\tX^{+})_{aa}w^{ij}_a} + \sigma^2 \Tr \left((\tX^{\top}\tX)^{+}\bSigma \right)\, ,
\end{align}
where we denote the projection matrix $ \tilde \bPi = \hat\bSigma^+ \hat\bSigma - \bI$, $\hat\bSigma = \tX^\top  \tX$, as well as
\begin{align}
    \mathcal{Z}_a &= \{j\in[d]|\bZ_{aj}=1\}, \quad \mathcal{Z}_a^c = \{j\in[d]|\bZ_{aj}=0\},\quad \forall a\in[\tn]; \\
    \bu&\in \mathbb{R}^\tn,\quad u_i = \tX_{i,\mathcal{Z}_i}\bSigma^{-1}_{\mathcal{Z}_i\mathcal{Z}_i}\bSigma_{\mathcal{Z}_i\mathcal{Z}_i^c} \bbeta_{\mathcal{Z}_i^c},\quad \forall i\in[\tn];\\
    w^{ij}_a&= \mathbbm{1}_{\tX_{ai=0},\tX_{aj=0}}(\bSigma_{ij}-\bSigma_{i\mathcal{Z}_a}\bSigma_{\mathcal{Z}_a\mathcal{Z}_a}^{-1}\bSigma_{\mathcal{Z}_a j }), \quad \forall a\in[\tn],\, i,j\in[d].
\end{align}

\end{lemma}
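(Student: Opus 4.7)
The plan is to compute $\EE[\hat\bbeta\mid\tX]$ and $\Cov(\hat\bbeta\mid\tX)$ in closed form and then invoke the pointwise conditional bias--variance identity $R_{\tX}(\hat\bbeta;\bbeta)=\|\EE[\hat\bbeta\mid\tX]-\bbeta\|_{\bSigma}^{2}+\Tr(\Cov(\hat\bbeta\mid\tX)\bSigma)$, which follows entry-wise from the law of total variance applied to $\hat\bbeta-\bbeta$. The ridgeless limit is represented as $\hat\bbeta=\tX^{+}\tilde\by=\tX^{+}(\XX\bbeta+\bepsilon)$, and the ``negligible set'' absorbs the measure-zero events on which this representation, or the Schur-complement formulas used below, degenerate---e.g.\ a realization of $\bZ$ with an all-zero row ($\mathcal{Z}_a=\emptyset$) or a non-generic row space of $\tX$.

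For the bias, independence of $\bepsilon$ from $(\XX,\bZ)$ gives $\EE[\hat\bbeta\mid\tX]=\tX^{+}\EE[\XX\mid\tX]\bbeta$, so the main task reduces to computing $\EE[\XX\mid\tX]$ row by row. Since rows of $\XX$ are i.i.d.\ and $\bZ$ is independent of $\XX$, conditioning only couples entries within the same row: on row $a$ we know $\XX_{a,\mathcal{Z}_a}=\tX_{a,\mathcal{Z}_a}$ exactly, while for $i\in\mathcal{Z}_a^{c}$ the Gaussian (or affine) conditional-mean formula gives $\EE[\XX_{ai}\mid\tX]=\bSigma_{i\mathcal{Z}_a}\bSigma_{\mathcal{Z}_a\mathcal{Z}_a}^{-1}\tX_{a,\mathcal{Z}_a}^{\top}$. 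Summing $\beta_i\EE[\XX_{ai}\mid\tX]$ over $i$ and splitting the sum into $\mathcal{Z}_a$ and $\mathcal{Z}_a^{c}$, the observed part reproduces $(\tX\bbeta)_a$ (since $\tX_{ai}=0$ on $\mathcal{Z}_a^{c}$) while the unobserved part collapses precisely into $u_a$. Hence $\EE[\XX\bbeta\mid\tX]=\tX\bbeta+\bu$, and using that $\tX^{+}\tX=\hat\bSigma^{+}\hat\bSigma$ equals the orthogonal projector onto the row space of $\tX$, I obtain $\EE[\hat\bbeta\mid\tX]-\bbeta=(\tX^{+}\tX-\bI)\bbeta+\tX^{+}\bu=\tilde\bPi\bbeta+\tX^{+}\bu$, which gives the claimed $B_{\tX}$.

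For the variance, I would split $\Cov(\hat\bbeta\mid\tX)=\tX^{+}\Cov(\XX\bbeta\mid\tX)(\tX^{+})^{\top}+\sigma^{2}\tX^{+}(\tX^{+})^{\top}$. The SVD identity $\tX^{+}(\tX^{+})^{\top}=(\tX^{\top}\tX)^{+}$ delivers $\sigma^{2}\Tr((\tX^{\top}\tX)^{+}\bSigma)$ after tracing against $\bSigma$. For the signal piece, independence of the rows of $\XX$ (and of $\bZ$ from $\XX$) forces $\Cov(\XX\bbeta\mid\tX)$ to be diagonal, with $a$-th entry equal to the Schur-complement conditional variance of $(\XX\bbeta)_a$; the indicators $\mathbbm{1}_{\tX_{ai}=0,\tX_{aj}=0}$ automatically restrict the quadratic form to $i,j\in\mathcal{Z}_a^{c}$, producing $\sum_{i,j}\beta_i\beta_j w_a^{ij}$. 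Trace cyclicity then reorganizes $\Tr(\tX^{+}D(\tX^{+})^{\top}\bSigma)=\sum_a D_{aa}((\tX^{\top})^{+}\bSigma\tX^{+})_{aa}$ for diagonal $D$ and assembles the claimed double sum. The main conceptual obstacle is the conditional-moment step: the explicit $\bu$ and $w_a^{ij}$ formulas require either a Gaussianity hypothesis on the rows $\x_j$ or, equivalently for this proof, affine conditional means and Schur-complement conditional covariances; once these are granted, the rest of the argument is linear algebra and careful bookkeeping of the mask pattern $\{\mathcal{Z}_a\}$.
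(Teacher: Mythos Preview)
Your proposal is correct and follows essentially the same route as the paper: the standard conditional bias--variance identity, row-wise Gaussian conditional moments to compute $\EE[\XX\bbeta\mid\tX]=\tX\bbeta+\bu$ and the diagonal Schur-complement matrix $\Cov(\XX\bbeta\mid\tX)$, and the same pseudoinverse/trace bookkeeping. One small clarification: the paper's ``negligible set'' is not primarily about $\mathcal{Z}_a=\emptyset$ or a non-generic row space of $\tX$, but about the event $\{\XX_{ij}=0\}$ for some $(i,j)$ with $\bZ_{ij}=1$---this is what is needed so that the indicator $\mathbbm{1}_{\tX_{ai}=0}$ can be identified with $\mathbbm{1}_{\bZ_{ai}=0}$ (equivalently, so that observing $\tX$ reveals the mask pattern $\bZ$), which is exactly the step you use implicitly when you condition on $\mathcal{Z}_a$ given only $\tX$.
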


\begin{proof}
From the definition, our test error is given by
$$R_{\tX}(\hat{{\bbeta}};{\bbeta})=\EE\big[(\bxzero^\top\hat{{\bbeta}}-\bxzero^\top{\bbeta})^2\:|\:\tX\big]=\EE\big[\|\hat{{\bbeta}}-{\bbeta}\|_\bSigma^2\:|\:\tX\big],$$
where $\|x\|_\bSigma^2=x^\top\bSigma x.$ Note that we have the bias-variance decomposition
\begin{equation}
R_{\tX}(\hat{{\bbeta}};{\bbeta})=\underbrace{\|\EE(\hat{{\bbeta}}|\tX)-{\bbeta}\|_\bSigma^2}_{B_{\tX}(\hat{{\bbeta}};{\bbeta})}+\underbrace{\mathrm{Tr}[\mathrm{Cov}(\hat{{\bbeta}}|\tX)\bSigma]}_{V_{\tX}(\hat{{\bbeta}};{\bbeta})}.
\end{equation}
Since $\hat{{\bbeta}}=(\tX^\top \tX)^+\tX^\top \by=(\tX^\top \tX)^+\tX^\top (\XX {\bbeta} + \bepsilon)$,
We have
\begin{gather}
\EE[\hat{{\bbeta}}|\tX]-{\bbeta}=\EE[((\tX^\top \tX)^+\tX^\top \XX - \bI) {\bbeta}|\tX]=\tilde \bPi {\bbeta} + (\tX^\top \tX)^+\tX^\top \EE[(\XX - \tX)|\tX] {\bbeta},\\
\Tr\: \Cov(\hat{{\bbeta}}|\tX)=\Tr (\EE[(\hat{{\bbeta}}-\EE[\hat{{\bbeta}}|\tX])(\hat{{\bbeta}}-\EE[\hat{{\bbeta}}|\tX])^\top|\tX]).
\label{Eqs:rela_hbeta}
\end{gather}
Here, $\XX - \tX=\XX\odot (1-\bZ)$.
%Denote the index sets $\mathcal{Z}_i = \{j|\bZ_{ij}=1\}; \mathcal{Z}_i^c = \{j|\bZ_{ij}=0\}$ for any sample $i$.
We notice that the event $\{\tX_{ij}=0\}$ is the same as $\{\bZ_{ij}=0\}$ except for a negligible set, so in the following proof, we take them as two identical events. 
In this case, we have the following two relations: for any sample $i$,
\begin{equation}
(\XX - \tX)_{i,\mathcal{Z}_i} | \tX_{i\cdot} = 0\, ; \, (\XX - \tX)_{i,\mathcal{Z}_i^c} | \tX_{i\cdot} \sim \mathcal{N}( \bSigma_{\mathcal{Z}_i^c\mathcal{Z}_i}\bSigma^{-1}_{\mathcal{Z}_i\mathcal{Z}_i}\tX_{i,\mathcal{Z}_i}, \bSigma_{\mathcal{Z}^c_i\mathcal{Z}^c_i} -\bSigma_{\mathcal{Z}^c_i\mathcal{Z}_i}\bSigma^{-1}_{\mathcal{Z}_i\mathcal{Z}_i}\bSigma_{\mathcal{Z}_i\mathcal{Z}_i^c}).
\end{equation}
Therefore
\begin{equation}
    \tX^\top \EE [(\XX - \tX)|\tX] = \sum_i \tX_{i\cdot}^\top\EE [(\XX - \tX)_{i\cdot}|\tX_{i\cdot}] = \sum_i \bU^i, \, \bU^i_{\mathcal{Z}_i,\mathcal{Z}_i^c} = \tX_{i,\mathcal{Z}_i}^\top\tX_{i,\mathcal{Z}_i}\bSigma^{-1}_{\mathcal{Z}_i\mathcal{Z}_i}\bSigma_{\mathcal{Z}_i\mathcal{Z}_i^c}.
\end{equation}
The remaining entries of $\bU^i$ are equal to zero. 
For the bias term, we have that
\begin{equation}
    B_{\tX}(\hat{{\bbeta}},{\bbeta})=\|\EE(\hat {{\bbeta}} |\tX) - {\bbeta} \|_{\bSigma}^2 = \| \tilde \bPi {\bbeta} + (\tX^\top \tX)^+\sum_i \bU^i {\bbeta} \|_{\bSigma}^2.
\end{equation}

The latter term can further be simplified as 
\begin{equation}
 (\tX^\top \tX)^+\sum_i \bU^i {\bbeta}=\tX^+ \bu, \; \bu\in \mathbb{R}^\tn, u_i = \tX_{i,\mathcal{Z}_i}\bSigma^{-1}_{\mathcal{Z}_i\mathcal{Z}_i}\bSigma_{\mathcal{Z}_i\mathcal{Z}_i^c} \bbeta_{\mathcal{Z}_i^c}.
\end{equation}
For the variance term, we have
\begin{equation}
\begin{aligned}
\EE  [(\XX - \tX){\bbeta} {\bbeta}^\top (\XX - \tX)^\top  |\tX] &=  \sum_{i,j} \beta_i \beta_j \EE[(\XX - \tX)_{\cdot i} (\XX - \tX)_{\cdot j}|\tX]= \sum_{i,j=1}^d \beta_i \beta_j \bW^{ij},\\
% V^i = \diag(\bv^i) :&= \diag_a(\mathbbm{1}_{\tX_{ai=0}}(\bSigma_{ii}-\bSigma_{i\mathcal{Z}_a}\bSigma_{\mathcal{Z}_a\mathcal{Z}_a}^{-1}\bSigma_{\mathcal{Z}_a i}))
% ; \\
\bW^{ij} = \diag_a(w^{ij}_a):&= \diag_a(\mathbbm{1}_{\tX_{ai=0},\tX_{aj=0}}(\bSigma_{ij}-\bSigma_{i\mathcal{Z}_a}\bSigma_{\mathcal{Z}_a\mathcal{Z}_a}^{-1}\bSigma_{\mathcal{Z}_a j }))
. \\
\end{aligned}
\end{equation}
With this relation, we have
\begin{equation}
\begin{aligned}
    V_{\tX}(\hat{{\bbeta}},{\bbeta})&= \Tr[\mbox{Cov}(\hat {{\bbeta}} |\tX)\bSigma]  \\
    &= \Tr (\EE[(\hat{{\bbeta}}-\EE[\hat{{\bbeta}}|\tX])(\hat{{\bbeta}}-\EE[\hat{{\bbeta}}|\tX])^\top \bSigma|\tX]) \\
    &= \Tr \left(\tX^{+} \left(\EE[ (\XX - \tX) {\bbeta} {\bbeta}^\top (\XX - \tX)^\top|\tX]+\sigma^2 \bI\right)(\tX^\top)^{+}\bSigma \right) \\
    &= \Tr \left(\sigma^2 (\tX^{\top}\tX)^{+}\bSigma + \left(\sum_{i,j=1}^d \beta_i\beta_j \bW^{ij}\right) (\tX^\top)^{+}\bSigma\tX^{+}  \right) \\
    &= {\sum_{i,j=1}^d \beta_i\beta_j \sum_{a=1}^{\tn} ((\tX^\top)^{+}\bSigma\tX^{+})_{aa}w^{ij}_a} + \sigma^2 \Tr \left((\tX^{\top}\tX)^{+}\bSigma \right)\, . \\
\end{aligned}
\end{equation}
Combining all results above, we get the final expressions for $B_{\tX}(\hat{{\bbeta}},{\bbeta})$ and $V_{\tX}(\hat{{\bbeta}},{\bbeta})$.
\end{proof}
\begin{lemma}[Deterministic Equivalence For Trace-class Statistics]\label{lem:keylemma}
    For the model $\bSigma = \delta \bv\bv^\top + \bI$, with $\|\bv\|^2 = 1$, we have the following holds with high probability: for $a,b\in\{\tilde\bbeta, \bv\}$, and any $\epsilon >0$, there exists some constant $C$ independent of $n,d$,%$ \lim_{\rho\rightarrow 0}\bbeta^T \rho (\hat \bSigma+\rho \bI)^{-1} \bbeta / \|\bbeta\|^2 = \phi, \lim_{\rho\rightarrow 0}\bbeta^T \rho (\hat \bSigma+\rho \bI)^{-1} \bv = \psi, \Tr (\hat \bSigma^{+}\bSigma) = \bu$.
    \begin{align}
        \left|a^\top \hat\bSigma^+\hat\bSigma b - \int \frac{s}{s+\mu_\star} \mathrm{d} G^{ab}_d(s)\right| &\leq  Cn^{-1/2+\epsilon},\\ 
        % |\bv^\top \hat\bSigma^+\hat\bSigma \bbeta - \int \frac{s}{s+\mu_\star} \mathrm{d} G^{\beta v}_d(s)| &= o(n^{-\epsilon});\\ 
        % |\bv^\top \hat\bSigma^+\hat\bSigma \bv - \int \frac{s}{s+\mu_\star} \mathrm{d} G^{vv}_d(s)| &= o(n^{-\epsilon});\\ 
        \left|\Tr (\hat \bSigma^{+}\bSigma) - \frac{\int \frac{s}{(s+\mu_\star)^2} \mathrm{d} \tilde H_d(s) + \delta \int \frac{s}{(s+\mu_\star)^2} \mathrm{d} G_d^{vv}(s)}{\frac{\tn}{d} - \int \frac{s^2}{(s+\mu_\star)^2} \mathrm{d} \tilde H_d(s)}\right| &\leq Cn^{-1/2+\epsilon}.
    \end{align}

\end{lemma}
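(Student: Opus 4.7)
The plan is to derive both estimates from the anisotropic local law for the resolvent of the sample covariance matrix $\hat\bSigma = \tX^\top \tX$, whose rows are i.i.d.\ with covariance $\tSigma$, using the machinery of \cite{bloemendal2014isotropic,knowles2017anisotropic}. I would work with the regularized resolvent $\hat\bSigma_\lambda := \hat\bSigma + \lambda \bI$ and recover the ridge-less pseudoinverse quantities by sending $\lambda \to 0^+$; the strictly overparametrized regime $\tilde\gamma > 1$ is what guarantees a uniform spectral gap of $\hat\bSigma$ at zero with high probability, so the limit is well controlled. Throughout I read the measures $G^{ab}_d$ and $\tilde H_d$ as the spectral measure of $\tSigma$ associated with the pair of vectors $(a,b)$ and the empirical spectral distribution of $\tSigma$, respectively, so that (up to the normalization implicit in the statement) $\int f(s)\,dG^{ab}_d(s) = a^\top f(\tSigma)b$ and $\int f(s)\,d\tilde H_d(s)$ plays the role of a trace against $\tSigma$.

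For the first bound I would start from the algebraic identity
\[
a^\top \hat\bSigma^+ \hat\bSigma\, b \;=\; a^\top b \;-\; \lim_{\lambda \to 0^+} \lambda\, a^\top \hat\bSigma_\lambda^{-1}\, b.
\]
The anisotropic local law then provides, uniformly for $\lambda$ in a small real neighbourhood of the origin,
\[
a^\top \hat\bSigma_\lambda^{-1}\, b \;=\; a^\top \bigl( \mu(\lambda)\,\tSigma + \lambda\,\bI \bigr)^{-1} b \;+\; O(n^{-1/2+\epsilon}),
\]
where the scalar $\mu(\lambda)$ solves a Marchenko--Pastur-type fixed-point equation that, in the $\lambda \to 0^+$ limit, reduces to $\tn = \Tr\bigl(\tSigma(\tSigma + \mu_\star \bI)^{-1}\bigr)$ of Corollary \ref{cor} (so $\mu_\star = \lambda_\star$). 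Passing to the limit and using the elementary identity $\bI - \mu_\star(\tSigma+\mu_\star\bI)^{-1} = \tSigma(\tSigma+\mu_\star\bI)^{-1}$ identifies the bilinear form with $a^\top \tSigma(\tSigma+\mu_\star\bI)^{-1} b = \int \frac{s}{s+\mu_\star}\, dG^{ab}_d(s)$, which is exactly the claim.

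For the trace estimate I would use the identity $\hat\bSigma^+ = \lim_{\lambda\to 0^+} \hat\bSigma\, \hat\bSigma_\lambda^{-2}$, which recasts $\Tr(\hat\bSigma^+ \bSigma)$ as the ridge-less limit of $\Tr(\hat\bSigma\, \hat\bSigma_\lambda^{-2}\bSigma)$. The deterministic equivalent for this trace-class statistic is obtained by differentiating the local law in $\lambda$: the numerator produced is $\Tr\bigl(\bSigma\tSigma(\tSigma+\mu_\star\bI)^{-2}\bigr)$, which under $\bSigma = \bI + \delta\bv\bv^\top$ splits cleanly into $\int \frac{s}{(s+\mu_\star)^2}\, d\tilde H_d(s) + \delta \int \frac{s}{(s+\mu_\star)^2}\, dG^{vv}_d(s)$, while the denominator $\tn/d - \int \frac{s^2}{(s+\mu_\star)^2}\, d\tilde H_d(s)$ emerges as the Jacobian factor $-\partial_\lambda \mu(\lambda)|_{\lambda = 0^+}$ via implicit differentiation of the fixed-point equation defining $\mu(\lambda)$, mirroring standard ridge-regression asymptotics \cite{hastie2022surprises,dobriban2018high}.

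The main obstacle I anticipate is controlling the error uniformly as $\lambda \to 0^+$, since the raw anisotropic local law bound deteriorates like $(\eta n)^{-1/2}$ as $\eta$ approaches the spectral edge at zero. I would handle this by invoking the hard-edge rigidity estimates of \cite{bloemendal2014isotropic} to obtain a high-probability lower bound on the smallest nonzero eigenvalue of $\hat\bSigma$ in the strictly overparametrized regime, which pins the error at the claimed rate $n^{-1/2+\epsilon}$; the structural decomposition $\hat\bSigma^+\hat\bSigma = \bI - P_{\ker \hat\bSigma}$ can also be used to bypass the edge for the first statement. A secondary technical issue is the non-Gaussianity of the masked entries of $\tX$, which should be covered by the universality of the anisotropic local law under matching low-order moments and subgaussian tails, combined with the moderate delocalization assumption on $\bv$ referenced after Theorem \ref{thm:spiked}.
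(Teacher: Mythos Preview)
Your proposal is correct and follows essentially the same approach as the paper: both use a deterministic equivalent for the regularized resolvent (the paper via \cite[Theorem~4]{misiakiewicz2024non} rather than the anisotropic local law of \cite{bloemendal2014isotropic,knowles2017anisotropic}, but these play the same role here), pass to the ridge-less limit $\lambda\to 0^+$ using a high-probability lower bound on the smallest nonzero singular value of $\tX$, and obtain the trace statistic by the derivative/implicit-function argument you describe. The paper is slightly more explicit about balancing the two error sources (choosing $\lambda=\tn^{-1-\epsilon/3}$) and about bounding $|\lambda_\star-\mu_\star|$ via monotonicity of the fixed-point equation, but your outline anticipates all of these steps.
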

\begin{proof}

Since $\tX$ has i.i.d.~rows and each row is $\|\tilde\bSigma\|$-subgaussian, by the deterministic equivalent of resolvent for random sub-gaussian sample covariance matrices, see e.g. \cite[Theorem~4]{misiakiewicz2024non}, for any two given $D,K>0$, we have that the following holds with probability at least $1 - C\tn^{-D}$ for $\lambda = \Omega(\tn^{-K})$:
\begin{equation}
  \left|\lambda a^\top \left({\tX^\top \tX}/{\tn} +\lambda \bI\right)^{-1} b - \lambda_\star a^\top (\lambda_\star \bI +\tilde\bSigma)^{-1} b\right|
  \lesssim \frac{\operatorname{polylog}(\tn)}{\sqrt{\tn}(\lambda\tn)^{5/2}}\cdot\lambda_\star |a^\top(\tilde{\bSigma}+\lambda_\star)^{-1}b| \,  ,
\end{equation}
where $\lambda_\star$ is the unique solution of the fixed point equation
$$
\tn-\frac{\lambda\tn}{\lambda_\star}=\mathrm{Tr}\big(\tilde{\bSigma}(\tilde{\bSigma}+\lambda_\star \bI)^{-1}\big)\, .
$$
We also notice that for the eigenvalue decomposition $\tX^\top \tX/\tn = \bU D_{\tX} \bU^\top$,
\begin{align}\label{Eq:pseudocontrol1}
  \left|\lambda a^\top ({\tX^\top \tX}/{\tn} +\lambda \bI)^{-1} b -  a^\top (\bI -\hat\bSigma^+ \hat\bSigma) b\right|
  &\leq  \lambda |a^\top \bU(\lambda \bI + D_{\tX})^{-1}\mathbbm{1}_{D_{\tX}>0} \bU^\top b|  \leq \frac{\lambda}{\sigma_{\min} ( \tX)^2/\tn}  
  \,  ,
\end{align}
where $\sigma_{\min}$ represents the smallest non-zero singular value.
It is standard using concentration on random subgaussian matrices, see e.g. \cite{yaskov2014lower}, to get 
$\sigma_{\min} (\tX)/\sqrt{\tn}\geq C(\sigma_{\min} (\tilde\bSigma), \gamma/p)$ with overwhelming probability for $d/\tn = \gamma_n/p \to \gamma/p$ strictly different with $1$. 
Therefore, combining the above results, we get 
\begin{equation}
  \bigl|\lambda_\star a^\top (\lambda_\star \bI +\tilde\bSigma)^{-1} b-  a^\top (\bI -\hat\bSigma^+ \hat\bSigma) b\bigr|
  \lesssim \frac{\operatorname{polylog}(\tn)}{\lambda^{5/2}\tn^{3/2}} + \lambda \,  .
\end{equation}
On the other hand, 
while $\lambda\tn\to 0^+$, the above fixed point equation still makes sense, and $\lambda_\star\to\mu_\star$ such that
$$
\mathrm{Tr}\big(\tilde{\bSigma}(\tilde{\bSigma}+\mu_\star)^{-1}\big)=\tn\, .
$$
We next claim that 
\begin{equation}
|\lambda_\star a^\top (\lambda_\star \bI +\tilde\bSigma)^{-1} b - \mu_\star a^\top (\mu_\star \bI +\tilde\bSigma)^{-1} b|\leq C\lambda\, ,
\end{equation}
while we notice that the the fixed point equation can be equivalently be written as
\begin{align}
    1 - \frac{p}{\gamma} = -\frac{p\lambda}{\gamma \lambda_\star} + \int \frac{\lambda_\star}{s+\lambda_\star} \mathrm{d} \tilde H_d(s) := f(\lambda_\star; \lambda) ,
\end{align}
where $\tilde H_d$ represents the empirical spectral distribution of $\tilde\bSigma$. As a direct consequence, it is not hard to see $\operatorname{supp}(\tilde H_d)\subseteq [(1-p)^2, 1+\delta]$. 
From \cite[Lemma~2.2]{knowles2017anisotropic}, $\lambda_\star$ is nonnegative and monotone increasing in $\lambda$.
Therefore, $f(x; \lambda)$ is non-decreasing on $(0,\infty)$ for any $\lambda\ge 0$ with $\lim_{x\to \infty} f(x;\lambda) =1$, $\lim_{x\to 0} f(x;0)=0$, $\lim_{x\to 0} f(x;\lambda) = -\infty$ for any $\lambda>0$. 
We also have the natural upper and lower bound as $\underline{f}(x;\lambda)\leq f(x;\lambda)\leq \bar f(x;\lambda)$, where $\underline{f}(x;\lambda)$ is the same as $f(x;\lambda)$ while changing $\tilde H_d$ to the dirac measure at ${(1-p)^2}$, and $\bar f(x;\lambda)$ is the same as $f(x;\lambda)$ while changing $\tilde H_d$ to the dirac measure at ${1+\delta}$. So there exists some constant $C>0$, such that $C^{-1}\leq \mu_\star \leq \lambda_\star \leq C$. This further implies that $\partial_{\lambda} f(x;\lambda)$ is uniformly bounded. Finally
\begin{align}
    \partial_x f(x;\lambda) =  \frac{p\lambda}{\gamma x^2} + \int \frac{s}{(x+s)^2} \mathrm{d} \tilde H_d(s) ,
\end{align}
for $s\in[(1-p)^2,1+\delta]$ and $x\in[C^{-1}, C]$, the above is bounded away from zero and above. Therefore, there exists another constant $\tilde C$ such that $\tilde C^{-1} \leq \partial_x f(x;\lambda) \leq \tilde C$. To be concise, we replace $C$ by $\max\{C,\tilde C\}$. Utilizing the implicit function theorem, we can get $|\partial_\lambda f(x;\lambda)|\leq C$ for $\lambda\in[0,1]$, and therefore $|\lambda_\star - \mu_\star|\leq C\lambda$. Combining all the above arguments, we would have
\begin{equation}
|\lambda_\star a^\top (\lambda_\star \bI +\tilde\bSigma)^{-1} b - \mu_\star a^\top (\mu_\star \bI +\tilde\bSigma)^{-1} b|\leq C\lambda\, .
\end{equation}

Taking $\lambda = \tn^{-1-\epsilon/3}$, the right-hand side turns to 0 with the speed at least $\tn^{-1/2+\epsilon}$, so we can get the final control 
\begin{equation}
    \left|a^\top \hat\bSigma^+\hat\bSigma b -a^\top \tilde\bSigma(\mu_\star \bI +\tilde\bSigma)^{-1}b\right|\leq Cn^{-1/2+\epsilon}.
\end{equation}
The remaining part of the proof for $\Tr (\hat\bSigma^{+}\bSigma )$ is analogous to the result above using \cite[Theorem~4, (46)]{misiakiewicz2024non}, so we omit the full proof here.
\end{proof}

\subsection{Proof of Theorem \ref{thm:iso}}\label{proof:theorem 1}
\begin{proof}
One important observation here is that the bias term and the variance term are homogeneous for $\|{\bbeta}\|^2$, so below we assume $\|{\bbeta}\|=1$ without loss of generality. Since we are under the isotropic setting,
$\bSigma_{\mathcal{Z}\mathcal{Z}^c}=0$ for any indices set $\mathcal{Z}$. Also, $\tX$ has i.i.d.~elements with variance $1-p$. By Lemma \ref{lem:decomp}, the bias term is simply given by
\begin{equation}
    B_{\tX}(\hat{{\bbeta}}, {\bbeta}) = \|\tilde \bPi {\bbeta}\|_2^2 = {\bbeta}^\top (\hat\bSigma^+ \hat\bSigma - \bI) {\bbeta} ,
\end{equation}
while for the variance term, direct calculation shows 
$w^{ij}_a= \mathbbm{1}_{\tX_{ai=0},\tX_{aj=0}}\delta_{ij}$. Therefore,
\begin{align}
    V_{\tX}(\hat{{\bbeta}}, {\bbeta}) &= {\sum_{i,j=1}^d \beta_i\beta_j \sum_{a=1}^{\tn} ((\tX^\top)^{+}\tX^{+})_{aa}\mathbbm{1}_{\tX_{ai=0},\tX_{aj=0}}\delta_{ij}} + \sigma^2 \Tr \left((\tX^{\top}\tX)^{+}\right) \\
    & = \underbrace{\sum_{i=1}^d \beta_i^2\sum_{a=1}^{\tn} (\tX\tX^{\top})^{+}_{aa} \mathbbm{1}_{\tX_{ai}=0}}_{(I)} + \sigma^2 \Tr \left((\tX\tX^{\top})^{+}\right). 
\end{align}

To deal with the first term $(I)$ in variance, we use the isotropic local law \cite[Theorem~2.5]{bloemendal2014isotropic}. Define $R_\tn(\lambda)=\lambda (\tX\tX^{\top}/\tn+\lambda \bI)^{-1}$, $Q_\tn(\lambda) = \tX\tX^{\top}/n (\tX\tX^{\top}/\tn+\lambda \bI)^{-2}/n$, then $Q_\tn(\lambda) = \partial_\lambda R_\tn(\lambda)/n$. and we consider $\lambda$ such that $0 < \operatorname{Im}(-\lambda) < 1$, $\operatorname{Re}(\lambda) > \tn^{-2/3+\epsilon'}$for some $\epsilon'>0$. we have the following with high probability that
\begin{equation}
|R_\tn(\lambda)_{aa}-m(\lambda)|\leq \sqrt{\frac{\operatorname{Im}(m(\lambda))}{\operatorname{Im}(-\lambda)}\cdot\tn^{-1+\epsilon}}\, ,%\asymp (\kappa+\eta)^{-1/4}\tn^{-1/2},
\end{equation}
 uniformly for all $a\in[\tn]$.
 Using the similar argument as in \cite[A.3]{hastie2022surprises}, for all real $\lambda \ge \tn^{-2/3+\epsilon'}$, we get
 \begin{equation}
|Q_\tn(\lambda)_{aa}-\partial_\lambda m(\lambda)/n|\leq \lambda^{-2} \tn^{-(3-\epsilon)/2}\, ,%\asymp (\kappa+\eta)^{-1/4}\tn^{-1/2},
\end{equation}
 the following holds with high probability
 %\begin{equation}
%     \mathbf{S}\equiv\mathbf{S}(\omega,K):=\left\{z=E+\mathrm{i}\eta\in\mathbb{C}: E\notin[\gamma_-,\gamma_+], 0<\eta\leq\omega^{-1},\tn^{-1+\omega}\leq\kappa\leq\omega^{-1},|z|\geq\omega\right\}.
% \end{equation}
%As a result, we have
\begin{equation}
\begin{aligned}
\bigg|\sum_{i=1}^d \beta_i^2 \cdot\sum_{a=1}^{\tn} \left(Q_\tn(\lambda)_{aa}-\partial_\lambda m(\lambda)/n\right)\mathbbm{1}_{\tX_{ai}=0}\bigg|&\leq \sum_{a=1}^{\tn} |Q_\tn(\lambda)_{aa}-\partial_\lambda m(\lambda)/n|
\leq \tn^{-(1-\epsilon)/2} \lambda^{-2}.
\end{aligned}
\end{equation}
Hoeffding's inequality shows that for arbitrary small $\epsilon>0$, with probability at least $1-2d\exp(-\tn^{\epsilon})$, 
$$\bigg|\sum_{a=1}^{\tn} \mathbbm{1}_{\tX_{ai}=0}-p\tn\bigg|\leq \tn^{(1+\epsilon)/2},$$
and as a direct consequence,
$$
\bigg|\sum_{i=1}^d \beta_i^2\cdot \sum_{a=1}^{\tn} \mathbbm{1}_{\tX_{ai}=0}-p\tn\sum_{i=1}^d \beta_i^2\bigg|\leq\sum_{i=1}^d \beta_i^2\cdot \bigg|\sum_{a=1}^{\tn} \mathbbm{1}_{\tX_{ai}=0}-p\tn\bigg|\leq \tn^{(1+\epsilon)/2}.
$$
% \begin{equation}
%     \PP\left(\sum_{i=1}^d \beta_i^2\cdot \bigg|\frac{1}{\tn} \sum_{a=1}^{\tn} \mathbbm{1}_{\tX_{ai}=0}-p\bigg|\geq t\sum_{i=1}^d \beta_i^2\right)\leq \sum_{i=1}^d \PP\left(\cdot \bigg|\frac{1}{\tn} \sum_{a=1}^{\tn} \mathbbm{1}_{\tX_{ai}=0}-p\bigg|\geq t\right)\leq 2d\exp(-2\tn t^2).
% \end{equation}
% Take $t=\tn^{-1/2+\epsilon}$, we get with high probability, $$ \tn^{-1/2+\epsilon}\sum_{i=1}^d \beta_i^2.$$
Combined with these, we have the following with some absolute constant $C$: 
\begin{align}
\bigg|\sum_{i=1}^d \beta_i^2 \cdot\sum_{a=1}^{\tn} Q_\tn(\lambda)_{aa}\mathbbm{1}_{\tX_{ai}=0}-p\Tr(Q_\tn(\lambda))\bigg|&
\leq C\tn^{-(1-\epsilon)/2} \lambda^{-2}.
\end{align}
similarly as \eqref{Eq:pseudocontrol1}, we have that there exists a constant $C$ only depend on $\gamma_n,p$, such that with high probability,
  $\left|\Tr\left(Q_\tn(\lambda)\right) -  \Tr \left((\tX\tX^{\top})^{+}\right)\right|
   \leq C{\lambda}$. Finally, up to some constant, we can bound the difference between $(I)$ and $p\Tr \left((\tX\tX^{\top})^{+}\right)$ by $\lambda + C\tn^{-(1-\epsilon)/2} \lambda^{-2}$, which converges to $0$ for $\lambda = \tn^{-(1-\epsilon)/6}$.
To sum up, based on Lemma \ref{lem:keylemma} with $\delta = 0$ and $\lambda = \tn^{-2/3+\epsilon'}$, we get that
\begin{equation}
\bigg|\frac{B_{\tX}(\hat{{\bbeta}}, {\bbeta})}{r^2}-\int \frac{\mu_\star}{s+\mu_\star} \mathrm{d} G^{\beta\beta}_d(s)\bigg|\leq Cn^{-2/3+\epsilon'},
\end{equation}
where $\mu_\star$ is the solution of $\tn=\mathrm{Tr}((1+\mu_\star )^{-1}) = d/(1+\mu_\star)$, that is, $\mu_\star = (1-p)(\gamma_n/p - 1)_+ $. Also, $G^{\beta\beta}_d(s)$ is the dirac measure at $1-p$. This suggests
${B_{\tX}(\hat{{\bbeta}}, {\bbeta})}/{r^2}\to (\gamma-p)/\gamma$ almost surely when $\gamma > p$, and ${B_{\tX}(\hat{{\bbeta}}, {\bbeta})}/{r^2}\to 0$ when $\gamma < p$.
It is almost the same for us to use Lemma \ref{lem:keylemma} to get that with high probability,
$$\bigg|\frac{V_{\tX}(\hat{{\bbeta}}, {\bbeta})}{r^2}- \frac{\gamma(p+\kappa)(1-p)}{p (1 - p +\mu_\star)^2 - \gamma (1-p)^2}\bigg|\to 0,$$
which leads to our final result.
\end{proof}

\subsection{Statement and proof of Theorem 2}\label{proof:thm2}

In this section, we first provide the delocalized signal assumption and several definitions, then present the statement and proof of Theorem 2.
%We always assume $\beta=\|\beta\|\cdot\tilde\beta$, where $\|\tilde\beta\|=1$ and $\tilde\beta$ gives the direction of our original signal. Assume $\gamma_n = d/n$ satisfies $\gamma_n\to \gamma\in (0,p)\cup(p,\infty)$.
\begin{assumption}[Delocalized signal]\label{ass1}
$\exists \alpha>0$, such that $\|\bm v\|^4_4=O(d^{-\alpha})$ and $\|\tilde{\bm{\beta}}\|^4_4=O(d^{-\alpha})$.
\end{assumption}
Here we assume that $\bv$ and the direction of $\bm \beta$ should not be too sparse in order to establish concentration properties of the masking process on our signals. This is purely technical, and we can select $\alpha$ sufficiently small to accommodate specific scenarios in the application. 

\begin{definition}[Spiked Covariance Structure]\label{ass2}
For $\bSigma = \bI+\delta \bv\bv^\top$, where $\delta > 0$ and $\|\bv\|^2 = 1$, denote the \textbf{masked covariance} $\tilde{\bSigma} = (1-p)^2\bSigma + p(1-p) \diag(\bSigma)$. In other words, $\tilde{\bSigma} = (1-p) \bI + (1-p)^2 \delta \bv\bv^\top + p(1-p)\delta \diag (\bv\odot \bv)$. Suppose $\tilde\bSigma = \sum_{i=1}^d \tilde\delta_i \bchi_i \bchi_i^\top$ is the spectral decomposition of $\tilde\bSigma$ with $1+\delta\ge \tilde\delta_1\ge \tilde\delta_2\ge \ldots\ge \tilde\delta_d\ge (1-p)^2$. 
We use $\tilde H_d(s) := \frac1d\cdot \sum_{i=1}^d \mathbbm{1}_{s\ge \tilde\delta_i}\, $ to represent the empirical spectral distribution of $\tilde\bSigma$. We also denote the following (signed) empirical measures as
$$
    G^{\beta\beta}_d (s) = \sum_{i=1}^d \langle\tilde\bbeta,\bchi_i\rangle^2 \mathbbm{1}_{s\ge \tilde\delta_i},\; 
    G^{\beta v}_d (s) = \sum_{i=1}^d \langle\tilde\bbeta,\bchi_i\rangle  \langle \bv,\bchi_i\rangle\mathbbm{1}_{s\ge \tilde\delta_i}, \;
    G^{vv}_d (s) = \sum_{i=1}^d \langle \bv,\bchi_i\rangle^2 \mathbbm{1}_{s\ge \tilde\delta_i}. 
$$ 
%     c = \frac{p\delta \cdot \bv^\top\tilde\bbeta}{1+\delta(1-p)},$$ as \lw{Give some explanation here}.
Denote $\mu_\star$ to be the unique non-negative solution of
\begin{align}
    1 - \frac{p}{\gamma} = \int \frac{\mu_\star}{s+\mu_\star} \mathrm{d} \tilde H_d(s)  ,
\end{align}
We then define the predicted bias and variance by
\begin{align}
    \mathcal{B} (\tilde H_d, G^{\beta\beta}_d, G^{\beta v}_d, G^{vv}_d) &:= \int \frac{\mu_\star}{s+\mu_\star} \mathrm{d} G^{\beta\beta}_d(s) + \left(\frac{p\delta \cdot \bv^\top\tilde\bbeta}{1+\delta(1-p)}\right)^2 \cdot \int \frac{s}{s+\mu_\star} \mathrm{d} G^{vv}_d(s)\nonumber\\
    &+ \delta\cdot \left(-\int \frac{\mu_\star}{s+\mu_\star} \mathrm{d} G^{\beta v}_d(s)+\frac{p\delta \cdot \bv^\top\tilde\bbeta}{1+\delta(1-p)}\int \frac{s}{s+\mu_\star} \mathrm{d} G^{vv}_d(s)\right)^2,\\
    \mathcal{V}(\tilde H_d, G^{\beta v}_d, G^{vv}_d)&:= \left(\kappa + p + \frac{p^2\delta \cdot (\bv^\top\tilde\bbeta)^2}{1+\delta(1-p)} \right)\cdot\frac{\int \frac{s}{(s+\mu_\star)^2} \mathrm{d} \tilde H_d(s) + \delta \int \frac{s}{(s+\mu_\star)^2} \mathrm{d} G_d^{vv}(s)}{\frac{\tn}{d} - \int \frac{s^2}{(s+\mu_\star)^2} \mathrm{d} \tilde H_d(s)}.
\end{align}
\end{definition}
\begin{theorem}[Spiked covariance model]\label{thm:spiked}
The test risk \eqref{eq:risk} can be decomposed as $R_{\tX}(\hat{\bm{\beta}}; \bm{\beta}) = B_{\tX}(\hat{{\bbeta}}; {\bbeta}) + V_{\tX}(\hat{{\bbeta}}; {\bbeta})$ with forms of the two terms available in Lemma \ref{lem:decomp}. Suppose Assumption \ref{ass1} holds, and for some arbitrary $\epsilon > 0$ that sufficiently small, assume  $\delta=O(d^{(\alpha-3\epsilon)/2})$ for $\alpha$ given in Assumption \ref{ass1}. then with overwhelming probability, 
\begin{align}
    \left|\frac{B_{\tX}(\hat{\bbeta},\bbeta)}{r^2}- \mathcal{B}(\tilde H_d, G^{\beta\beta}_d, G^{\beta v}_d, G^{vv}_d)\right| &\leq Cd^{-\epsilon};\\
    \left|\frac{V_{\tX}(\hat{\bbeta},\bbeta)}{r^2}- \mathcal{V}(\tilde H_d, G^{\beta v}_d, G^{vv}_d)\right| &\leq Cd^{-\epsilon}.
\end{align}
Furthermore, if we assume $\tilde H_d \Rightarrow H$, $G_d^{\beta\beta} \Rightarrow G^{\beta\beta}$, $G_d^{\beta v} \Rightarrow G^{\beta v}$, $G_d^{vv} \Rightarrow G^{vv}$, then almost surely 
\begin{equation}
    \frac{R_{\tX}(\hat{\bm{\beta}}; \bm{\beta})}{r^2} \rightarrow 
    \mathcal{B}(\tilde H, G^{\beta\beta}, G^{\beta v}, G^{vv})+\mathcal{V}(\tilde H, G^{\beta v}, G^{vv}).
    %\phi_\bbeta + c^2 (1-\phi_v)+ \sigma(c(1-\phi_v)- \psi)^2 +  \bu\left(\sigma^2+ p \|\bbeta\|^2 + cp \bbeta^\top \bv \right).
\end{equation}

\end{theorem}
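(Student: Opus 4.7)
My strategy is to apply Lemma~\ref{lem:decomp} to split the risk into $B_{\tX}/r^2$ and $V_{\tX}/r^2$, then use the spiked structure $\bSigma=\bI+\delta\bv\bv^\top$ together with Sherman--Morrison and the delocalization Assumption~\ref{ass1} to reduce every random object to either a quadratic form in $\tilde\bbeta,\bv$ against $\hat\bSigma^+\hat\bSigma$ or a trace $\Tr(\hat\bSigma^+\bSigma)$, both of which are controlled by Lemma~\ref{lem:keylemma}.

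For the bias, I first rewrite $B_{\tX}/r^2 = \|(\hat\bSigma^+\hat\bSigma-\bI)\tilde\bbeta + \tX^+\bu/r\|_\bSigma^2$, and note that $\|x\|_\bSigma^2=\|x\|^2+\delta(\bv^\top x)^2$. The key simplification is $\bu\approx c r\,\tX\bv$: applying Sherman--Morrison yields
$$u_i \;=\; \frac{\delta}{1+\delta\|\bv_{\mathcal{Z}_i}\|^2}\,(\tX_i\bv)\,(\bv_{\mathcal{Z}_i^c}^\top\bbeta_{\mathcal{Z}_i^c}),$$
and Assumption~\ref{ass1} together with Hoeffding on the Bernoulli mask gives $\|\bv_{\mathcal{Z}_i}\|^2=(1-p)+O(d^{-\alpha/4+\epsilon})$ and $\bv_{\mathcal{Z}_i^c}^\top\bbeta_{\mathcal{Z}_i^c}=p\,r\,(\bv^\top\tilde\bbeta)+O(r d^{-\alpha/4+\epsilon})$ uniformly in $i$. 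The growth condition $\delta=O(d^{(\alpha-3\epsilon)/2})$ then ensures the residual $\bm\eta:=\bu-cr\tX\bv$ satisfies $\|\tX^+\bm\eta\|_\bSigma/r=O(d^{-\epsilon})$ after invoking the smallest singular value bound on $\tX$ that already underpins Lemma~\ref{lem:keylemma}. Substituting $\tX^+\bu\approx cr\bPi\bv$ where $\bPi=\hat\bSigma^+\hat\bSigma$, and using orthogonality of $\bPi$ and $\bI-\bPi$ to kill the cross term in $\|\cdot\|^2$, I obtain
$$\frac{B_{\tX}}{r^2}\approx \tilde\bbeta^\top(\bI-\bPi)\tilde\bbeta \;+\; c^2\,\bv^\top\bPi\bv \;+\; \delta\bigl(-\bv^\top(\bI-\bPi)\tilde\bbeta + c\,\bv^\top\bPi\bv\bigr)^2.$$
Applying Lemma~\ref{lem:keylemma} with $(a,b)\in\{(\tilde\bbeta,\tilde\bbeta),(\tilde\bbeta,\bv),(\bv,\bv)\}$, combined with $\int\!\tfrac{s}{s+\mu_\star}+\int\!\tfrac{\mu_\star}{s+\mu_\star}\equiv 1$ against $G_d^{\beta\beta},G_d^{vv}$, converts each quadratic form into the corresponding integral and reproduces $\mathcal{B}(\tilde H_d,G_d^{\beta\beta},G_d^{\beta v},G_d^{vv})$.

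For the variance, the same Sherman--Morrison manipulation gives
$$w^{ij}_a = \mathbbm{1}_{i,j\in\mathcal{Z}_a^c}\Bigl(\delta_{ij}+\tfrac{\delta}{1+\delta\|\bv_{\mathcal{Z}_a}\|^2}v_iv_j\Bigr),$$
so $\sum_{i,j}\beta_i\beta_j w^{ij}_a$ factors through $\sum_{i\in\mathcal{Z}_a^c}\beta_i^2$ and $(\sum_{i\in\mathcal{Z}_a^c}\beta_iv_i)^2$, which concentrate uniformly to $pr^2$ and $p^2r^2(\bv^\top\tilde\bbeta)^2$ respectively by the same Hoeffding-plus-delocalization argument. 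Since $(\tX^\top)^+\bSigma\tX^+$ is PSD, I pull the uniform error out of $\sum_a M_{aa}f(a)$ via $|\sum_a M_{aa}(f(a)-\bar f)|\le \max_a|f(a)-\bar f|\cdot\Tr(M)$, combine with the $\sigma^2\Tr(\hat\bSigma^+\bSigma)$ term, and apply the second bound of Lemma~\ref{lem:keylemma} to $\Tr(\hat\bSigma^+\bSigma)$, producing $\mathcal{V}(\tilde H_d,G_d^{\beta v},G_d^{vv})$. Finally, the almost-sure statement follows from the high-probability $d^{-\epsilon}$ rates via Borel--Cantelli along polynomial subsequences, combined with weak convergence of the four empirical measures against the bounded continuous functions $\tfrac{s}{s+\mu_\star},\tfrac{\mu_\star}{s+\mu_\star},\tfrac{s}{(s+\mu_\star)^2},\tfrac{s^2}{(s+\mu_\star)^2}$ on the compact support $[(1-p)^2,1+\delta]$.

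The hardest part will be propagating delocalization through the pseudoinverse: one needs uniform (in $a\in[\tn]$) control of $\|\bv_{\mathcal{Z}_a}\|^2$ and $\bv_{\mathcal{Z}_a^c}^\top\bbeta_{\mathcal{Z}_a^c}$, operator-norm control of $\tX^+$, and a careful accounting that the spike level $\delta$ does not eat up the concentration rate, which is exactly what the scaling $\delta=O(d^{(\alpha-3\epsilon)/2})$ is engineered for. A subsidiary difficulty is that Lemma~\ref{lem:keylemma} provides entrywise but not operator-level control of resolvent-like quantities, so the variance step must route through PSD-ness of $(\tX^\top)^+\bSigma\tX^+$ rather than through an operator-norm bound, in order to avoid handling subtle statistical dependence between the mask pattern $\mathcal{Z}_a$ and the row $M_{aa}$ of the pseudoinverse.
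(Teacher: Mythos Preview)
Your proposal is correct and follows essentially the same route as the paper: Sherman--Morrison on $\bSigma_{\mathcal{Z}_i\mathcal{Z}_i}^{-1}$ to get $u_i=\frac{\delta}{1+\delta\|\bv_{\mathcal{Z}_i}\|^2}(\tX_i\bv)(\bv_{\mathcal{Z}_i^c}^\top\bbeta_{\mathcal{Z}_i^c})$, Hoeffding under Assumption~\ref{ass1} to replace the random coefficient by $c$, orthogonality of $\bPi$ and $\bI-\bPi$ to kill the cross term in the Euclidean part of the bias, Lemma~\ref{lem:keylemma} for the three quadratic forms, and the analogous concentration-plus-trace argument for the variance. One quantitative slip to fix: your stated Hoeffding rate $O(d^{-\alpha/4+\epsilon})$ is too weak and would \emph{not} be absorbed by the growth condition $\delta=O(d^{(\alpha-3\epsilon)/2})$; the correct rate is $O(d^{-(\alpha-\epsilon)/2})$, since the sum of squared ranges in Hoeffding is $\sum_j v_j^4=\|\bv\|_4^4=O(d^{-\alpha})$ (and similarly $\sum_j v_j^2\beta_j^2\le\|\bv\|_4^2\|\tilde\bbeta\|_4^2=O(d^{-\alpha})$ by Cauchy--Schwarz), which then gives $\delta\cdot d^{-(\alpha-\epsilon)/2}=O(d^{-\epsilon})$ as required.
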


\begin{proof}
One important observation here is that the bias term and the variance term are homogeneous to $\|\bbeta\|^2$, so below we assume $\|\bbeta\|=1$ without loss of generality.

Here, we have $\bSigma = \delta \bv\bv^\top +  \bI$. In this case, we have 
\begin{equation}
\bSigma_{\mathcal{Z}_i\mathcal{Z}_i} =  \delta \bv_{\mathcal{Z}_i}\bv_{\mathcal{Z}_i}^\top + \bI_{\mathcal{Z}_i}; \quad\bSigma_{\mathcal{Z}_i\mathcal{Z}_i}^{-1} = \bI_{\mathcal{Z}_i} - \frac{\delta}{1+\delta\|\bv_{\mathcal{Z}_i}\|^2}\bv_{\mathcal{Z}_i}\bv_{\mathcal{Z}_i}^\top.
\end{equation}
\begin{align}
u_i &= \tX_{i,\mathcal{Z}_i}\bSigma^{-1}_{\mathcal{Z}_i\mathcal{Z}_i}\bSigma_{\mathcal{Z}_i\mathcal{Z}_i^c} \bbeta_{\mathcal{Z}_i^c}= \tX_{i,\mathcal{Z}_i}(\bI_{\mathcal{Z}_i} - \frac{\delta}{1+\delta\|\bv_{\mathcal{Z}_i}\|^2}\bv_{\mathcal{Z}_i}\bv_{\mathcal{Z}_i}^\top)\delta \bv_{\mathcal{Z}_i}\bv_{\mathcal{Z}_i^c}^\top\bbeta_{\mathcal{Z}_i^c}\nonumber\\
& = \tX_{i,\mathcal{Z}_i}\frac{\delta}{1+\delta \|\bv_{\mathcal{Z}_i}\|^2}\bv_{\mathcal{Z}_i}\bv_{\mathcal{Z}_i^c}^\top\bbeta_{\mathcal{Z}_i^c} = \frac{\delta}{1+\delta \|\bv_{\mathcal{Z}_i}\|^2}\left((1-\bZ_i)\odot \bv\right)^\top\bbeta\cdot\tX_{i}\bv,
\end{align}
So
\begin{equation}
\tX^+ \bu = \tX^+ \diag_i\left(\frac{\delta\left((1-\bZ_i)\odot \bv\right)^\top\bbeta}{1+\delta \|\bv_{\mathcal{Z}_i}\|^2}\right) \tX \bv.
\end{equation}
For $\bbeta$ and $\bv$ satisfies Assumption \ref{ass1}, %$\sqrt{d}\cdot\|\bbeta\|_\infty\leq C$ and $\sqrt{d}\cdot\|\bv\|_\infty\leq C$ , 
utilizing Hoeffding's inequality, we know that with probability $1-\exp(-d^{\epsilon})$, 
$$
\left|\frac{\delta\left((1-\bZ_i)\odot \bv\right)^\top\bbeta}{1+\delta \|\bv_{\mathcal{Z}_i}\|^2}-\frac{p\delta \cdot \bv^\top\bbeta}{1+\delta(1-p)\|\bv\|^2}\right|\leq C \cdot d^{-(\alpha-\epsilon)/2} .
$$
Therefore, denote $ c := {p\delta \cdot \bv^\top\bbeta}/({1+\delta(1-p)})$ and take a union bound, we have the following holds with high probability that
\begin{equation}
\left\|\tX^+ \bu -\frac{p\delta \cdot \bv^\top\bbeta}{1+\delta(1-p)}\hat \bSigma^+ \hat \bSigma \bv\right\| \leq C d^{-(\alpha-\epsilon)/2}\|\hat \bSigma^+ \hat \bSigma \bv\|\leq C d^{-(\alpha-\epsilon)/2}. 
\end{equation}
Therefore we have
\begin{equation}
\begin{aligned}
    B_{\tX}(\hat{{\bbeta}},{\bbeta})=&\left\| \tilde \bPi {\bbeta} + (\tX^\top \tX)^+\sum_i \bU^i {\bbeta} \right\|_{\delta \bv\bv^\top + \bI}^2 \\
    =&\left\| \tilde \bPi {\bbeta} \right\|^2 +\left\| \tX^+ \bu \right\|^2+\delta \left(\bv^\top (\tilde \bPi {\bbeta} + \tX^+ \bu )\right)^2, \\
\end{aligned}
\end{equation}
and by Lemma \ref{lem:keylemma}, we have that for $ a,  b\in\{{\bbeta},\bv\}$,
\begin{align}
\left| a^\top \hat\bSigma^+\hat\bSigma  b - \int \frac{s}{s+\mu_\star} \mathrm{d} G^{ab}_d(s)\right| &\leq  Cn^{-(1-\epsilon')/2},
\end{align}
This leads to the fact that
\begin{align}
    &\left|\left\| \tilde \bPi \bbeta \right\|^2 - \int \frac{\mu_\star}{s+\mu_\star} \mathrm{d} G^{\beta\beta}_d(s)\right| \leq Cn^{-(1-\epsilon')/2},\\ 
    &\left|\left\| \tX^+ \bu \right\|^2- \left(\frac{p\delta \cdot \bv^\top\tilde\bbeta}{1+\delta(1-p)}\right)^2 \cdot \int \frac{s}{s+\mu_\star} \mathrm{d} G^{vv}_d(s)\right|\leq C(n^{-(1-\epsilon')/2}+d^{-(\alpha-\epsilon)/2}),\\
    &\delta\left| \left(\bv^\top (\tilde \bPi \bbeta + \tX^+ \bu )\right)^2-\left(-\int \frac{\mu_\star}{s+\mu_\star} \mathrm{d} G^{\beta v}_d(s)+\frac{p\delta \cdot \bv^\top\tilde\bbeta}{1+\delta(1-p)}\int \frac{s}{s+\mu_\star} \mathrm{d} G^{vv}_d(s)\right)^2\right|\\
    &\leq C\delta(n^{-(1-\epsilon')/2}+ d^{-(\alpha-\epsilon)/2}).
\end{align}
Combined with the result above, for $\delta=O(d^{(\alpha-3\epsilon)/2})$, we finally get
\begin{equation}
    \left|\frac{B_{\tX}(\hat{\bbeta},\bbeta)}{r^2}- \mathcal{B}(\tilde H_d, G^{\beta\beta}_d, G^{\beta v}_d, G^{vv}_d)\right|\leq C d^{-\epsilon}.
\end{equation}

We next consider the variance term. 
For this $\bSigma$ model, we have that
\begin{equation}
\begin{aligned}
        %\mathbbm{1}_{\tX_{ai=0}}(\bSigma_{ii}-\bSigma_{i\mathcal{Z}_a}\bSigma_{\mathcal{Z}_a\mathcal{Z}_a}^{-1}\bSigma_{\mathcal{Z}_a i}) &= \mathbbm{1}_{\tX_{ai=0}}\left(1 + \frac{\sigma v_i^2}{1+\sigma \|v_{\mathcal Z_a}\|^2}\right)\\
        %\rightarrow 1 + \frac{\sigma}{1+\sigma d(1-p)}; \\
        \mathbbm{1}_{\tX_{ai=0},\tX_{aj=0}}(\bSigma_{ij}-\bSigma_{i\mathcal{Z}_a}\bSigma_{\mathcal{Z}_a\mathcal{Z}_a}^{-1}\bSigma_{\mathcal{Z}_a j}) &= \mathbbm{1}_{\tX_{ai=0},\tX_{aj=0}}\left(\delta_{ij} + \frac{\delta v_iv_j}{1+\delta \|\bv_{\mathcal Z_a}\|^2} \right).%\rightarrow \frac{\sigma}{1+\sigma d(1-p)}.
\end{aligned}
\end{equation}
Therefore, using Lemma \ref{lem:decomp}, we have
\begin{equation}
\begin{aligned}
    &V_{\tX}(\hat{{\bbeta}},\bbeta)= {\sum_{i,j=1}^d \beta_i\beta_j \sum_{a=1}^{\tn} ((\tX^\top)^{+}\bSigma\tX^{+})_{aa}w^{ij}_a} + \kappa \Tr \left((\tX^{\top}\tX)^{+}\bSigma \right)\\
    &= {\sum_{i,j=1}^d \beta_i\beta_j \sum_{a=1}^{\tn} ((\tX^\top)^{+}\bSigma\tX^{+})_{aa}\mathbbm{1}_{\tX_{ai=0},\tX_{aj=0}}\left(\delta_{ij} + \frac{\delta v_iv_j}{1+\delta \|\bv_{\mathcal Z_a}\|^2} \right)} + \kappa \Tr \left((\tX^{\top}\tX)^{+}\bSigma \right)\\
    & = \sum_{a=1}^{\tn} ((\tX^\top)^{+}\bSigma\tX^{+} )_{aa}\sum_{i=1}^d \beta_i^2\mathbbm{1}_{\tX_{ai=0}} + \delta \sum_{a=1}^{\tn} ((\tX^\top)^{+}\bSigma\tX^{+} )_{aa}\frac{\sum_{i,j=1}^d \beta_i\beta_j\mathbbm{1}_{\tX_{ai=0},\tX_{aj=0}}v_iv_j}{1+\delta \|\bv_{\mathcal Z_a}\|^2} \\
    &\quad + \kappa \Tr \left((\tX^{\top}\tX)^{+}\bSigma \right)\\
    &= \sum_{a=1}^{\tn} ((\tX^\top)^{+}\bSigma\tX^{+} )_{aa} \left(\frac{\delta \left((1-\bZ_a)^\top (\bbeta\odot \bv)\right)^2}{1+\delta \bZ_a^\top (\bv\odot \bv)}+(1-\bZ_a)^\top (\bbeta\odot \bbeta)\right) + \kappa \Tr \left((\tX^{\top}\tX)^{+}\bSigma \right)\, . \\
\end{aligned}
\end{equation}

Hoeffding's inequality gives
$$
\left|\frac{\delta \left((1-\bZ_a)^\top (\bbeta\odot \bv)\right)^2}{1+\delta \bZ_a^\top (\bv\odot \bv)}- \frac{\delta p^2(\bbeta^\top \bv)^2}{1+\delta (1-p) }\right|\leq C d^{-(\alpha-\epsilon)/2}, 
$$
as well as
$|(1-\bZ_a)^\top (\bbeta\odot \bbeta)-p|\leq d^{-(\alpha-\epsilon)/2},
$
which holds uniformly for every $a\leq \tn$ with probability $1-\exp(-d^{\epsilon})$.  
Therefore, with high probability,
\begin{align}
    \left|\Tr[\mbox{Cov}(\hat {\bbeta} |\tX)\bSigma] - \Tr \left(\hat\bSigma^{+}\bSigma \right)\left(\kappa+ p  + \frac{\delta p^2(\bbeta^\top \bv)^2}{1+\delta (1-p) }\right)\right|\leq \Tr \left(\hat\bSigma^{+}\bSigma \right) \cdot d^{-(\alpha-\epsilon)/2},
\end{align}
Note that by Lemma \ref{lem:keylemma}, we have the following with overwhelming probability that
\begin{equation}
    \left|\Tr (\hat \bSigma^{+}\bSigma) - \frac{\int \frac{s}{(s+\mu_\star)^2} \mathrm{d} \tilde H_d(s) + \delta \int \frac{s}{(s+\mu_\star)^2} \mathrm{d} G_d^{vv}(s)}{\frac{\tn}{d} - \int \frac{s^2}{(s+\mu_\star)^2} \mathrm{d} \tilde H_d(s)}\right| \leq C\tn^{-1/2+\epsilon},
\end{equation}
To sum up, we have
\begin{equation}
    |\Tr[\mbox{Cov}(\hat {\bbeta} |\tX)\bSigma] - \mathcal{V}(\tilde H_d, G^{\beta v}_d, G^{vv}_d) | \leq C d^{-(\alpha-\epsilon)/2}\, ,%\bu\left(\sigma^2+ p \|\bbeta\|^2 + \frac{\sigma p^2(\bbeta^\top \bv)^2}{1+\sigma (1-p) }\right).
\end{equation}
which is equivalent to \begin{equation}
    \left|\frac{V_{\tX}(\hat{\bbeta},\bbeta)}{r^2}- \mathcal{V}(\tilde H_d,  G^{\beta v}_d, G^{vv}_d)\right|\leq C d^{-\epsilon}.
\end{equation}
As a direct consequence, if we have $\tilde H_d \Rightarrow H$, $G_d^{\beta\beta} \Rightarrow G^{\beta\beta}$, $G_d^{\beta v} \Rightarrow G^{\beta v}$, $G_d^{vv} \Rightarrow G^{vv}$, then by the definition of weak convergence, almost surely we have
\begin{equation}
    \frac{R(\hat{{\bbeta}}, {\bbeta})}{r^2} \rightarrow 
    \mathcal{B}(\tilde H, G^{\beta\beta}, G^{\beta v}, G^{vv})+\mathcal{V}(\tilde H, G^{\beta v}, G^{vv}).
    %\phi_\bbeta + c^2 (1-\phi_v)+ \sigma(c(1-\phi_v)- \psi)^2 +  \bu\left(\sigma^2+ p \|\bbeta\|^2 + cp \bbeta^\top \bv \right).
\end{equation}
Here we are able to tell the weak convergence of the (signed) measure $G^{\beta v}_d$ because the total variation of this measure is given by $|G^{\beta v}_d| = \sum_{i=1}^d |\langle\bbeta,\bchi_i\rangle  \langle \bv,\bchi_i\rangle| \leq \sqrt{\sum_{i=1}^d \langle\bbeta,\bchi_i\rangle^2 \cdot \sum_{i=1}^d\langle \bv,\bchi_i\rangle^2} = 1$, so the sequence of $G^{\beta v}_d$ is also tight. See e.g. \cite{herdegen2022vague} for detailed argument. 
\end{proof}

\subsection{Statement and proof of Theorem 3}\label{general}
\begin{theorem}[General covariance model]
For general $\bSigma$, assume $\bbeta$ is an eigenvector of $\bSigma$ with eigenvalue $\eta$. The test risk $R(\hat {\bbeta},\bbeta) := \EE\left[||\hat{\bbeta}-\bbeta||_\bSigma ^2| \tX\right]$ can be expressed as:
\begin{equation}
R_{\tX}(\hat{\bbeta};\bbeta)=\underbrace{\| (\tX^+ \tX' - \bI) \bbeta \|_{\bSigma}^2}_{B_x(\hat{\bbeta};\bbeta)}+\underbrace{\sum_{i,j} \beta_i\beta_j \sum_{a=1}^{\tn} ((\tX^\top)^{+}\bSigma\tX^{+})_{aa}w^{ij}_a + \sigma^2 \Tr \left((\tX^{\top}\tX)^{+}\bSigma\right)}_{V_x(\hat{\bbeta};\bbeta)}.
\end{equation}
Here we denote
\begin{align}
    \mathcal{Z}_a &= \{j|\bZ_{aj}=1\}, \quad \mathcal{Z}_a^c = \{j|\bZ_{aj}=0\}; \\
    \tX' &\in \mathbb{R}^{\tilde n \times d},  \quad \tX'_{a,\mathcal{Z}_a} =  \eta \tX_{a,\mathcal{Z}_a}\bSigma^{-1}_{\mathcal{Z}_a\mathcal{Z}_a}, \quad \tX'_{a,\mathcal{Z}_a^c}=0; \\
    w^{ij} &\in \mathbb{R}^{n}, \quad  w^{ij}= \diag_a(\mathbbm{1}_{\tX_{ai=0},\tX_{aj=0}}(\bSigma_{ij}-\bSigma_{i\mathcal{Z}_a}\bSigma_{\mathcal{Z}_a\mathcal{Z}_a}^{-1}\bSigma_{\mathcal{Z}_a j })).
\end{align}

\end{theorem}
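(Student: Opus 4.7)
The plan is to obtain Theorem 3 as a direct corollary of Lemma \ref{lem:decomp} by exploiting the eigenvector hypothesis $\bSigma\bbeta=\eta\bbeta$ to simplify the bias term; the variance term in Theorem 3 is already identical to the variance expression of Lemma \ref{lem:decomp}, so no work is needed there.

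For the bias, I would start from the decomposition of Lemma \ref{lem:decomp},
\begin{equation*}
B_{\tX}(\hat{\bbeta};\bbeta)=\|\tilde\bPi\bbeta+\tX^{+}\bu\|_{\bSigma}^{2},
\end{equation*}
and rewrite the inner vector in the form $\tX^{+}(\tX\bbeta+\bu)-\bbeta$. To see this, I would use the identity $\tX^{+}\tX=\hat\bSigma^{+}\hat\bSigma$, which follows because $\tX^{+}\tX$ is the orthogonal projector onto the row space of $\tX$, as is $(\tX^{\top}\tX)^{+}(\tX^{\top}\tX)$. Thus $\tilde\bPi=\tX^{+}\tX-\bI$ and the bias becomes $\|\tX^{+}(\tX\bbeta+\bu)-\bbeta\|_{\bSigma}^{2}$.

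The next and central step is to identify $\tX\bbeta+\bu$ with $\tX'\bbeta$ as defined in the theorem. Writing the computation row by row and using that $\tX_{a,\mathcal{Z}_{a}^{c}}=0$ on the support of the mask, I would compute
\begin{equation*}
(\tX\bbeta+\bu)_{a}=\tX_{a,\mathcal{Z}_{a}}\bbeta_{\mathcal{Z}_{a}}+\tX_{a,\mathcal{Z}_{a}}\bSigma^{-1}_{\mathcal{Z}_{a}\mathcal{Z}_{a}}\bSigma_{\mathcal{Z}_{a}\mathcal{Z}_{a}^{c}}\bbeta_{\mathcal{Z}_{a}^{c}}=\tX_{a,\mathcal{Z}_{a}}\bSigma^{-1}_{\mathcal{Z}_{a}\mathcal{Z}_{a}}(\bSigma\bbeta)_{\mathcal{Z}_{a}}.
\end{equation*}
Invoking the eigenvector hypothesis $\bSigma\bbeta=\eta\bbeta$ collapses $(\bSigma\bbeta)_{\mathcal{Z}_{a}}$ to $\eta\bbeta_{\mathcal{Z}_{a}}$, which is exactly $(\tX'\bbeta)_{a}$ by the definition of $\tX'$. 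Hence $\tilde\bPi\bbeta+\tX^{+}\bu=(\tX^{+}\tX'-\bI)\bbeta$, and substituting back yields the claimed bias.

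There is no real obstacle here: the statement is essentially a reformatting of Lemma \ref{lem:decomp} under the additional eigenvector assumption, and the only nontrivial algebraic move is the row-wise completion $\bSigma_{\mathcal{Z}_{a}\mathcal{Z}_{a}}\bbeta_{\mathcal{Z}_{a}}+\bSigma_{\mathcal{Z}_{a}\mathcal{Z}_{a}^{c}}\bbeta_{\mathcal{Z}_{a}^{c}}=(\bSigma\bbeta)_{\mathcal{Z}_{a}}$, whose validity I would note explicitly before applying the eigenvector relation. The variance statement is then obtained verbatim from Lemma \ref{lem:decomp}, with the same $w_{a}^{ij}$ defined by conditional covariances of the masked block, so no further simplification is possible without further structure on $\bSigma$.
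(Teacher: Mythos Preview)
Your proposal is correct and follows essentially the same approach as the paper: both start from Lemma~\ref{lem:decomp}, rewrite the bias as $\|\tX^{+}(\tX\bbeta+\bu)-\bbeta\|_{\bSigma}^{2}$, and use the eigenvector relation $\bSigma_{\mathcal{Z}_a\mathcal{Z}_a}\bbeta_{\mathcal{Z}_a}+\bSigma_{\mathcal{Z}_a\mathcal{Z}_a^c}\bbeta_{\mathcal{Z}_a^c}=\eta\bbeta_{\mathcal{Z}_a}$ to collapse $\tX\bbeta+\bu$ to $\tX'\bbeta$. Your row-wise factoring through $\bSigma^{-1}_{\mathcal{Z}_a\mathcal{Z}_a}$ to recognize $(\bSigma\bbeta)_{\mathcal{Z}_a}$ is in fact slightly more direct than the paper's route via the intermediate vector $\bu'$, but the argument is the same.
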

\begin{proof}
For the bias term, we have that
\begin{equation}
    \|\EE(\hat \bbeta |\tX) - \bbeta \|_{\bSigma}^2 = \| \tilde \bPi \bbeta + \tX^+ \bu \|_{\bSigma}^2, \; \bu\in \mathbb{R}^\tn, u_i = \tX_{i,\mathcal{Z}_i}\bSigma^{-1}_{\mathcal{Z}_i\mathcal{Z}_i}\bSigma_{\mathcal{Z}_i\mathcal{Z}_i^c} \bbeta_{\mathcal{Z}_i^c}.
\end{equation}
In our case, we consider $\bbeta$ to be an eigenvector of $\bSigma$ with eigenvalue $\eta$. That is, we have
\begin{equation}
    \forall \mathcal{Z}_i, \quad \bSigma_{\mathcal{Z}_i\mathcal{Z}_i}\bbeta_{\mathcal{Z}_i} + \bSigma_{\mathcal{Z}_i\mathcal{Z}_i^c}\bbeta_{\mathcal{Z}_i^c} = \eta \bbeta_{\mathcal{Z}_i}.
\end{equation}
Substituting $\bSigma_{\mathcal{Z}_i\mathcal{Z}_i^c}\bbeta_{\mathcal{Z}_i^c}$ with $ \eta \bbeta_{\mathcal{Z}_i} - \bSigma_{\mathcal{Z}_i\mathcal{Z}_i}\bbeta_{\mathcal{Z}_i}$ in $\bu$ leads to
\begin{equation}
    u_i = \eta \tX_{i,\mathcal{Z}_i}\bSigma^{-1}_{\mathcal{Z}_i\mathcal{Z}_i}\bbeta_{\mathcal{Z}_i} - \tX_{i,\mathcal{Z}_i} \bbeta_{\mathcal{Z}_i}.
\end{equation}
Thus we can define $\bu' \in \mathbb{R}^{\tilde n}$, with $\bu'_i = \eta \tX_{i,\mathcal{Z}_i}\bSigma^{-1}_{\mathcal{Z}_i\mathcal{Z}_i}\bbeta_{\mathcal{Z}_i}$. Then we have
\begin{equation}
    \bu = \bu' - \tX\bbeta.
\end{equation}
Plugging $\bu'$ in the bias term, we have that
\begin{equation}
    \|\EE(\hat \bbeta |\tX) - \bbeta \|_{\bSigma}^2 = \| \tX^+ \bu' - \bbeta \|_{\bSigma}^2.
\end{equation}
That is, the term $\tX^+ \tX \bbeta$ is canceled in the bias term. Furthermore, we define $\tX' \in \mathbb{R}^{\tilde n \times d}$, such that $\tX'_{i,\mathcal{Z}_i} =  \eta \tX_{i,\mathcal{Z}_i}\bSigma^{-1}_{\mathcal{Z}_i\mathcal{Z}_i},\tX'_{i,\mathcal{Z}_i^c}=0$. Then the bias term can be written as:
\begin{equation}
    \|\EE(\hat \bbeta |\tX) - \bbeta \|_{\bSigma}^2 = \| (\tX^+ \tX' - \bI) \bbeta \|_{\bSigma}^2.
\end{equation}
The variance term can be directly obtained due to Lemma \ref{lem:decomp}. 
\end{proof}
\begin{remark}
    Comparing the risk terms with standard ridge-less regression, we see that the masking introduces additional variance through the term:
\begin{equation}
    \sum_{a=1}^{\tn} ((\tX^\top)^{+}\bSigma\tX^{+})_{aa}w^{ij}_a
\end{equation}
Here $w^{ij}_a $ can be interpreted as the sum of covariances for pair-wise masked features in $\tX$ conditioning on remaining unmasked features. As for the bias term, for eigenvectors $\bbeta$ with large eigenvalues $\eta$, replacing $\tX$ with $\tX'$ cancels the shrinkage effect in the projection matrix $\tX^+\tX$, which effectively reduces the bias term. In summary, if there are significant dependency between features in $\XX$ resulting in small $w^{ij}$, then a reduced risk of the masked regression compared to ordinary ridgeless regression is anticipated due to the bias term. 
\end{remark}

\section{Experimental details}\label{exp}
\subsection{Simulations}
Across all simulations, input data matrices $\bm{X} \in \mathbb{R}^{n \times d}$ had rows sampled i.i.d.\ from $\mathcal{N}(\bm{0}, \bSigma)$, where the specific covariance $\bSigma$ and dimensions $(n, d)$ varied by experiment. Target values $\bm{y} \in \mathbb{R}^n$ were generated as $\bm{y} = \bm{X\beta} + \bm{\epsilon}$, with noise $\epsilon_i \sim \mathcal{N}(0, 0.04)$. The ground-truth coefficient vector $\bm{\beta} \in \mathbb{R}^d$ was also experiment-specific. For each masking probability $p \in \{0.05, 0.10, \dots, 0.95\}$, masked data $\tX$ and targets $\tilde{\bm y}$ were constructed as per our problem formulation. Regression estimates $\hat{\bm{\beta}}$ were obtained either via the pseudo-inverse $\tX^+ \tilde{\bm y}$ (Figs.~1B--D, 3) or by solving $(\tX^\top \tX+\lambda \bm{I}_d)^{-1}\tX^\top \tilde{\bm y}$ with $\lambda=10^{-6}$ (other figures), with negligible empirical difference between methods. Test risk was computed using $10n$ new test samples, and $\hat\bbeta$ was calculated with $50$ repetitions per $p$ through sampling different $\tX$ from $\XX$. All error bars shown in this work indicate standard deviations.

\textbf{Fig. 1A} Here, $\bSigma = \bm{I}_d$. The vector $\bm{\beta}$ was sampled from a uniform distribution and normalized to $\|\bm{\beta}\|_2^2 = 1$. We evaluated two settings: 1)~Overparametrized: $n=2000, d=5n=10000$ ($\gamma=5$); 2)~Underparametrized: $n=4000, d=0.5n=2000$ ($\gamma=0.5$). The theoretical risk was calculated using the formula in Theorem \ref{thm:iso}.

\textbf{Figs.~1B--D and 3}
The covariance was $\bSigma = \bm{I}_d + \delta \bm{v}\bm{v}^\top$, where $\bm{v} \in \mathbb{R}^d$ was a uniformly sampled vector (Fig. 1B-D) or a all-ones vector, both scaled to have norm 1 (Fig. 3), and $\delta \in \{1,10,100\}$ controlled spike strength. Coefficients $\bm{\beta}$ with norm 1 were generated with $
\bm{\beta}=\cos\theta \bm{v}+\sin\theta \bm{u}
$, where $\bm{u}\propto\bm{b}-\bigl(\bm{b}^{\top}\bm{v}\bigr)\bm{v}$ is the normalized component of a uniformly sampled vector $\bm{b}$ after removing its projection onto $\bm{v}$. Parameters were $n=200, d=5n=1000$. For theoretical risk calculation in these figures, $\lambda_{\star}$ was first obtained from $ \tilde{n} - \lambda_{\text{reg}}/\lambda_{\star} = \operatorname{Tr}(\tilde{\bSigma}(\tilde{\bSigma} + \lambda_{\star} \bm{I}_d)^{-1})$, where $\tilde{\bSigma} = (1-p)^2\bSigma + p(1-p)\operatorname{diag}(\bSigma)$ and $\lambda_{\text{reg}}=10^{-8}$. Then, parameters $(\phi_\beta, \phi_v, \psi, u, c)$ were calculated using Eq.~\eqref{defs}, and the final risk via Corollary~\ref{cor} .

\textbf{Fig.~1E}
The covariance $\bSigma \in \mathbb{R}^{d \times d}$ was constructed by one of three methods. In all cases, $n=500, d=5n=2500$. For each $\bSigma$, $\bm{\beta}$ was an eigenvector of $\bSigma$ corresponding to an eigenvalue at a specific quantile of its spectrum.
\begin{itemize}
    \item \textbf{Uniform:} $\bSigma = \bm{Q} \operatorname{diag}(\lambda_1, \dots, \lambda_d) \bm{Q}^\top$, with $\lambda_i \sim \mathcal U (1,10)$ and $\bm{Q}$ a random orthogonal matrix generated via QR decomposition of a randomly sampled Gaussian matrix.
    \item \textbf{Beta distributed:} Similar to `uniform', except that the eigenvalues were sampled from a $\operatorname{Beta}(2,6)$ distribution then scaled to have min 1 and max 10.
    \item \textbf{Latent space model:} $\bSigma = \bm{I}_d + \bm{W}\bm{W}^\top$, with $\bm{W} \in \mathbb{R}^{d \times q}$ ($q = 0.5 d$) having i.i.d.\ Gaussian entries $\sim N(0,(10-1)/(\sqrt d + \sqrt q)^2)$. This construction ensures that the eigenvalues of $\bSigma$ approximately range from 1 to 10.
\end{itemize}
\textbf{Figs.~1G and 4}
The covariance was $\bSigma = \bm{I}_d + \bm{W}\bm{W}^\top$. $\bm{W} \in \mathbb{R}^{d \times q}=\bm{Q D R}^\top$, where $\bm{Q} \in \mathbb{R}^{d \times d}$ and $\bm{R} \in \mathbb{R}^{q \times q}$ were random orthogonal matrices (Haar distribution via \texttt{scipy.stats.ortho\_group.rvs}), and $\bm{D} \in \mathbb{R}^{d \times q}$ was a diagonal matrix with its $q$ non-zero entries set to a specified eigenvalue (e.g., $100$). Coefficients were $\bm{\beta} = \bm{W}(\bm{I}_q + \bm{W}^\top\bm{W})^{-1}\bm{\theta}$ for a uniformly sampled vector $\bm{\theta} \in \mathbb{R}^q$. Parameters were $n=100, d=5000, q=50$.

\textbf{Tables 5 and 9}
The covariance matrices were the same as those constructed for the Beta-distributed and latent space models in \textbf{Fig.~1E}. In the linear model, the implementation of R$^2$MAE is as follows. We first sample a row-wise masking ratio $p_i \sim \mathcal{U}(p_{\min}, p_{\max})$. This ratio $p_i$ is then used to sample the mask for each row of the data matrix. The resulting masked matrix $\bm X_{\text{sub}}$ is further used to construct $\tX$ and calculate $\tilde{\bm{\beta}}$. Note that the removal probability for each row depends on $1-p_i$, so the simplification of sample size as $np$ from fixed MR settings is no longer applicable. Therefore, to ensure a fair comparison, in the corresponding fixed MR settings, we sample each row with a constant probability equal to the fixed MR. Finally, we test five random seeds for generating the model and masking matrices. Similar to previous experiments, the normalized test risks shown are the average values over 50 runs. For fixed MR settings, we tested MR values $\{0,0.01,0.02,\cdots,0.99\}$. We confirmed that the R$^2$MAE results exactly match those of the fixed MR setting when $p_{\min}=p_{\max}$ and the same random seed is used.

\subsection{Evaluations on trained BERT and MAE models}
For Fig. 1F, BERT fine-tuning accuracies at different masking ratios were obtained from \cite{wettig2022should} (data sourced from the GitHub repository). These reported accuracies were transformed as follows. First, error rates were calculated as (1 - accuracy). To normalize the y-intercepts, each curve was linearly extrapolated to a masking ratio of zero using its values at masking ratios 0.15 and 0.3; each curve was then vertically shifted so that its extrapolated value at 0\% masking ratio became zero. Subsequently, each curve was multiplied by a unique scaling factor to ensure all curves shared a common slope for the line segment connecting their points at masking ratios 0.4 and 0.8. For Figs. 1G and 4, MAE fine-tuning and linear probing accuracies were directly obtained from \cite{he2022masked}.

\subsection{MNIST}

\textbf{Dataset and model architecture.} We used the standard MNIST dataset that consists of 60,000 training and 10,000 test grayscale images of handwritten digits at $28 \times 28$ pixels.  We implemented a three-layer MLP with 784-dimensional input (flattened images), a first hidden layer with variable size (16 for underparameterized setting, 512 for overparametrized setting), a second hidden layer of variable size (16, 32, 64, 256, or 1024 units), and a 784-dimensional output layer. Each hidden layer uses ReLU activation with batch normalization, and the output layer uses sigmoid activation. The training objective is mean squared error (MSE) calculated on the masked pixels.

\textbf{Training procedure.} For each model setting, we tested masking ratios \{0.1, 0.2, 0.3, 0.4, 0.5, 0.6, 0.7, 0.8, 0.9\}. Models were trained for 15 epochs using Adam optimizer (learning rate 0.003, batch size 128). All experiments used PyTorch on a NVIDIA A6000 GPU with fixed random seeds. Each experiment was finished in several minutes.

\textbf{Evaluation.} We performed digit classification for each pretrained model through linear probing. Specifically, we froze the first two layers of each pretrained model and trained only a new classification layer (mapping from the second hidden layer to 10 output classes) using cross-entropy loss. The linear probing classifier was trained for 15 epochs using Adam optimizer (learning rate 0.003). 

\subsection{CelebA}

\textbf{Dataset and model architecture.} The CelebA dataset contains over 200,000 celebrity face images with 40 attribute annotations. Images were resized to $128 \times 128$ pixels using the official training/validation/test split. We used a U-Net with four downsampling blocks in the encoder, a bottleneck, and four upsampling blocks in the decoder with skip connections. Each convolutional block contains two 3×3 convolutional layers with batch normalization and ReLU activation. The training objective is mean squared error (MSE) calculated on the masked pixels.

\textbf{Training procedure.} We tested base channel counts of \{8, 16, 32\} with masking ratios \{0.1, 0.2, 0.3, 0.4, 0.5, 0.6, 0.7, 0.8, 0.9\}. Each model was trained for 10 epochs using Adam optimizer (learning rate 0.001, batch size 256). All experiments used a NVIDIA 6000 GPU with fixed random seeds. Each experiment was finished in one hour.

\textbf{Evaluation.} We tested representation performance through inputting uncorrupted images and evaluating the reconstruction MSE of the output. For linear probing, we extracted the U-Net encoder and bottleneck, froze their weights, and trained a classifier for the 40 CelebA attributes. The classifier included global average pooling, a shared feature extraction layer (512 units with dropout), and 40 independent linear output heads. For each setting, the linear probing classifier was trained for 10 epochs using Adam optimizer (learning rate 0.001) and binary cross-entropy loss.

\subsection{ViT MAE models}
\textbf{Dataset and model architecture.} We used the ViT-base MAE model and the ImageNet-1K training split as pretraining data, following the MAE codebase \cite{he2022masked}. 

\textbf{Mask pretraining schemes.}
The patch tokens in the MAE input sequence are masked by one of the following strategies:
\begin{itemize}
  \item \textbf{Fixed MR}. A constant fraction $\rho$ of patch tokens in the sequence is masked. We tested MR values of $\{0.5, 0.75, 0.9\}$. MR 0.75 is the MAE default.
  \item \textbf{Dynamic MR}. The masking ratio follows a linear decay:
        $\rho_t=\max\left\{\rho_{\min},\,\rho_{\max}-\rho_{\max}\,t\,\lambda_{\mathrm{decay}}\right\}$. Here, $t$ represents the number of training epochs. We set $\rho_{\max} = 0.9, \rho_{\min} = 0.6$, and $\lambda_{\mathrm{decay}}$ is chosen such that $\rho_t$ linearly decays throughout the training. This mimics the scheme proposed in \cite{ankner2023dynamic}.
  \item \textbf{MDLM}. For every mini-batch, a masking ratio $\rho\sim\mathcal{U}(0,1)$ is sampled. We use mean token mask loss for each MR (for all experiments), which is equivalent to $w_t = 1/k$ in the ELBO of \cite{sahoo2024simple}. The implementation is very similar to the standard log-linear schedule of $\alpha(t)$ \cite{sahoo2024simple}.
  \item \textbf{R$^2$MAE}. For every mini-batch, a masking ratio $\rho\sim\mathcal{U}(0.6,0.9)$ is sampled.
\end{itemize}
\textbf{Training procedure and evaluation.} 
We trained all models for 150 epochs with 10 warmup epochs. All models were later fine-tuned on the ImageNet-1K training split and evaluated on the validation split. Other pretraining and fine-tuning configurations exactly follow the instructions in the MAE codebase. Each experiment was performed on one NVIDIA H100 GPU with the same fixed random seed, and each epoch took approximately 0.3 hours.

\subsection{RoBERTa models}

\textbf{Dataset and model architecture.} We used the HuggingFace RoBERTa-medium and RoBERTa-base models, and the 10B token subset of FineWeb (sample-10BT, downloaded from HuggingFace) \cite{penedo2024fineweb} as the training set. Although our implementation differs from \cite{wettig2022should} in its training set and layer-norm design, we found the fine-tuning accuracies to be overall comparable.

\textbf{Mask pretraining schemes.}
The tokens in the input sequence are masked by one of the following strategies:
\begin{itemize}
  \item \textbf{Fixed MR}. A constant fraction $\rho$ of tokens in the sequence is masked. We tested MR values of $\{0.15, 0.4\}$. MR 0.15 is the MLM default, and an MR of 0.4 is recommended in \cite{wettig2022should}.
  \item \textbf{Dynamic MR}. The masking ratio follows a linear decay: 
        $\rho_t=\max\left\{\rho_{\min},\,\rho_{\max}-\rho_{\max}\,t\,\lambda_{\mathrm{decay}}\right\}$. Here, $t$ represents the number of training steps. We set $\rho_{\max} = 0.4, \rho_{\min} = 0.15$, and $\lambda_{\mathrm{decay}}$ is chosen such that $\rho_t$ linearly decays throughout the training. 
  \item \textbf{MDLM}. For every mini-batch, a masking ratio $\rho\sim\mathcal{U}(0,1)$ is sampled.
  \item \textbf{R$^2$MAE}. For every mini-batch, a masking ratio $\rho\sim\mathcal{U}(0.15,0.4)$ is sampled.
\end{itemize}
\textbf{Training procedure and evaluation.} 
RoBERTa-base follows the HuggingFace default setting, while RoBERTa-medium overrides the following parameters: \texttt{vocab\_size=50265, hidden\_size=512, num\_hidden\_layers=8, num\_attention\_heads=8, intermediate\_size=2048, max\_position\_embeddings=514}. We used AdamW optimizer, a max sequence length of 128, an effective batch size of 2048, a weight decay of 0.01, a warmup ratio of 0.03, a learning rate of 7e-4/3e-4 for the RoBERTa-medium/base models, and default linear learning rate decay. Fine-tuning was performed on the GLUE datasets (MNLI, QQP, SST-2, QNLI) for 5 epochs with a learning rate of 2e-5 and a batch size of 32. The average accuracy of three fine-tuning runs was reported as the final accuracy, following \cite{wettig2022should}. Each experiment was performed on one NVIDIA H100 GPU with fixed random seeds, and finished in one day.

\subsection{DNA sequence models}
\textbf{Dataset and model architecture.} We adopted the GPN-MSA \cite{benegas2025dna} framework, which is a 12-layer transformer model with 12 attention heads per transformer layer. \citet{benegas2025dna} curated a training set comprising multiple-sequence alignment (MSA) from human DNA and 89 other species, with careful filtering and biological considerations; please refer to \cite{benegas2025dna} for more details on the model and the training set. Each training sample consists of a 128-base pair (bp) window of human genome and its corresponding MSA, and the pretraining task is to predict the token (A/C/G/T) in the masked locations of human DNA, based on the input of other unmasked locations and the auxiliary MSA information.

\textbf{Mask pretraining schemes.}
Before encoding, the raw input $\bm X$ is corrupted as $\bm X_{\mathrm{mask}}$ by one of the following strategies:
\begin{itemize}
  \item \textbf{Fixed MR}. A constant fraction $\rho$ of base pairs in the sequence is masked. $\rho = 15\%$ corresponds to GPN-MSA default \cite{benegas2025dna}.
  \item \textbf{Dynamic MR}. Same as the fixed MR case, except that the masking ratio follows a linear decay  
        $\rho_t=\max\left\{\rho_{\min},\,\rho_{\max}-\rho_{\max}\,t\,\lambda_{\mathrm{decay}}\right\}$. Here $t$ represents the number of training steps. We set $\rho_{\max} = 0.30, \rho_{\min} = 0.15$ and choose $\lambda$ such that $\rho_t$ linearly decays throughout training.
        \item \textbf{MDLM}.  For every mini-batch, a masking ratio $\rho\sim\mathcal U(0,1)$ is sampled.
  \item \textbf{R$^2$MAE}.  For every mini-batch, a masking ratio $\rho\sim\mathcal U(0.05,0.3)$ is sampled.
    \item \textbf{R$^2$MAE + Dynamic MR}.  For every mini-batch, we sample the masking ratio $\rho\sim\mathcal U(\rho_t,0.3)$, where  $\rho_t=\max\left\{\rho_{\min},\,\rho_{\max}-\rho_{\max}\,t\,\lambda_{\mathrm{decay}}\right\}$. To match masking ratio expectation with the Dynamic MR setting, we set $\rho_{\max} = 0.30, \rho_{\min} = 0.00$ and choose $\lambda$ such that $\rho_t$ linearly decays throughout training. Although we applied early stopping criteria based on validation loss to all models, only this model triggered an early stop, occurring at a masking ratio of $\rho_t=0.10$.
\end{itemize}
For CL approaches, we first describe their combinations with R$^2$MAE, as their standalone implementation results from straightforward simplification of R$^2$MAE + CL schemes.
\begin{itemize}
    \item \textbf{R$^2$MAE + CL (k=0)}.  We implement a token-wise MLP layer with hidden space [128,128] and ReLU activation that projects the transformer-learned masked location representations to output $\tilde P\in\mathbb R^{n_{\mathrm{mask}}\times l}$. Here $l=10$ represents the length of  the pre-defined mask ratio vector. We further implement a row-wise projection layer with sigmoid activation plus 0.5 to obtain strictly positive entries (and to improve optimization stability), which we term as $P\in\mathbb R^{R\times l}$. Finally, we apply $K=10$ iterations of the non-square Sinkhorn operator described in Appendix \ref{implement}:
$M(y)=\operatorname{Sinkhorn}^{(K)}\!\bigl(\sigma(P)\bigr)$. For every mini-batch, we now sample the mask ratio in $[0.05,...,0.30]$ with length 10, and select the column of $M(y)$ based on the selected mask ratio index. Then we multiply this column of $M(y)$ to the element-wise loss for the masked locations. The newly implemented layers are optimized together with the main reconstruction model, after being fixed for 5000 initial training steps.
    \item \textbf{R$^2$MAE + CL}. Apart from the additional components described above, we employed a dynamic multiplier to the gradient received by these additionally implemented layers, decreasing from $1$ to $-1$ throughout its training with a linear decay.
    \item \textbf{CL.} The setting is effectively implemented by always selecting one fixed masking ratio (15\%) in the R$^2$MAE + CL setting.
\end{itemize}

\textbf{Training procedure.} Models were trained for 30000 steps using the defaults in \cite{benegas2025dna} with AdamW optimizer, learning rate 1e-4, and effective batch size 2048. All experiments used PyTorch on 4 NVIDIA 6000 Ada GPUs with fixed random seeds. Each experiment takes 6.5 hours to complete. 

\textbf{Evaluation.} We evaluated different models' performance in zero-shot predictions of pathological missense (Clinvar pathologic versus GnomAD common) and regulatory (OMIM pathologic versus GnomAD common) variants \cite{landrum2020clinvar,smedley2016whole,chen2024genomic}. The inference was performed by \texttt{vep.inference} implemented in \cite{benegas2025dna}. The evaluation sets as well as the scores of alternative models shown in Table 1 \cite{brandes2023genome,rentzsch2021cadd,pollard2010detection,sullivan2023leveraging,nguyen2023hyenadna,dalla2025nucleotide} are provided by the original GPN-MSA work on Huggingface. Partial AUROC (max FPR 0.001) was used in the OMIM evaluation to account for high imbalance of positive and negative classes. 

\subsection{Single-cell gene expression models}
\textbf{Dataset.} We employed the Human Lung Cell Atlas dataset \cite{sikkema2023integrated} and human brain MTG SEA-AD dataset \cite{gabitto2024integrated}. Both datasets were downloaded from the CellXGene portal and were subsetted to 5000 highly variable genes (HVGs) using the default procedure in Scanpy \cite{wolf2018scanpy}. For the HLCA dataset, we further filtered out cells that have fewer than 20 of these HVGs. After preprocessing, these datasets have 2161082 and 1378211 cells respectively, along with metadata of fine-grained cell types, disease/Alzheimer status (Alzheimer's Disease Neuropathologic Change, ADNC), and age labels. 

\textbf{Model architecture.} For all settings that require pretraining from scratch, we implemented a 5-layer MLP encoder-decoder based architecture described as follows. The model receives corrupted count matrix $\bm X_{\mathrm{mask}} \in \mathbb{R}^{n_{ \mathrm{cells}} \times n_{ \mathrm{input}}}$ as input, where $n_{ \mathrm{input}}=5000$:
\begin{itemize}
  \item \textbf{Latent encoder $E_z$} (3-layer-MLP with $n_{\mathrm{hidden}}$ units and ReLU activations, the final layer being a linear projection from $n_{\mathrm{hidden}}$ to $n_{\mathrm{latent}}$) maps logarithmically transformed (corrupted) counts together with the dataset batch covariate to the latent space.
  \item \textbf{Decoder $D$} (2-layer-MLP with $n_{\mathrm{hidden}}$ units and ReLU activations, the final layer being a projection from $n_{\mathrm{hidden}}$ to $n_{\mathrm{input}}$ with softmax activation) receives the embedding, observed library size, and batch covariates and produces negative-binomial parameters $(\mu_g,\theta_g)$ for every gene $g$ \cite{gayoso2022python}. Batch normalizations are used in both encoders and decoders.
  \item \textbf{Objective.} The objective is defined as the average negative reconstruction likelihood for the masked genes in the input data: $ L
  = -\frac{1}{|\mathrm{mask}|}\sum_{g\in\mathrm{mask}}\log\mathrm{NB}(x_g\mid\mu_g,\theta_g)$.
\end{itemize}
In all implemented models, we set $n_{\mathrm{hidden}}=2000$ and $n_{\mathrm{latent}}=1000$. These are much higher values than those of typical scVI models \cite{lopez2018deep,gayoso2022python} and are comparable to latent space sizes in recent single-cell foundation models \cite{cui2024scgpt,rosen2023universal}. 

\textbf{Mask pretraining schemes.}
Before encoding, the raw count matrix $\bm X$ is converted to the masked matrix $\bm X_{\mathrm{mask}}$ by one of following strategies:
\begin{itemize}
  \item \textbf{Fixed MR}. A constant fraction $\rho$ of gene columns in the count matrix sampled once per mini-batch is replaced by zero.
  \item \textbf{Dynamic MR}. Same as the fixed MR case, except that the masking ratio follows a linear decay  
        $\rho_t=\max\left\{\rho_{\min},\,\rho_{\max}-\rho_{\max}\,t\,\lambda_{\mathrm{decay}}\right\}$. Here $t$ represents the number of training steps. Here $\rho_{\max} = 0.5$ and $\rho_{\min} = 0.1$, with $\lambda$ an dataset-specific parameter to enforce linear decay throughout training (1/300000 for HLCA, 1/150000 for SEA-AD).
        \item \textbf{MDLM}. For every mini-batch, a masking ratio $\rho\sim\mathcal U(0,1)$ is sampled.
  \item \textbf{R$^2$MAE}. For every mini-batch, a masking ratio $\rho\sim\mathcal U(0.1,0.5)$ is sampled.
    \item \textbf{R$^2$MAE + Dynamic MR}.  For every mini-batch, we sample the masking ratio $\rho\sim\mathcal U(\rho_t,\rho_{\max})$, where  $\rho_t=\max\left\{\rho_{\min},\,\rho_{\max}-\rho_{\max}\,t\,\lambda_{\mathrm{decay}}\right\}$, with the same parameter selections as the Dynamic MR setting.
\end{itemize}
For CL approaches, we first describe their combinations with R$^2$MAE, as their standalone implementation results from straightforward simplification of R$^2$MAE + CL schemes.
\begin{itemize}
    \item \textbf{R$^2$MAE + CL (k=0)}.  We implement an MLP layer that projects the original count matrix (we pass a transformed version, $\log((\bm X/20)+1)$ in practice) to a two-layer MLP with hidden dims [128, 256] and ReLU activations, reshaped to output $\tilde P\in\mathbb R^{n_{\mathrm{input}}\times 64}$. We further implement a row-wise projection layer with sigmoid activation plus 1e-9 to obtain strictly positive entries, which we term as $P\in\mathbb R^{n_{\mathrm{input}}\times l}$. Finally, we apply $K=4$ iterations of the non-square Sinkhorn operator so that each row sums to $l$ and each column to $n_{\mathrm{input}}$:
$M(y)=\operatorname{Sinkhorn}^{(K)}\!\bigl(\sigma(P)\bigr)$. For every mini-batch, we now sample the mask ratio in the length-$l$ vector $[0.10,0.15,...,0.50]$, and select the column of $M(y)$ based on the sampled mask ratio index. Then the element-wise negative log-likelihood is multiplied by this column of $M(y)$ as the training objective. The newly implemented layers are optimized together after being fixed for 5000 training steps.
    \item \textbf{R$^2$MAE + CL}. Apart from the settings in R$^2$MAE + CL (k=0), we employed a dynamic multiplier to the gradient received by these additionally implemented layers, decreasing from $1$ to $-1$ throughout its training with a linear decay from 30000 to 120000 training steps.
    \item \textbf{CL (k=0), CL.} These settings are effectively implemented by always selecting one fixed masking ratio in the above R$^2$MAE + CL (k=0) and R$^2$MAE + CL settings respectively.
    \item \textbf{scVI.} In this setting, we no longer mask the data, and instead formulate variational posteriors and train the model using the evidence lower bound (ELBO) objective \cite{lopez2018deep,gayoso2022python}.
\end{itemize}

\textbf{Training procedure.} Models were trained for 50 epochs using Adam optimizer (learning rate 1e-3, weight decay 1e-4, batch size 400). 90\% of data were selected as the training set and the remaining 10\% was set as the validation set. All experiments used PyTorch on an NVIDIA 6000 GPU with fixed random seeds. Each experiment takes 1-2 hours to complete.

\textbf{Evaluation.} We evaluated different model embeddings' performance in identifying key metadata across donors through linear probing. Apart from previously described models, we used pretrained scGPT and UCE models to output zero-shot embeddings of the preprocessed datasets \cite{cui2024scgpt,rosen2023universal}. For scGPT, the dataset is further log-normalized according to the instructions \cite{cui2024scgpt}. We also evaluated the linear probing performance of the log-normalized expression itself. We selected all cells from randomly sampled $ \lfloor 0.6 \times \mathrm{Total\; donors}\rfloor$ donors in the dataset as the training set, and the remaining cells as the test set. Reference control donors from the SEA-AD datasets were removed. We further removed data without corresponding metadata for the respective regression/classification tasks. Specifically, we removed cells whose cell type was labeled as Unknown for the fine-grained cell state classification task (HLCA), and removed cell types that do not contain Alzheimer-specific subtypes in the SEA-AD dataset; we removed cells without age labels (HLCA) or containing broad categories instead of exact ages (SEA-AD) for the age regression task. After each removal, we subset the training and test sets so that each donor comprises a maximum of $\frac{200000}{n_{\mathrm{donor}}}$ in the set ($\frac{100000}{n_{\mathrm{donor}}}$ numbers for cell type classification task in SEA-AD), to further balance cell numbers across donors. The training and test sets for cell type classification in HLCA is obtained by removal after subsetting.

We performed ridge regression for regression tasks and logistic regression for classification tasks. GridSearchCV was utilized for selecting the best regularization parameters (\texttt{np.logspace(0,8,20)} to minimize regression MSE, \texttt{np.logspace(-6,2,10)} to maximize classification balanced accuracy), and the training samples were separated by donors for five-fold cross-validation.

Finally, for classification tasks, we evaluated balanced accuracy and macro F1 score on the test set. For regression tasks, we evaluated both cell and donor level (obtained through averaging cell-level score per donor) Spearman $r$. The only exception is for the ADNC classification, where all methods perform poorly in terms of balanced accuracy and macro F1 scores. Therefore, we instead evaluated macro AUROC on both cell and donor levels. 

For those models trained from scratch, we additionally evaluated their performance in reconstructing randomly masked genes in the pretraining validation set. The masking ratios evaluated are [0.1, 0.2, 0.3, 0.5, 0.7]. We calculated Pearson $r$ between the model output $\mu_g$ and log-normalized gene expression per cell, and then averaged the Pearson $r$ over all cells in the validation set (which stays the same across all methods tested).

\section{Appendix Figures}

\begin{figure}[h]
	\centering
	\includegraphics[width=0.9\textwidth]{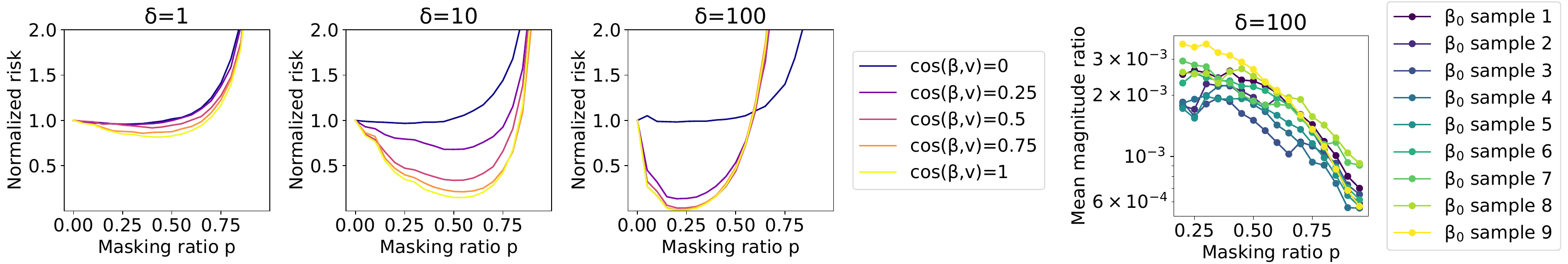}
	\caption{The spiked covariance model ($\bSigma=\delta \bv\bv^\top +\bI$), where $\bv = \mathbbm{1}/\sqrt{d}$. Plots of mean simulation test risk and prediction magnitude ratio ($\EE\big[ \|\XX \hat{{\bbeta}}_0\|^2 |\tX\big]/\EE\big[ \|\XX \hat{{\bbeta}}_1\|^2 |\tX\big]$ between $ \operatorname{cos}(\bbeta_1,\bv) = 1$ and $ \operatorname{cos}(\bbeta_0,\bv) = 0$) over 50 samples in the spiked covariance model against the masking ratio $p$. $n=200,\gamma = 5$.}\label{fig:spike_delta}
\end{figure}

\begin{figure}[h]
	\centering
	\includegraphics[width=0.8\textwidth]{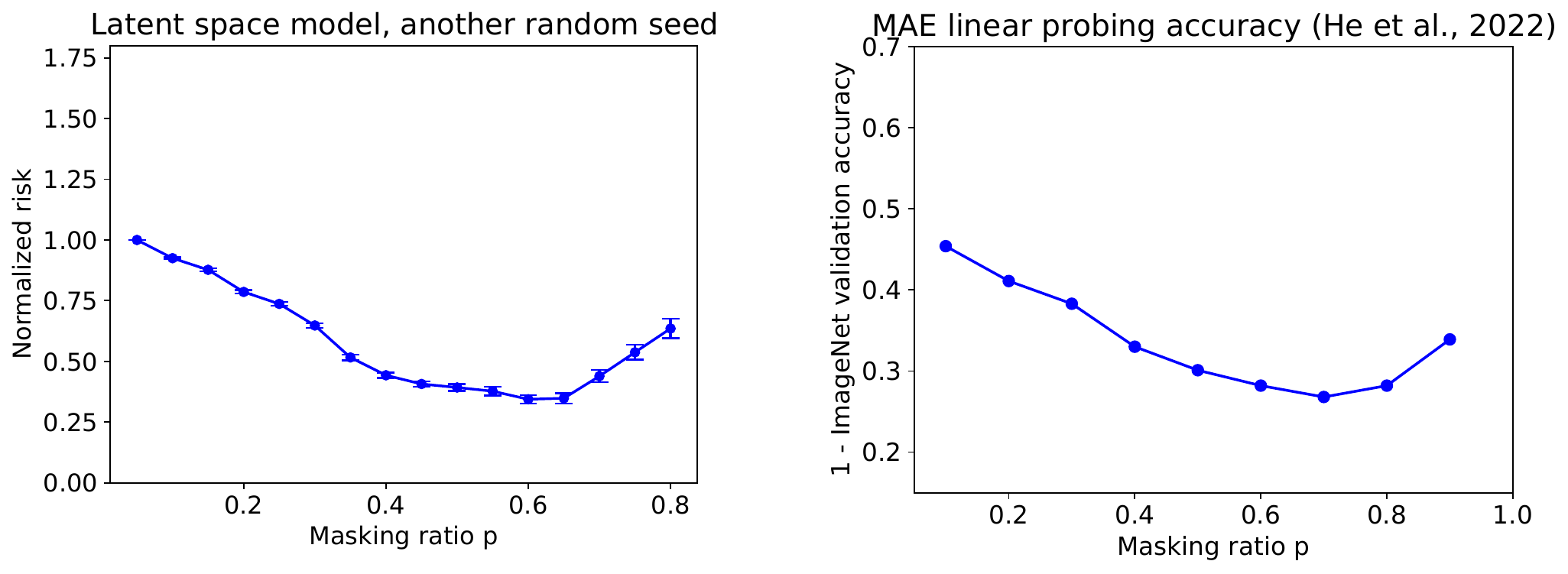}
	\caption{Plots of the normalized test risk of a latent space model and MAE linear probing accuracy on ImageNet1k \cite{he2022masked} against masking ratio. The covariance $\bSigma$ for the latent space model is another sample from the same generative process as in Fig. 1G. Note that there is a slight horizonal shift between the two curves. }\label{fig:latent_plateau}
\end{figure}

\begin{figure}[h]
	\centering
	\includegraphics[width=0.8\textwidth]{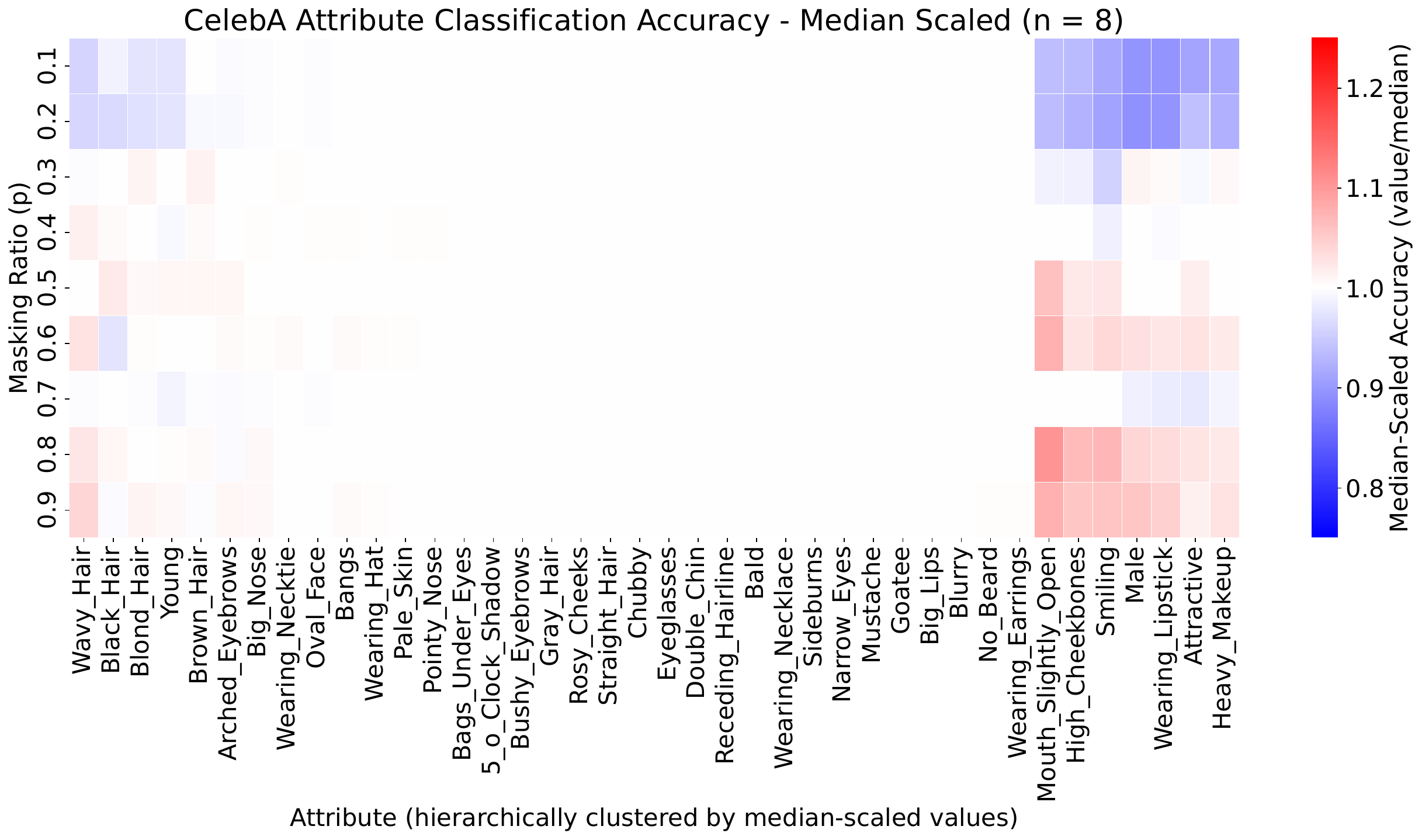}
	\caption{Median scaled accuracy of U-Net models (base channel = 8) on CeleBA classification tasks.}\label{fig:celeba_median_8}
\end{figure}

\begin{figure}[h]
	\centering
	\includegraphics[width=0.8\textwidth]{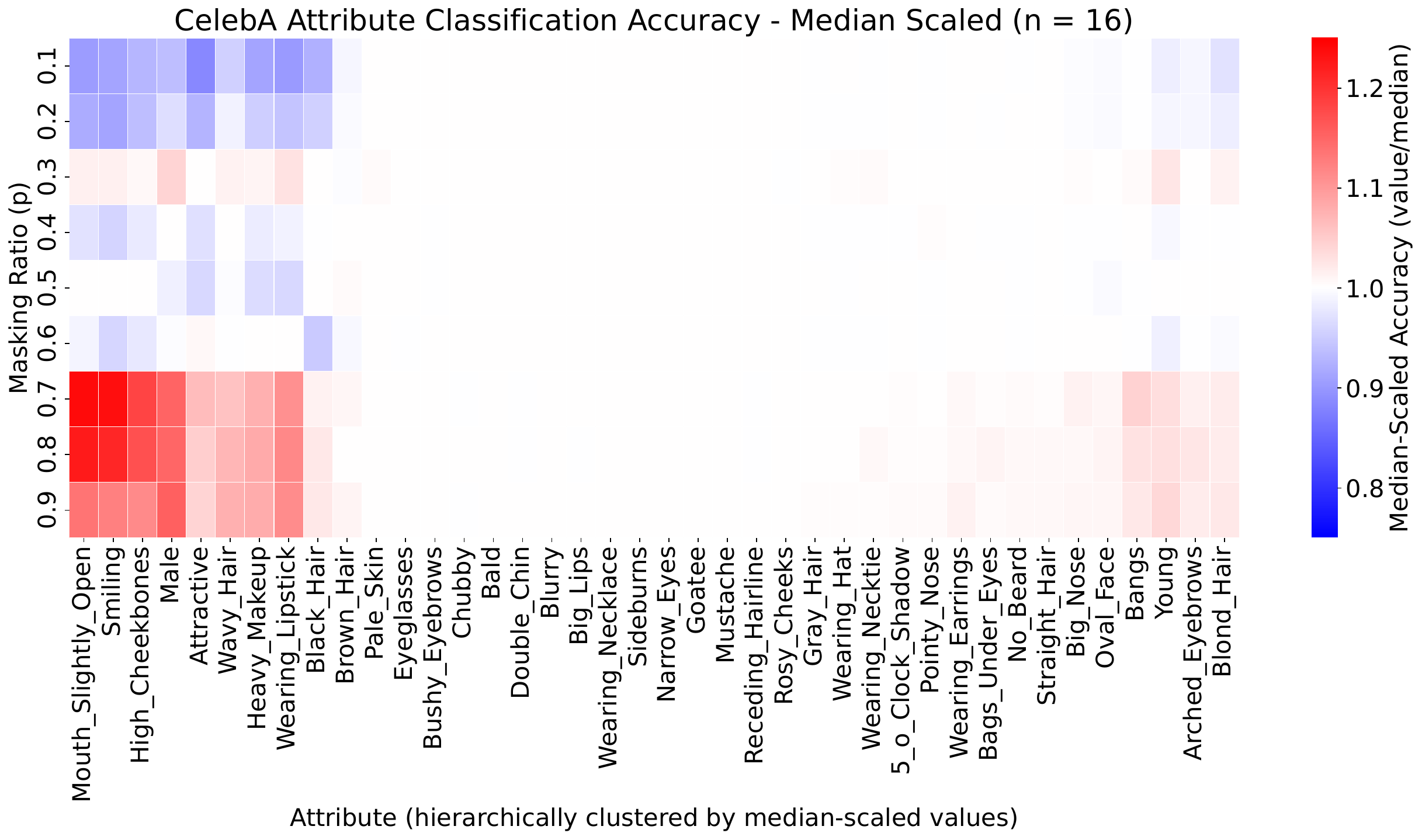}
	\caption{Median scaled accuracy of U-Net models (base channel = 16) on CeleBA classification tasks.}\label{fig:celeba_median_16}
\end{figure}

\begin{figure}[h]
	\centering
	\includegraphics[width=0.8\textwidth]{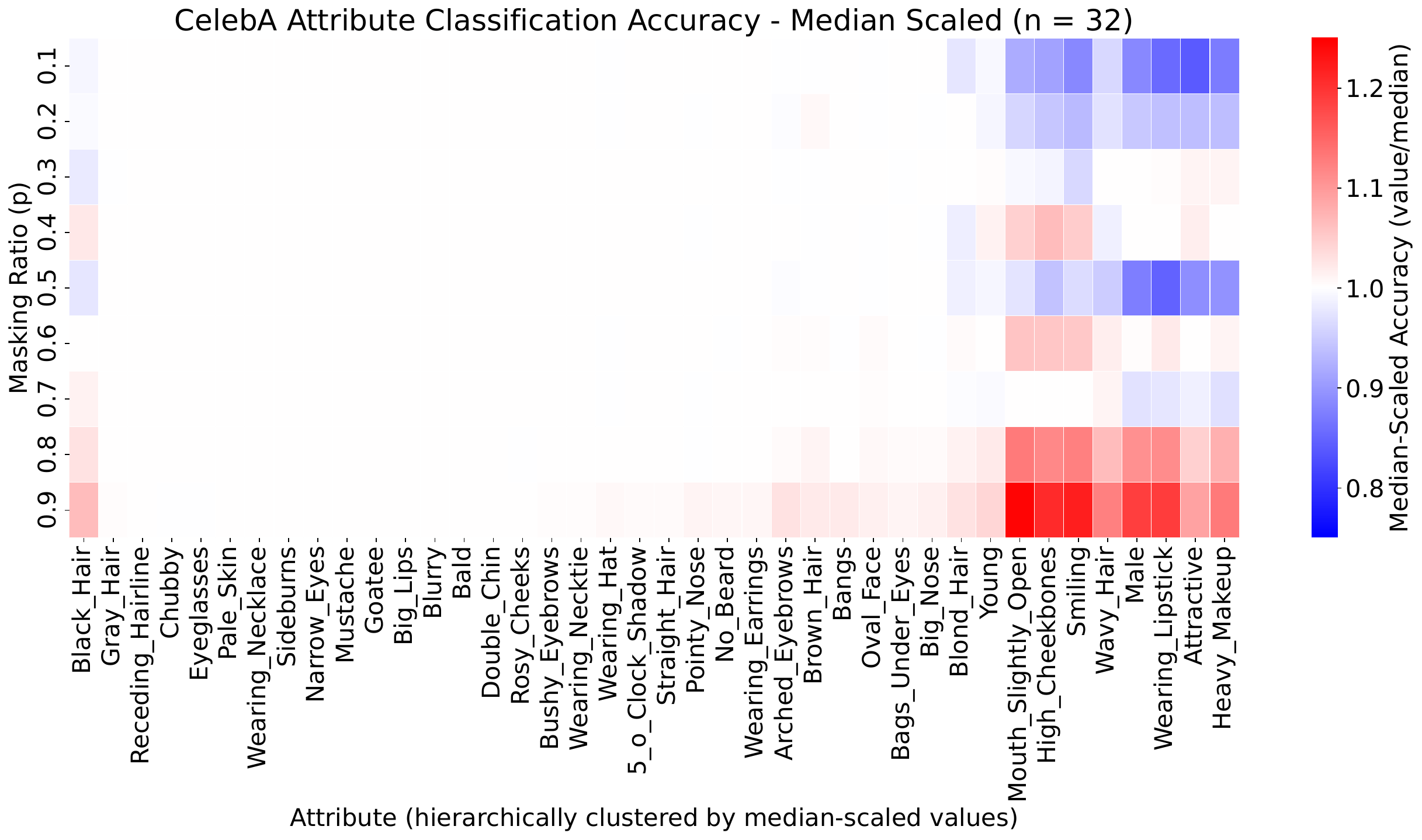}
	\caption{Median scaled accuracy of U-Net models (base channel = 32) on CeleBA classification tasks.}\label{fig:celeba_median_32}
\end{figure}

\begin{figure}[h]
	\centering
	\includegraphics[width=0.9\textwidth]{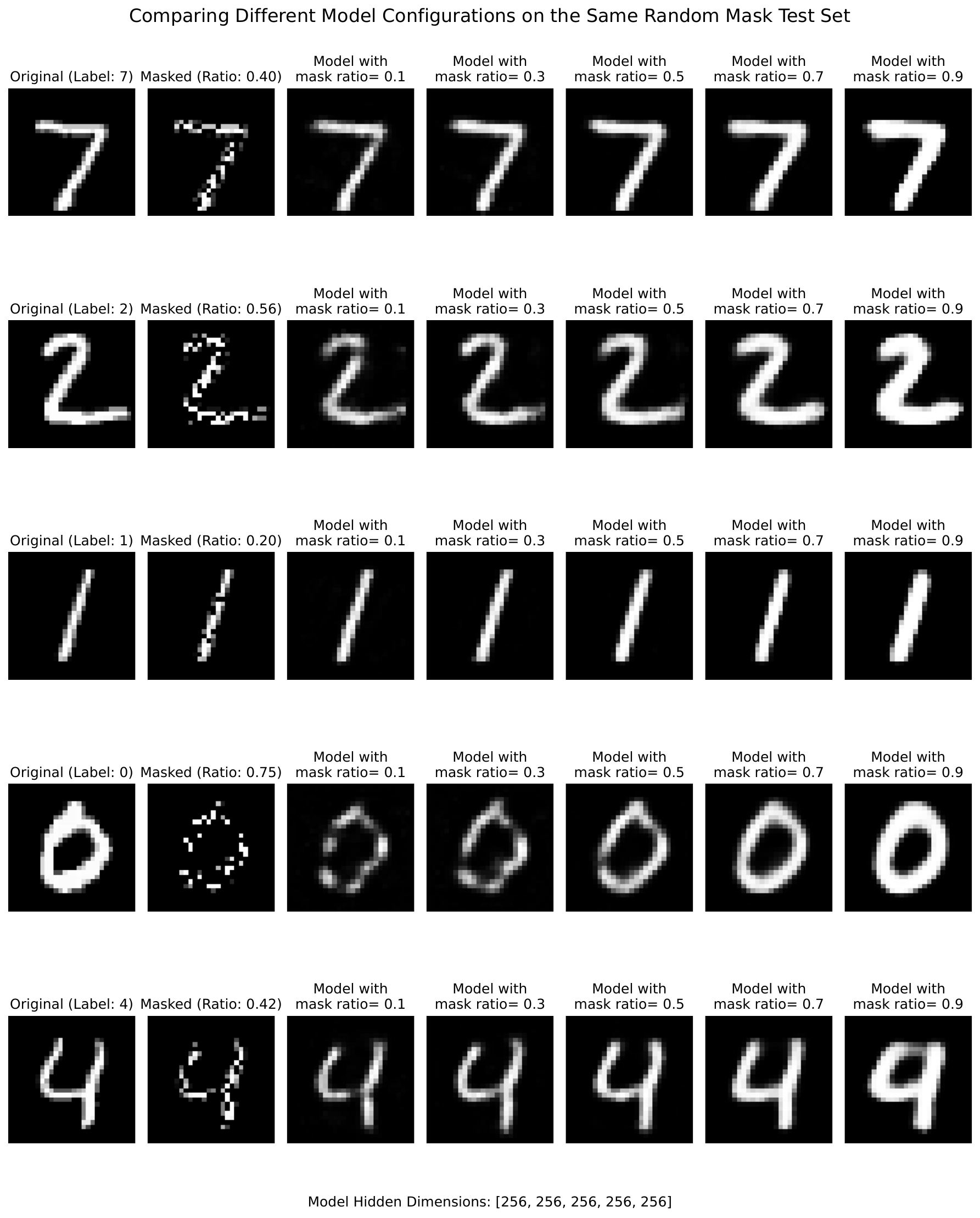}
	\caption{Comparison of overparametrized MLP reconstructions on MNIST data across different training mask ratios. Original digits (first column) and their masked versions (second column) are followed by reconstructions from models with identical architecture but varying mask ratios during training. The second layer hidden dim = 256 for all models.}\label{fig:mnist_test}
\end{figure}

\begin{figure}[h]
	\centering
	\includegraphics[width=0.9\textwidth]{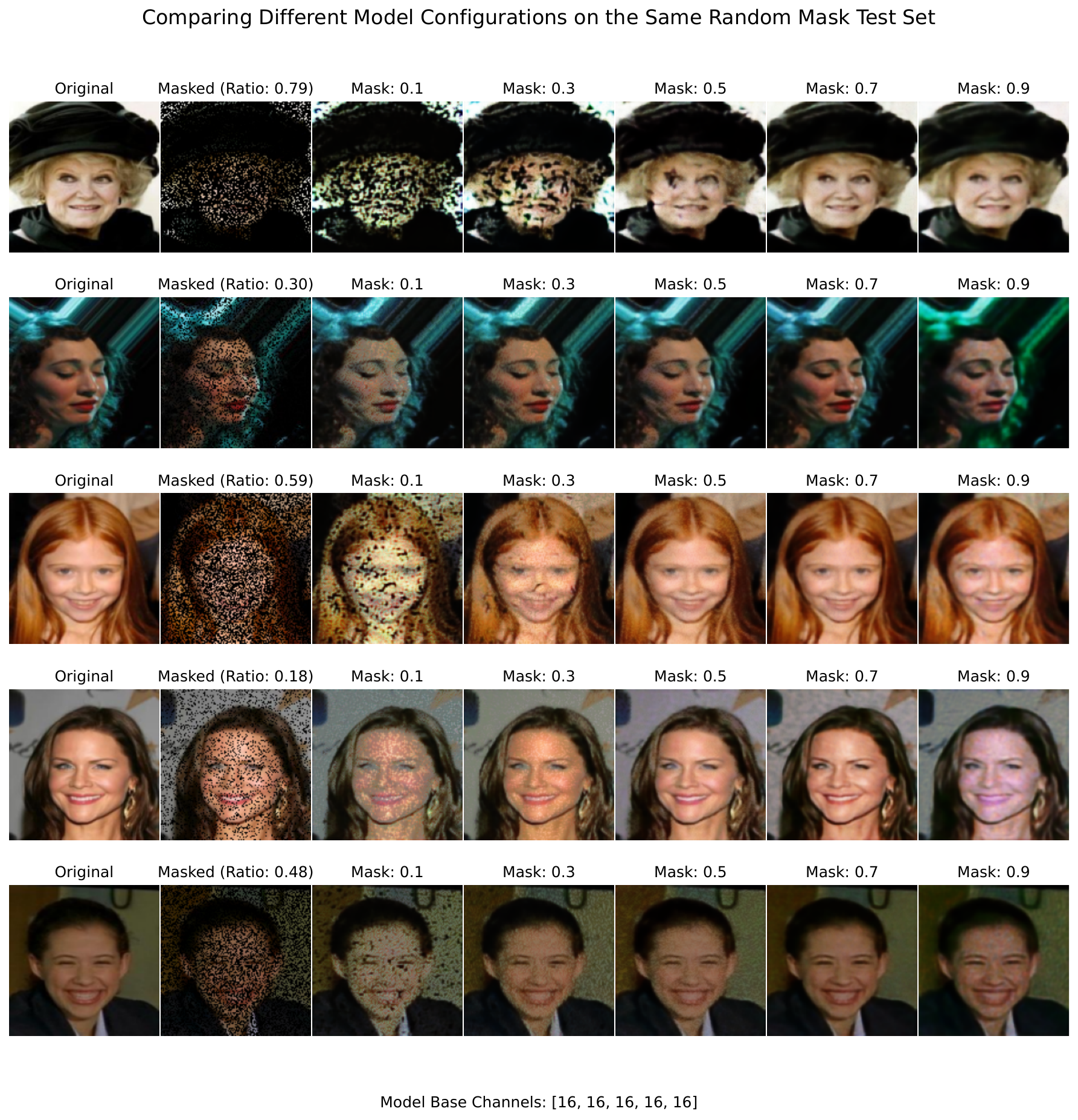}
	\caption{Comparison of U-net reconstructions on CeleBA across different training mask ratios. Original images (first column) and their masked versions (second column) are followed by reconstructions from models with identical architecture but varying mask ratios during training. All models have base channel = 16.}\label{fig:celeba_test}
\end{figure}
\clearpage
\section{Appendix Tables}\label{appendix_tables}
		\begin{table*}[h!]
		\caption{Comparison for different single-cell gene expression models trained on Human Lung Cell Atlas (HLCA). BAcc, Balanced Accuracy. For each specific task, pretraining scheme metrics outperforming optimal fixed masking ratio settings are labeled \textcolor{lightred}{red}. }
        \footnotesize
		\label{tb2}
		\centering
\begin{tabular}{lccccccccc}
\toprule
& \multicolumn{2}{c}{Cell state} & \multicolumn{2}{c}{Disease} & \multicolumn{2}{c}{Age Spearman $r$} & \multicolumn{2}{c}{Avg performance} \\
\cmidrule(lr){2-3}\cmidrule(lr){4-5}\cmidrule(lr){6-7}\cmidrule(lr){8-9}
Methods  & BAcc. & F1$_\text{macro}$ & BAcc. & F1$_\text{macro}$ & Cell & Donor & Score & Rank \\
\midrule
Normalized exp.            & 0.834 & 0.774 & 0.675 & 0.489 & 0.470 & 0.574 & 0.636 & 12.50 \\
scGPT (Lung)                & 0.813 & 0.717 & 0.624 & 0.401 & 0.429 & 0.523 & 0.584 & 15.67 \\
scGPT (All)                 & 0.834 & 0.711 & 0.629 & 0.403 & 0.438 & 0.500 & 0.586 & 15.00 \\
scGPT (CP)                  & 0.816 & 0.696 & 0.613 & 0.389 & 0.431 & 0.521 & 0.578 & 16.67 \\
UCE (4L)                    & 0.808 & 0.702 & 0.631 & 0.417 & 0.436 & 0.518 & 0.585 & 15.50 \\
UCE (33L)                   & 0.800 & 0.699 & 0.619 & 0.419 & 0.447 & 0.540 & 0.587 & 15.50 \\
scVI                        & 0.897 & 0.804 & 0.767 & 0.626 & 0.556 & 0.618 & 0.711 &  9.67 \\
\midrule
MAE (MR 25\%)               & 0.908 & 0.830 & 0.834 & 0.604 & 0.586 & 0.623 & 0.731 &  6.50 \\
\emph{-- MR 10\%}           & 0.915 & 0.802 & 0.851 & 0.635 & 0.582 & 0.609 & 0.732 &  5.67 \\
\emph{-- MR 50\%}           & 0.909 & 0.833 & 0.837 & 0.604 & 0.587 & 0.601 & 0.729 &  6.33 \\
MDLM               & 0.903 & 0.806 & 0.829 & 0.560 & 0.577 & 0.622 & 0.716 &  8.83 \\
Dynamic MR                  & \textcolor{lightred}{\textbf{0.919}} & 0.829 & 0.850 & \textcolor{lightred}{\textbf{0.651}} & 0.571 & 0.597 & 0.736 &  5.00 \\
CL-MAE                      & 0.907 & 0.825 & 0.843 & 0.635 & \textcolor{lightred}{0.589} & \textcolor{lightred}{0.648} & 0.741 &  4.50 \\
CL-MAE ($k=0$)              & 0.801 & 0.667 & 0.773 & 0.493 & 0.530 & 0.563 & 0.638 & 13.83 \\
\textbf{R$^2$MAE} (Ours)    & 0.915 & 0.812 & \textcolor{lightred}{\textbf{0.853}} & \textcolor{lightred}{\textbf{0.651}} & \textcolor{lightred}{0.595} & \textcolor{lightred}{0.641} & \textbf{0.744} & \textbf{2.83} \\
\emph{+ Dynamic MR}         & 0.914 & \textcolor{lightred}{\textbf{0.842}} & 0.835 & 0.597 & \textcolor{lightred}{\textbf{0.616}} & \textcolor{lightred}{\textbf{0.658}} & 0.744 &  4.17 \\
\emph{+ CL}                 & 0.911 & 0.817 & 0.837 & 0.618 & 0.572 & 0.619 & 0.729 &  6.67 \\
\emph{+ CL ($k=0$)}         & 0.911 & 0.805 & 0.840 & 0.630 & \textcolor{lightred}{0.590} & \textcolor{lightred}{0.646} & 0.737 &  5.00 \\
\bottomrule
\end{tabular}
	\end{table*}

		\begin{table*}[h!]
		\caption{Comparison of random masking reconstruction Pearson $r$ across different single-cell gene expression models trained on brain MTG SEA-AD dataset. MR, Masking Ratio. }
        \footnotesize
		\label{tb2}
		\centering
\begin{tabular}{lccccc}
\toprule
Methods & MR 10\% & MR 20\% & MR 30\% & MR 50\% & MR 70\%  \\
\midrule
scVI & 0.834 & 0.834 & 0.829 & 0.815 & 0.781  \\
MAE (MR 25\%) & 0.843 & \textbf{0.847} & \textbf{0.846} & 0.840 & 0.819   \\
MAE (MR 10\%) & 0.842 & 0.844 & 0.841 & 0.832 & 0.797    \\
MAE (MR 50\%) & 0.841 & 0.845 & 0.843 & \textbf{0.842} & \textbf{0.830}    \\
MDLM & 0.840 & 0.844 & 0.842 & 0.840 & 0.827    \\
Dynamic MR & 0.842 & 0.845 & 0.842 & 0.834 & 0.800   \\
CL-MAE & 0.841 & 0.844 & 0.845 & 0.838 & 0.817   \\
\textbf{R$^2$MAE} (Ours) & \textbf{0.846} & \textbf{0.847} & 0.845 & \textbf{0.842} & 0.826  \\
\emph{+ Dynamic MR} & 0.844 & \textbf{0.847} & \textbf{0.846} & 0.841 & 0.824    \\
\emph{+ CL} & 0.836 & 0.839 & 0.839 & 0.836 & 0.825   \\
\emph{+ CL ($k=0$)} & 0.844 & 0.846 & 0.845 & 0.840 & 0.823   \\
\bottomrule
\end{tabular}
	\end{table*}    

		\begin{table*}[h!]
		\caption{Comparison of random masking reconstruction Pearson $r$ across different single-cell gene expression models trained on Human Lung Cell Atlas (HLCA). MR, Masking Ratio.}
        \footnotesize
		\label{tb2}
		\centering
\begin{tabular}{lccccc}
\toprule
Methods & MR 10\% & MR 20\% & MR 30\% & MR 50\% & MR 70\%   \\
\midrule
scVI                         & 0.712 & 0.744 & 0.746 & 0.733 & 0.698   \\
MAE (MR 25\%)                & 0.743 & \textbf{0.774} & 0.780 & 0.777 & 0.747  \\
MAE (MR 10\%)                & 0.733 & 0.770 & 0.777 & 0.769 & 0.733   \\
MAE (MR 50\%)                & 0.744 & 0.770 & 0.779 & \textbf{0.781} & \textbf{0.760}    \\
MDLM             & 0.729 & 0.760 & 0.766 & 0.764 & 0.739    \\
Dynamic MR                   & 0.742 & 0.772 & 0.777 & 0.772 & 0.737   \\
CL-MAE                       & 0.736 & 0.768 & 0.776 & 0.776 & 0.756    \\
CL-MAE ($k=0$)             & 0.174 & 0.131 & 0.109 & 0.090 & 0.086    \\
\textbf{R$^{2}$MAE} (Ours)   & \textbf{0.746} & 0.773 & \textbf{0.782} & 0.778 & 0.752   \\
\emph{\,+ Dynamic MR}        & 0.743 & \textbf{0.774} & 0.781 & 0.779 & 0.753    \\
\emph{\,+ CL}                & 0.731 & 0.763 & 0.772 & 0.770 & 0.746   \\
\emph{\,+ CL ($k=0$)}      & 0.733 & 0.767 & 0.777 & 0.770 & 0.742   \\
\bottomrule
\end{tabular}
	\end{table*}

    \begin{table*}[h!]
\centering
\caption{Normalized test risk of R$^2$MAE (MR range 0.4-0.5) against optimal fixed MR and mean MR settings across different random seeds for Beta covariance and latent space models. The ground truth signal $\bbeta$ is set to be the 10th quantile eigenvector of covariance $\bSigma$ in all cases. $n=200$, $\gamma=5$.}
\label{tab:seed_comparison_risk}
\footnotesize
% Column format: l (seed) + cccc (Beta Model) + cccc (Latent Model)
\begin{tabular}{lc|cccc|ccc}
\toprule
& \multicolumn{4}{c}{Beta Covariance Model} & \multicolumn{4}{c}{Latent Space Model} \\
\cmidrule(lr){2-5} \cmidrule(lr){6-9}
Seed & Best MR & Min Risk & MR 45\% & R$^2$MAE & Best MR & Min Risk & MR 45\% & R$^2$MAE \\
\midrule
2  & 0.43 & 0.859 & 0.865 & \textbf{0.855} & 0.45 & 0.826 & 0.826 & \textbf{0.823} \\
12 & 0.43 & 0.863 & 0.865 & \textbf{0.859} & 0.42 & 0.832 & 0.837 & \textbf{0.830} \\
22 & 0.54 & \textbf{0.862} & 0.898 & 0.890 & 0.37 & 0.848 & 0.851 & \textbf{0.842} \\
32 & 0.43 & 0.817 & 0.822 & \textbf{0.814} & 0.33 & \textbf{0.852} & 0.865 & 0.859 \\
42 & 0.36 & \textbf{0.817} & 0.830 & 0.819 & 0.43 & 0.814 & 0.818 & \textbf{0.806} \\
\bottomrule
\end{tabular}
\end{table*}

\end{document}